\documentclass{article} %
\usepackage{iclr2024_conference,times}

\usepackage{amsmath,amsfonts,bm}

\def\eqref#1{equation~\ref{#1}}

\def\1{\bm{1}}

\def\eps{{\epsilon}}

\DeclareMathAlphabet{\mathsfit}{\encodingdefault}{\sfdefault}{m}{sl}
\SetMathAlphabet{\mathsfit}{bold}{\encodingdefault}{\sfdefault}{bx}{n}

\def\gB{{\mathcal{B}}}

\def\gD{{\mathcal{D}}}

\def\gG{{\mathcal{G}}}
\def\gH{{\mathcal{H}}}

\def\gK{{\mathcal{K}}}
\def\gL{{\mathcal{L}}}

\def\gO{{\mathcal{O}}}
\def\gP{{\mathcal{P}}}

\def\gR{{\mathcal{R}}}

\def\gU{{\mathcal{U}}}

\def\gX{{\mathcal{X}}}
\def\gY{{\mathcal{Y}}}
\def\gZ{{\mathcal{Z}}}

\def\sR{{\mathbb{R}}}

\newcommand{\E}{\mathbb{E}}

\usepackage{hyperref}
\usepackage{url}
\usepackage{cleveref}
\usepackage{wrapfig}
\usepackage{subcaption}

\crefname{figure}{Figure}{Figures}%
\crefname{section}{Sec.}{Sections}%
\crefname{appendix}{Appendix}{Appendix}%
\crefname{algorithm}{Algorithm}{Algorithms}%
\crefname{thm}{Theorem}{Theorems}%
\crefname{corr}{Corollary}{Corollaries}%
\crefname{lemma}{Lemma}{Lemmas}%
\crefname{assumption}{Assumption}{Assumptions}
\crefname{prop}{Proposition}{Propositions}%
\crefname{equation}{}{}%

\usepackage{algorithm}
\usepackage[noend]{algpseudocode}
\usepackage{amssymb}
\usepackage{amsmath}
\usepackage{amsthm}
\usepackage{thm-restate}
\usepackage{xspace}
\usepackage{bbm}
\usepackage{graphicx}

\theoremstyle{plain}
\newtheorem{thm}{Theorem}

\newtheorem{corr}{Corollary}
\newtheorem{lemma}{Lemma}
\newtheorem{assumption}{Assumption}
\newtheorem{prop}{Proposition}

\theoremstyle{remark}
\newtheorem{remark}{Remark}
\newcommand{\lin}{\text{\normalfont lin}}
\newcommand{\ntk}{\Theta}
\newcommand{\rkhs}{\gH_{\ntk_t}}
\newcommand{\soln}{u}
\newcommand{\nn}{\hat{u}_{\theta}}
\newcommand{\nnt}[1]{\hat{u}_{\theta_{#1}}}
\newcommand{\res}[2]{R_{\theta_{#1}}(#2)}
\newcommand{\dres}[2]{\Delta R_{\theta_{#1}}(#2)}

\newcommand{\reslin}[2]{R^\lin_{\theta_{#1}}(#2)}

\newcommand{\op}{F}

\newcommand{\ops}{\mathbf{F}}
\newcommand{\opsalt}{\tilde{\ops}}

\newcommand{\iop}{I}
\newcommand{\pde}{\mathcal{N}}
\newcommand{\bc}{\mathcal{B}}

\newcommand{\newpara}[1]{\textbf{#1}\hspace{1pt}}

\newcommand{\inpelt}{x}

\newcommand{\inpspace}{\gX}
\newcommand{\augspace}{\gZ}
\newcommand{\augelt}{z}
\newcommand{\augeltalt}{z'}
\newcommand{\augset}{\augpt{Z}{\ops}}
\newcommand{\augsetalt}{\augpt{Z'}{\opsalt}}
\newcommand{\augref}{\augset_\text{\normalfont ref}}
\newcommand{\augps}{\augset_\text{\normalfont pool}}
\newcommand{\augpt}[2]{{#1}}

\newcommand{\ls}{\mathrm{s}}
\newcommand{\lp}{\mathrm{p}}
\newcommand{\lb}{\mathrm{b}}
\newcommand{\lab}{\mathrm{r}}

\newcommand{\activeset}{S}

\newcommand{\alg}{\textsc{PINNACLE}\xspace}
\newcommand{\algs}{\textsc{PINNACLE-S}\xspace}
\newcommand{\algk}{\textsc{PINNACLE-K}\xspace}

\newcommand{\exppt}{\textsc{Exp}\xspace}
\newcommand{\colpt}{\textsc{CL}\xspace}
    
\title{PINNACLE: PINN Adaptive ColLocation and Experimental points selection}

\author{Gregory Kang Ruey Lau\thanks{Equal contribution.}$^{\hphantom{*} \dag \ddag}$, Apivich Hemachandra$^{* \dag}$, \\
$^\dag$Department of Computer Science, National University of Singapore, Singapore 117417 \\
$^\ddag$CNRS@CREATE, 1 Create Way, \#08-01 Create Tower, Singapore 138602 \\
\texttt{\{greglau,apivich\}@comp.nus.edu.sg} \\
\And
See-Kiong Ng \& Bryan Kian Hsiang Low\\ %
Department of Computer Science, National University of Singapore, Singapore 117417 \\
\texttt{seekiong@nus.edu.sg, lowkh@comp.nus.edu.sg}
}

\iclrfinalcopy %
\begin{document}

\maketitle

\begin{abstract}

Physics-Informed Neural Networks (PINNs), which incorporate PDEs as soft constraints, train with a composite loss function that contains multiple training point types: different types of \textit{collocation points} chosen during training to enforce each PDE and initial/boundary conditions, and \textit{experimental points} which are usually costly to obtain via experiments or simulations. Training PINNs using this loss function is challenging as it typically requires selecting large numbers of points of different types, each with different training dynamics. Unlike past works that focused on the selection of either collocation or experimental points, this work introduces \textsc{PINN Adaptive ColLocation and Experimental points selection} (\alg), the first algorithm that \emph{jointly optimizes the selection of all training point types, while automatically adjusting the proportion of collocation point types as training progresses}. \alg uses information on the interaction among training point types, which had not been considered before, based on an analysis of PINN training dynamics via the Neural Tangent Kernel (NTK). We theoretically show that the criterion used by \alg is related to the PINN generalization error, and empirically demonstrate that \alg is able to outperform existing point selection methods for forward, inverse, and transfer learning problems.

\end{abstract}

\section{Introduction}\label{sec1}

Deep learning (DL) successes in domains with massive datasets have led to questions on whether it can also be efficiently applied to the scientific domains. In these settings, while training data may be more limited, 
domain knowledge could compensate by serving as inductive biases for DL training. Such knowledge can take the form of governing Partial Differential Equations (PDEs), which can describe phenomena such as conservation laws or dynamic system evolution in areas such as fluid dynamics \citep{caiPhysicsInformedNeuralNetworks2021,chenPhysicsinformedLearningGoverning2021a,jagtapPhysicsinformedNeuralNetworks2022}, wave propagation and optics \citep{waheedPINNeikEikonalSolution2021,linTwostagePhysicsinformedNeural2022}, or epidemiology \citep{rodriguez2023einns}.

Physics-Informed Neural Networks (PINNs) are neural networks that incorporate PDEs and their initial/boundary conditions (IC/BCs) as soft constraints during training \citep{raissiPhysicsinformedNeuralNetworks2019}, and have been successfully applied to various problems. 
These include forward problems (i.e., predicting PDE solutions given specified PDEs and ICs/BCs) and inverse problems (i.e., learning unknown PDE parameters given experimental data).
However, the training of PINNs is challenging. 

To learn solutions that respect the PDE and IC/BC constraints, PINNs need a composite loss function that requires multiple training point types:
different types of \textit{collocation points} (\colpt points) chosen during training to enforce each PDE and IC/BCs, and \textit{experimental points} (\exppt points) obtained via experiments or simulations that are queries for ground truth solution values.
These points have different training dynamics, making training difficult especially in problems with complex structure. In practice, getting accurate results from PINNs also require large numbers of \colpt and \exppt points, both which lead to high costs -- the former leads to high computational costs during the training process \citep{choSeparablePhysicsInformedNeural2023, chiuCANPINNFastPhysicsinformed2022a}, while the latter is generally costly to acquire experimentally or simulate. Past works have tried to separately address these individually by considering an adaptive selection of \colpt points \citep{nabianEfficientTrainingPhysicsinformed2021,gaoActiveLearningBased2021,wuComprehensiveStudyNonadaptive2022,pengRANGResidualbasedAdaptive2022,zengAdaptiveDeepNeural2022,tangDASPINNsDeepAdaptive2023}, or \exppt points selection through traditional active learning methods \citep{jiangEPINNAssistedPractical2022,cuomoScientificMachineLearning2022a}. Some works have also proposed heuristics that adjust loss term weights of the various point types to try improve training dynamics, but do not consider point selection \citep{wangWhenWhyPINNs2022}. However, no work thus far has looked into optimizing all training point types jointly to significantly boost PINN performance.

Given that the solution spaces of the PDE, IC/BC and underlying output function are tightly coupled, it is inefficient and sub-optimal to select each type of training points separately and ignore cross information across point types. For different PINNs or even during different stages of training, some types of training points may also be less important than others.

In this paper, we introduce the algorithm \textsc{PINN Adaptive ColLocation and Experimental data selection} (\alg) that is the first to jointly optimize the selection of all types of training points (e.g., PDE and IC/BC \colpt and \exppt points) given a budget, \textit{making use of cross information among the various types of points} that past works had not considered to provide significant PINNs training improvements. 
Our contributions are summarized as follows:
\begin{itemize}
    \item We introduce the problem of \emph{joint point selection} for improving PINNs training (\cref{ssec:al-prob}), and propose a novel representation for the augmented input space for PINNs that enables all types of training points for PINNs to be analyzed \textit{simultaneously} (\cref{ssec:rep}).
    \item With this augmented input space, we analyze the PINNs training dynamics using the combined NTK eigenspectrum which naturally incorporates the cross information among the various point types encoded by the PDEs and IC/BC constraints 
    (\cref{ssec:eigspectrum}). 
    \item Based on this analysis, we define a new notion of convergence for PINNs (\textit{convergence degree}) (\cref{ssec:criterion}) that characterizes how much a candidate set involving multiple types of training points would help in the training convergence for the entire augmented space. We also theoretically show that selecting training points that maximizes the convergence degree leads to lower generalization error bound for PINNs (\cref{ssec:theory}).
    \item We present a computation method of the convergence degree using Nystrom approximation (\cref{ssec:convergence}), and two variants of \alg based on maximizing the convergence degree while also considering the evolution of the empirical NTK (eNTK) of the PINN (\cref{ssec:pinnacle}).
    \item We empirically illustrate how \alg's automated point selection across all point types are interpretable and similar to heuristics of past works, and demonstrate how \alg outperform benchmarks for a range of PDEs in various problem settings: forward problems, inverse problems, and 
    transfer learning of PINNs for perturbed ICs
    (\cref{sec:exp}).
    
\end{itemize}

\section{Background}\label{sec:methods}

\newpara{Physics-Informed Neural Networks.}\label{sec:methods:pinn}
Consider partial differential equations (PDEs) of the form
\begin{equation}\label{eq:pde}
    \pde[\soln, \beta](x) = f(x) \quad \forall x\in \inpspace \quad\quad \text{ and } \quad \quad \bc_i[\soln](x_i') = g_i(x_i') \quad \forall x_i'\in \partial\inpspace_i
\end{equation}
where $\soln(x)$ is the function of interest over a coordinate variable $x$ defined on a bounded domain $\inpspace \subset \mathbb{R}^d$ (where time could be a subcomponent), $\pde$ is a PDE operator acting on $\soln(x)$ with parameters\footnote{
To simplify notation, we write $\pde[\soln, \beta]$ as $\pde[\soln]$, keeping the $\beta$ dependence implicit, except for cases where the task is to learn $\beta$. Examples of PDEs, specifically those in our experiments, can be found in \cref{appx:pde-used}.} $\beta$, and $\bc_i[\soln]$ are initial/boundary conditions (IC/BCs)
for boundaries $\partial\inpspace_i \subset \inpspace$. 
For PINN training, we assume operators $\pde$ and $\bc$, and functions $f$ and $g$ are known, while $\beta$ may or may not be known.

Physics-Informed Neural Networks (PINNs) are neural networks (NN) $\nn$ that approximates $u(x)$. They are trained on a dataset $X$ that can be partitioned into $X_s$, $X_p$ and $X_b$ corresponding to the \exppt points, PDE \colpt points and IC/BC \colpt points respectively. The PDE and IC/BCs \cref{eq:pde} are added as regularization (typically mean-squared errors terms) in the loss function\footnote{For notational simplicity we will denote a single IC/BC $\gB$ in subsequent equations, however the results can easily be generalized to include multiple ICs/BCs as well.}
\begin{equation}\label{eq:loss}
\mathcal{L}(\nn; X) = 
\sum_{x \in X_s} \frac{\left( \nn(x) - u(x) \right)^2}{2N_s}
+\lambda_p\sum_{x \in X_p} \frac{ \left( \pde[\nn](x) - f(x) \right)^2}{2N_p}
+\lambda_b \sum_{x \in X_b} \frac{ \left( \bc[\nn](x) -g(x)\right)^2}{2N_b}
\end{equation}
where the components penalize the failure of $\nn$ in satisfying ground truth labels $u(x)$, the PDE, and the IC/BCs constraints respectively, and $\lambda_p$ and $\lambda_b$ are positive scalar weights.

\newpara {NTK of PINNs.}\label{sec:methods:ntk_pinn}
The Neural Tangent Kernel (NTK)
of PINNs can be expressed as a block matrix broken down into the various loss components in \cref{eq:loss} \citep{wangEigenvectorBiasFourier2021}.
To illustrate, a PINN trained with just the \exppt points $X_s$ and PDE \colpt points $X_p$ using gradient descent (GD) with learning rate $\eta$ has training dynamics that can be described by
\begin{equation}\label{eq:ntk_pinn}
    \begin{bmatrix}
    \nnt{t+1} (X_s) - \nnt{t} (X_s) \\
    \pde[\nnt{t+1}](X_p) - \pde[\nnt{t}](X_p)
    \end{bmatrix}
    =
    - \eta
    \begin{bmatrix}
    \ntk_{t,ss} & \ntk_{t,sp} \\
    \ntk_{t,ps} & \ntk_{t, pp}
    \end{bmatrix}
    \begin{bmatrix}
    \nnt{t}(X_s) - \soln(X_s) \\
    \pde[\nnt{t}](X_p) - f(X_p)
    \end{bmatrix}
\end{equation}
where 
$J_{t,s} = \nabla_\theta \nnt{t}(X_s)$, $J_{t,p} = \nabla_\theta \pde[\nnt{t}](X_p)$, and
$\Theta_{t,ss} = J_{t,s} J_{t,s}^\top$,
$\Theta_{t,pp} = J_{t,p} J_{t,p}^\top$, and
$\Theta_{t,sp} = \Theta_{t,ps}^\top = J_{t,p} J_{t,s}^\top$,
are the 
submatrices of the PINN empirical NTK (eNTK). 
Past works have analyzed PINNs training dynamics using the eNTK \citep{wangEigenvectorBiasFourier2021,wangWhenWhyPINNs2022,gaoGradientDescentFinds2023}. We provide additional background information on NTKs for general NNs in \cref{appx:ntk}.

\section{Point Selection Problem Setup}
\label{ssec:al-prob}

The goal of PINN training is to minimize the composite loss function \cref{eq:loss}, which comprises of separate loss terms corresponding to \exppt ($X_s\subset \inpspace$), PDE \colpt ($X_p\subset \inpspace$), and multiple IC/BC \colpt ($X_b\subset \partial \inpspace$) 
points\footnote{To compare with the active learning problem setting, \exppt points can be viewed as queries to the ground truth oracle, and \colpt points as queries to oracles that advise whether $\nn$ satisfies the PDE and IC/BC constraints.}. 
Instead of assuming that the training set is fixed and available without cost, we consider a more realistic problem setting where we have a limited training points budget.
The choice of training points then becomes important for good training performance. 

Hence, the problem is to select training sets $X_s$, $X_p$, and $X_b$ given a fixed training budget, to achieve the best PINN performance\footnote{The specific metric will depend on the specific setting: for forward problems it will be the overall loss for the function of interest $u(x)$, while for inverse problems it will be for the parameters of interest $\beta$}. Due to high acquisition cost (e.g., limited budget for conducting experiments), we consider a fixed \exppt training budget $|X_s|$. We also consider a combined training budget for the various \colpt point types $|X_p| + |X_b| = k$, which is limited due to computational cost at training. Note that the algorithm is allowed to freely allocate the \colpt budget among the PDE and various IC/BC point types during training, in contrast to other PINNs algorithms where the user needs to manually fix the number of training points for each type. Also, while the \exppt and \colpt points do not share a common budget, they can still be jointly optimized for better performance (i.e., \exppt points selection could still benefit from information from \colpt points and vice versa).

\section{NTK Spectrum and NN Convergence in the Augmented Space}

\subsection{Augmented Input Representation for PINNs}
\label{ssec:rep}

To analyze the interplay among the training dynamics of different point types, we define a new augmented space 
$\augspace \subset \inpspace \times \{\ls, \lp, \lb\}$, 
containing training points of all types, as
\begin{equation}
\label{eqn:augspace}
\augspace \triangleq \big\{ (x, \ls) : x\in \inpspace \big\} \cup \big\{ (x, \lp) : x\in \inpspace \big\} \cup \big\{ (x, \lb) : x\in \partial\inpspace \big\}
\end{equation}
where $\ls$, $\lp$ and $\lb$ are the indicators for the \exppt points, PDE \colpt points and IC/BC \colpt points respectively, to specify which type of training point $z$ is associated with. Note that a PDE \colpt point $z = (\inpelt, \lp) \in \augspace$ is distinct from an \exppt point $z= (x,\ls)$, even though both points have the same coordinate $x$.
We abuse notation by defining any function $h: \inpspace \to \sR$ to also be defined on $\augspace$ by $h(z) = h(\inpelt, \lab) \triangleq h(\inpelt)$ for all indicators $\lab \in \{\ls, \lp, \lb\}$, and also define a general prediction operator $\op$ that applies the appropriate operator depending on the indicator $\lab$, i.e.,
\begin{equation}
\label{eq:Fop}
\op[h](x, \ls) = h(x), \quad \op[h](x, \lp) = \pde[h](x), \quad \text{and} \quad \op[h](x, \lb) = \bc[h](x).
\end{equation}
The residual of the network output at any training point $\augelt$ can then be expressed as $\res{t}{\augelt}  = \op[\nnt{t}](\augelt) - \op[\soln](\augelt)$, and we can express the loss \cref{eq:loss}, given appropriately set $\lambda_r$ and $\lambda_b$, as
\begin{equation}
\label{eq:loss-new}
    \gL(\nn; \augset) 
    = \frac{1}{2} \sum_{\augelt \in \augset} \res{t}{\augelt} ^2
    = \frac{1}{2} \sum_{\augelt \in \augset} \big(\op[\nn](\augelt) - \op[\soln](\augelt)\big)^2 \ .
\end{equation}
Using the notations introduced above, the goal of our point selection algorithm would be to select a training set $\activeset$ from $\augspace$ which will be the most beneficial for PINNs training. As the point selection is done on $\augspace$ space, all types of training points can now be considered together, meaning the algorithm could automatically consider the cross information between each type of points and prioritize the budgets accordingly (explored further in \cref{ssec:exp-training}).

\subsection{NTK Eigenbasis and Training Dynamics Intuition}
\label{ssec:eigspectrum}

With this PINN augmented space, we can now consider different basis that naturally encode cross information on the interactions among the various point types during training.
For instance, the off-diagonal term $\ntk_{t,sp}$ in \cref{eq:ntk_pinn} encodes how the PDE residuals, through changes in NN parameters during GD, change the residuals of $\nn$. 
This is useful as in practice, we usually have limited \exppt data on $\nn$ but are able to choose PDE \colpt points more liberally.
Past works studying PINN eNTKs \citep{wangEigenvectorBiasFourier2021,wangWhenWhyPINNs2022} had only used the diagonal eNTK blocks ($\ntk_{t,ss}$, $\ntk_{t, pp}$) in their methods.

\begin{wrapfigure}{h}{0.45\columnwidth}
\centering
\resizebox{\linewidth}{!}{
\includegraphics[height=3cm]{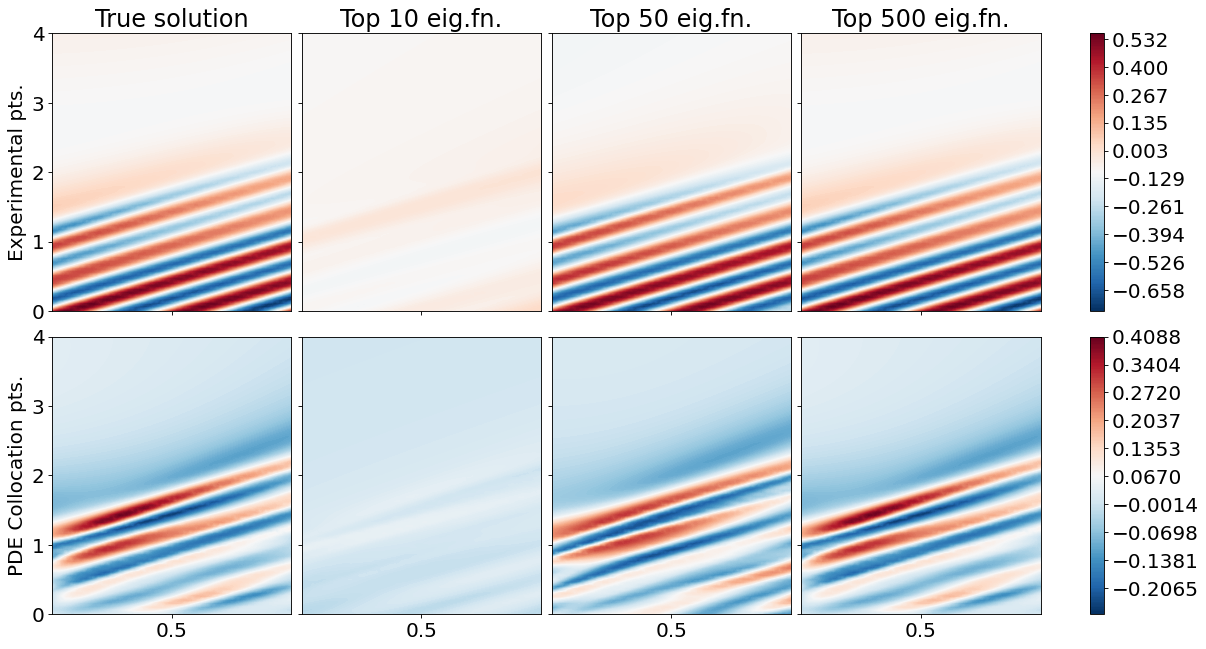}
}
\caption{Reconstruction of the PINN prediction values $F[\nn](z)$, after 10k GD training steps, in $(x, \ls)$ and $(x,\lp)$ subspaces of the 1D Advection equation, using the dominant eNTK eigenfunctions (i.e., those with the largest eigenvalues).}
\label{fig:eigfn-soln}
\vspace{-12pt}
\end{wrapfigure}

Specifically, we define the eNTK in the augmented space\footnote{For compactness, we will overload the notations with $\res{t}{\augset} $ and $\ntk_t(\augset, \augsetalt)$ for cases of multiple inputs, and use $\ntk_t(\augset)$ to refer to $\ntk_t(\augset, \augset)$.} as $\ntk_t(\augelt, \augeltalt) = \nabla_\theta \op[\nnt{t}](\augelt) \ \nabla_\theta \op[\nnt{t}](\augeltalt)^\top$.
Since $\ntk_t$ is a continuous, positive semi-definite kernel, by Mercer's theorem \citep{scholkopfLearningKernelsSupport2002} we can decompose $\ntk_t$ to its eigenfunctions
$\psi_{t,i}$ and corresponding eigenvalues $\lambda_{t,i}\geq0$ (which are formally defined in \cref{appx:rkhs}) as
\begin{equation}\label{eq:eigK}
    \ntk_t(\augelt, \augeltalt) = \sum_{i=1}^\infty \lambda_{t,i}~ \psi_{t,i}(\augelt) \psi_{t,i}(\augeltalt)\ .
\end{equation}
Notice that since the eigenfunctions depend on the entire eNTK matrix, including its off-diagonal terms, they naturally capture the cross information between each type of training point. Empirically, we find that the PINN eNTK eigenvalues falls quickly, allowing the dominant eigenfunctions (i.e., those with the largest eigenvalues) to effectively form a basis for the NN output in augmented space. This is consistent with results from past eNTK works for vanilla NNs \citep{kopitkovNeuralSpectrumAlignment2020,vakiliUniformGeneralizationBounds2021}.
\cref{fig:eigfn-soln} shows how the PINN prediction values $F[\nn](z)$ in $(x, \ls)$ and $(x,\lp)$ subspaces can be reconstructed by just the top dominant eNTK eigenfunctions. Further discussion and more eNTK eigenfunction plots are in \cref{appx:eigspect-discuss}.

The change in residual at a point $z \in \augspace$, given training set $\augset$ for a GD step with small enough $\eta$, is
\begin{equation}\label{eq:convergence_vec}
    \dres{t}{\augelt; \augset}  \triangleq \res{t+1}{\augelt; \augset} - \res{t}{\augelt; \augset} \approx - \eta \sum_{i=1}^\infty \lambda_{t,i} \underbrace{ \langle \psi_{t,i}(\mathit{\augset}),\res{t}{\augset}  \rangle_{\gH_\ntk}}_{a_{t, i}(\augset)} \psi_{t,i}(\augelt) %
\end{equation} 
Note that $\dres{t}{\augelt; \augset}$ describes how the residual \emph{at any point $z$} (of any point type, and including points outside $\augset$) is influenced by GD training on any training set $\augset$ that can consist of \textit{multiple} point types. Hence, $\dres{t}{\augelt; \augset}$ describes the overall training dynamics for all point types.

We can gain useful insights on PINN training dynamics by considering the augmented eNTK eigenbasis. 
First, we show that the \emph{residual components that align to the more dominant eigenfunctions will decay faster}. 
During a training period where the eNTK remains constant, i.e., $\ntk_t(z, z') \approx \ntk_0(z, z')$, the PINN residuals will evolve according to
\begin{equation}\label{eq:convergence_theory}
\res{t}{z;Z} 
\approx \res{0}{z} - \ntk_0(z, Z)  \sum_{i=1}^{\infty} \frac{1 - e^{-\eta t\lambda_{0,i}}}{\lambda_{0,i}} \ a_{0, i}(\augset) \ \psi_{0,i}(Z) \ .
\end{equation}
We provide a proof sketch of \cref{eq:convergence_theory} in \cref{appx:eigspect-discuss}. From \cref{eq:convergence_theory}, we can see that the residual component that is aligned to the eigenfunction $\psi_{t,i}$ (given by $a_{t, i}(\augset)$) will decay as training time $t$ progresses at a rate proportional to $\eta \lambda_{0,i}$. For training set points $z \in Z$, the decay is exponential as \cref{eq:convergence_theory} further reduces to $\sum_{i=1}^{\infty} e^{-\eta t\lambda_{0,i}}\ a_{t, i}(\augset) \ \psi_{0,i}(Z)$. Interestingly, note that \cref{eq:convergence_theory} applies to all $z \in \augspace$, suggesting that choosing training point set $Z$ with residual components more aligned to dominant eigenfunctions also results in lower residuals for the entire augmented space and faster overall convergence. 

\begin{figure}[t]
\centering
\begin{subfigure}{0.45\textwidth}
\centering
\caption{After 10k steps of GD}
\includegraphics[width=.95\linewidth]{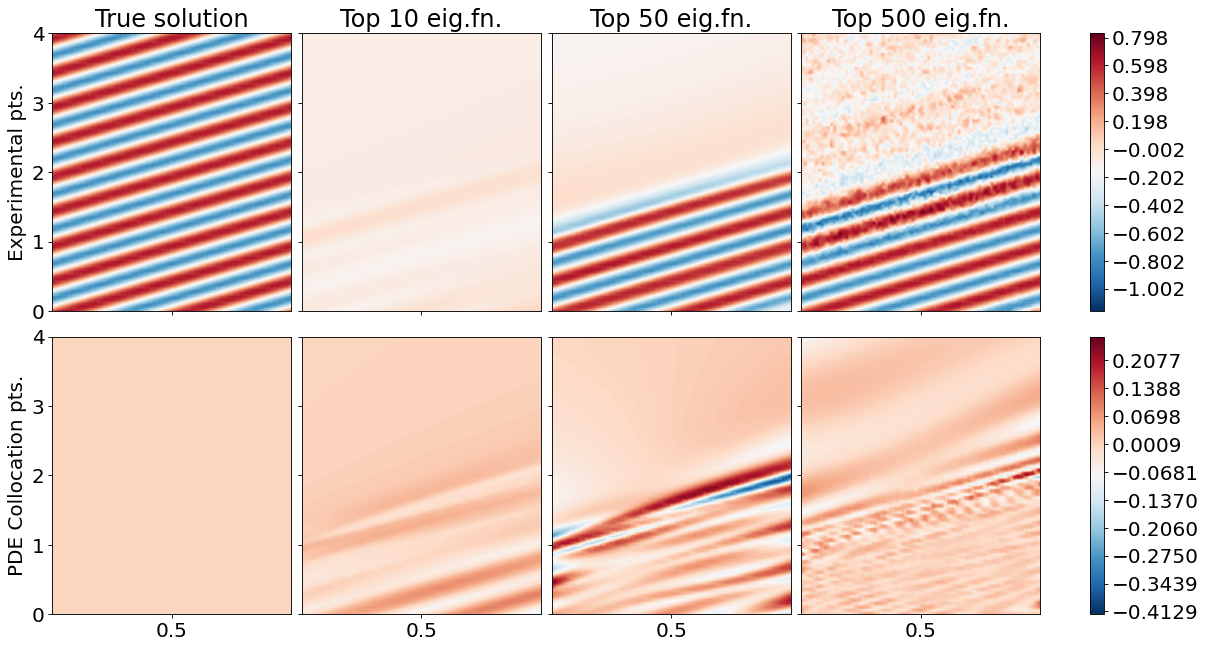}
\end{subfigure}
\begin{subfigure}{0.45\textwidth}
\centering
\caption{After 100k steps of GD}
\includegraphics[width=.95\linewidth]{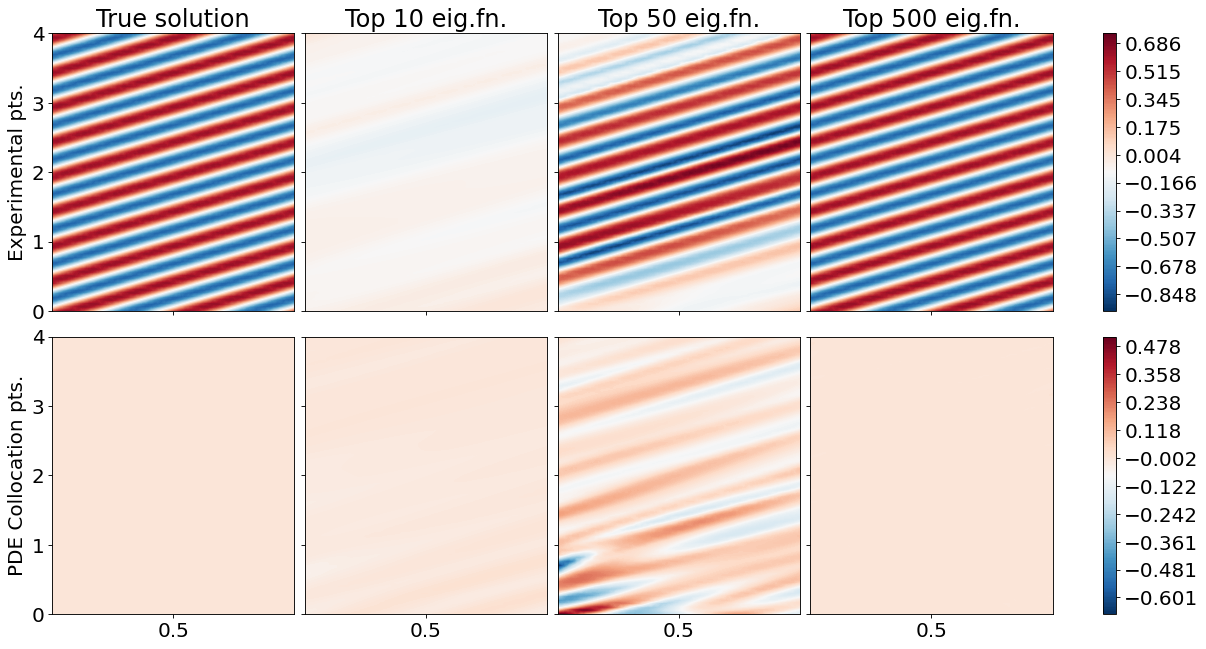}
\end{subfigure}
\caption{Reconstruction of the true PDE solution of the 1D Advection equation using eigenfunctions of $\ntk_t$ at different GD timesteps. For each time step, we plot the eigenfunction component both for $(x, \ls)$ subspace (top row) and $(x, \lp)$ subspace (bottom row).}
\label{fig:eigspect}
\end{figure}

Second, during PINN training, \emph{the eNTK eigenfunctions will gradually align to the true target solution}. While many theoretical analysis of NTKs assumes a fixed kernel during training, this does not hold in practice \citep{kopitkovNeuralSpectrumAlignment2020}. \cref{fig:eigspect} shows that the dominant eigenfunctions can better reconstruct the true PDE solution as training progresses, indicating that the eNTK evolves during training to further align to the target solution. Notice how the predictions on \emph{both subspaces} needed to be considered for PINNs, which had not been explored before in past works. While the top 10 eigenfunctions replicated the $(x,\lp)$-subspace prediction better, more eigenfunctions were needed to reconstruct the full augmented space prediction (both $(x,\ls)$ and $(x,\lp)$ subspaces).

This suggests that PINN training consists of two concurrent processes of 1) \textit{information propagation from the residuals} to the network parameters mainly along the direction of the dominant eNTK eigenfunctions, and 2) \textit{eNTK eigenfunction evolution} to better represent the true PDE solution.
In \cref{ssec:pinnacle}, we propose a point selection algorithm that accounts for \textit{both} processes by selecting points that would cause the largest residual decrease
to speed up Process 1, %
and re-calibrate the selected training points by re-computing the eNTK to accommodate Process 2.

\subsection{Convergence Degree Criterion and Relations to Generalization Bounds}
\label{ssec:criterion}

The change in residual $\dres{t}{\augelt; \augset}$ \cref{eq:convergence_vec} only captures the convergence at a single test point $\augelt$. While it is possible to consider the average change in residual w.r.t. some validation set, the quality of this criterion would depend on the chosen set. Instead, to capture the convergence of the whole domain, we consider the function $\dres{t}{\cdot; \augset}$ in the reproducing kernel Hilbert Space (RKHS) $\rkhs$ of $\ntk_t$, with its inner product $\langle\cdot,\cdot\rangle_{\rkhs}$ defined such that $\langle\psi_{t,i},\psi_{t,j}\rangle_{\rkhs} = \delta_{ij} / \lambda_{t,i}$. Ignoring factors of $\eta$, we define the \textit{convergence degree} $\alpha(\augset)$ to be the RKHS norm of $\dres{t}{\ \cdot\ ; \augset}$, or
\begin{equation}
\label{eq:convergence}
\alpha(\augset) \triangleq  \|\dres{t}{\ \cdot\ ; \augset}\|_{\rkhs}^2 = \sum_{i=1}^\infty {\lambda_{t,i}^{-1}} {\langle \dres{t}{\ \cdot\ ; \augset}, \psi_{t, i} \rangle_{\rkhs}^2}\ 
= \sum_{i=1}^\infty \lambda_{t,i} a_{t, i}(Z)^2.
\end{equation}
Selecting training points that maximizes the convergence degree could speed up training convergence, which corresponds to Process 1 of the PINN training process. Hence, our point selection algorithm will be centered around choosing a training set $\augset$ that maximizes $\alpha(\augset)$.

\newpara{Relations to Generalization Bounds.}
\label{ssec:theory}
We now theoretically motivate our criterion further by showing that training points that maximizes \cref{eq:convergence} will lead to lower generalization bounds.

\begin{thm}[Generalization Bound, Informal Version of \cref{thm:generalisation_pinn}]
\label{thm:generalisation}
Consider a PDE of the form \cref{eq:pde}. Let $\augspace = [0, 1]^d \times \{ \ls, \lp \}$, and $\activeset \subset \augspace$ be a i.i.d. sample of size $N_S$ from a distribution $\gD_\activeset$. Let $\nn$ be a NN which is trained on $\activeset$ by GD, with a small enough learning rate, until convergence. 
Then, there exists constants $c_1, c_2, c_3 = \gO\big( \text{\normalfont poly}(1/N_S,  \lambda_{\text{\normalfont max}}(\Theta_0(\activeset)) / \lambda_{\text{\normalfont min}}(\Theta_0(\activeset)) \big)$ such that with high probability over the random model initialization,
\begin{equation}
\label{eqn:thm}
\E_{\inpelt \sim \inpspace}[|\nnt{\infty}(\inpelt) - \soln(\inpelt)|] \leq c_1\|\res{\infty}{\activeset}\|_1 + c_2 \alpha(\activeset)^{-1/2} 
+ c_3. %
\end{equation}
\end{thm}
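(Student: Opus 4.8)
The plan is to decompose the target generalization error into three pieces that track the three terms on the right-hand side of \cref{eqn:thm}, and then bound each piece separately. First I would write, for any $\inpelt \in \inpspace$,
\begin{equation}
\label{eqn:plan-decomp}
|\nnt{\infty}(\inpelt) - \soln(\inpelt)| \leq |\nnt{\infty}(\inpelt) - \hat{u}^{\lin}_{\theta_\infty}(\inpelt)| + |\hat{u}^{\lin}_{\theta_\infty}(\inpelt) - \soln(\inpelt)|,
\end{equation}
where $\hat{u}^{\lin}$ is the linearized (NTK) model around initialization; the first term is controlled by standard lazy-training estimates (it is small in the wide-network / small-learning-rate regime, contributing to $c_3$), so the work is in the second term. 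For the linearized model trained by GD until convergence, the residual on the augmented space has the closed form arising from \cref{eq:convergence_theory} in the $t \to \infty$ limit, and the out-of-sample residual $\op[\hat{u}^{\lin}_{\theta_\infty}](\augelt) - \op[\soln](\augelt)$ at a generic point $\augelt$ can be written in terms of $\ntk_0(\augelt, \activeset)$, $\ntk_0(\activeset)^{-1}$, and the initial residual on $\activeset$. I would then split this into the contribution that survives at the training points themselves — which is exactly $\|\res{\infty}{\activeset}\|_1$ up to the conditioning constants — and the ``interpolation error'' contribution governed by how well $\ntk_0(\augelt, \cdot)$ is captured by the span of $\{\ntk_0(z_i, \cdot)\}_{z_i \in \activeset}$.

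The central step is to relate this interpolation error to $\alpha(\activeset)^{-1/2}$. Here I would use the RKHS picture from \cref{ssec:criterion}: expand everything in the eigenbasis $\{\psi_{t,i}\}$, recall that $\alpha(\activeset) = \sum_i \lambda_{t,i} a_{t,i}(\activeset)^2$ with $a_{t,i}(\activeset) = \langle \psi_{t,i}(\activeset), \res{t}{\activeset}\rangle_{\rkhs}$, and argue that a large convergence degree forces the residual on $\activeset$ to have large components along the dominant eigenfunctions, which — because those same eigenfunctions are precisely the ones that reconstruct $\op[\soln]$ on the \emph{whole} augmented space (the observation motivating \cref{fig:eigspect}) — controls the residual everywhere. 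Quantitatively I expect an inequality of the shape ``projection of $\op[\soln]$ onto the non-captured subspace $\lesssim \alpha(\activeset)^{-1/2}$ times RKHS-norm factors,'' obtained by a Cauchy--Schwarz / Rayleigh-quotient argument over the eigendecomposition, with the $\lambda_{\max}/\lambda_{\min}$ ratio entering through the change of basis between the empirical Gram matrix $\ntk_0(\activeset)$ and the Mercer eigenbasis. Finally, to pass from the i.i.d. sample bound at fixed points to the expectation $\E_{\inpelt \sim \inpspace}[\cdot]$, I would invoke a uniform concentration / Rademacher-complexity argument over the RKHS ball (as in the cited uniform generalization bounds for eNTKs), which is what produces the $\mathrm{poly}(1/N_S)$ dependence in $c_1, c_2, c_3$ and the ``with high probability over initialization'' qualifier.

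The main obstacle is making the middle step rigorous: the heuristic ``aligning to dominant eigenfunctions $\Rightarrow$ small residual on all of $\augspace$'' relies on the eNTK being (approximately) constant over the training window \emph{and} on its top eigenspace already containing $\op[\soln]$, neither of which is exactly true. I would handle the first by restricting to the lazy regime (absorbing the drift into $c_3$, citing the kernel-stability results underlying \cref{eq:convergence_theory}) and the second by stating the bound relative to the tail mass of $\op[\soln]$ outside the captured subspace, then noting this tail is small under the empirically-observed fast eigenvalue decay — so the clean three-term form of \cref{eqn:thm} holds once the spectral-decay hypothesis is folded into the constants. Getting the constants to genuinely be $\gO(\mathrm{poly}(1/N_S, \lambda_{\max}/\lambda_{\min}))$ rather than carrying hidden dimension- or depth-dependence is the delicate bookkeeping I would expect to consume most of the formal proof; I would defer the detailed tracking to the appendix version \cref{thm:generalisation_pinn}.
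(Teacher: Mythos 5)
There is a genuine gap at the heart of your plan: the step that converts ``alignment of the residual with dominant eNTK eigenfunctions'' into a bound of the form $c_2\,\alpha(\activeset)^{-1/2}$ on the out-of-sample error. As you yourself note, this requires that the top eigenspace of $\ntk_0$ (approximately) contain $\op[\soln]$, and your fallback --- stating the bound relative to the tail mass of $\op[\soln]$ outside the captured subspace --- would leave an extra approximation term in the final inequality that is \emph{not} of the form $\gO\big(\text{poly}(1/N_S, \lambda_{\text{max}}(\ntk_0(\activeset))/\lambda_{\text{min}}(\ntk_0(\activeset)))\big)$, so the theorem as stated would not follow. The paper avoids this entirely because $\alpha(\activeset)$ enters through a \emph{capacity} argument, not an interpolation/approximation argument: in the linearized regime one has $\alpha(\activeset) = \res{t}{\activeset}^\top \ntk_0(\activeset)\res{t}{\activeset}$ and $\|\theta_\infty - \theta_0\| = \sqrt{\res{0}{\activeset}^\top \ntk_0(\activeset)^{-1}\res{0}{\activeset}}$, and the elementary inequality $(y^\top A y)(y^\top A^{-1}y) \le \|y\|^4\,\lambda_{\text{max}}(A)/\lambda_{\text{min}}(A)$ turns a large convergence degree into a small bound on the parameter movement. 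That parameter bound then controls the empirical Rademacher complexity of the linearized class (via Cauchy--Schwarz and $\|\nabla_\theta \op[\nnt{0}]\|\le\rho$), linearization error is absorbed into $c_3$ as $\gO(1/\sqrt{n_L})$ terms using the PDE stability constant, and a standard Rademacher generalization bound produces the three-term structure with $\|\res{\infty}{\activeset}\|_1$ appearing simply as the empirical training error --- not, as in your split, as the ``surviving'' kernel-regression contribution at the training points.

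A second, smaller omission: the expectation in the statement is over $\inpelt \sim \inpspace$ (uniform over the domain), while $\activeset$ is sampled from an arbitrary $\gD_\activeset$. A Rademacher argument only gets you to the population risk under $\gD_\activeset$; the paper needs an explicit change-of-measure step (the assumption that the density of $\gD_\activeset$ is bounded below by $1/2$ outside a set of Lebesgue measure $s$, contributing $+2s$ to $c_3$) to pass to the uniform expectation. Your proposal silently conflates these two distributions, and without some such density condition the claimed bound cannot hold for an adversarially concentrated $\gD_\activeset$.
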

\cref{thm:generalisation} is proven in \cref{appx:gen-bound}. Our derivation extends the generalization bounds presented by \citet{shuUnifyingBoostingGradientBased2022}, and takes into account the PINN architecture (which is captured by the eNTK) and the gradient-based training method, unlike past works \citep{mishraEstimatesGeneralizationError2021,deryckGenericBoundsApproximation2022}. The term $\|\res{\infty}{\activeset}\|_1$ is the training error at convergence of the training set, and can be arbitrarily small \citep{gaoGradientDescentFinds2023} which we also observe empirically. 
This leaves the generalization error to roughly be minimized when $\alpha(\augset)$ is maximized. This suggests that training points that maximizes the convergence degree should also lead to lower generalization error.

\section{Proposed Method}
\label{sec:method}

\subsection{Nystrom approximation of convergence degree}
\label{ssec:convergence}
In this section, we propose a practical point selection algorithm based on the criterion presented in \cref{eq:convergence}. During point selection, rather than considering the whole domain $\augspace$, we sample a finite pool $\augps \subset \augspace$ to act as candidate points for our selection process. In practice, we are unable to compute the eigenfunctions of the eNTK exactly. Hence, we estimate the eNTK eigenfunctions and corresponding eigenvalues $\psi_{t,i}$ and $\lambda_{t,i}$ using the Nystrom approximation \citep{williamsUsingNystromMethod2000,scholkopfLearningKernelsSupport2002}. Specifically, given a reference set $\augref$ of size $p$, and the eigenvectors $\tilde{v}_{t,i}$ and eigenvalues $\tilde{\lambda}_{t,i}$ of $\ntk_t(\augref)$, for each $i = 1, \ldots, p$, we can approximate $\psi_{t,i}$ and $\lambda_{t,i}$ by
\begin{equation}\label{eq:nystrom}
    \psi_{t,i} \approx \hat{\psi}_{t,i}(\augelt) \triangleq {\sqrt{|\augref|}}\ {\tilde{\lambda}_{t,i}}^{-1}\ \ntk_t(\augelt,\augref) \tilde{v}_{t,i} \quad \text{and} \quad 
    \lambda_{t,i} \approx \hat{\lambda}_{t,i} \triangleq {|\augref|}^{-1} \tilde{\lambda}_{t,i} \ .
\end{equation}
In practice, selecting $p \ll |\augps|$ is sufficient since NTK eigenvalues typically decay rapidly \citep{leeWideNeuralNetworks2019,wangWhenWhyPINNs2022}.
Given $\hat{\psi}_{t,i}$ and $\hat{\lambda}_{t,i}$, we can approximate $a_{t,i}(\augset)$ \cref{eq:convergence_vec} as
\begin{equation}\label{eqn:convergence-embed}
a_{t,i}(\augset)
\approx \hat{a}_{t,i}(\augset) \triangleq \langle \hat{\psi}_{t,i}(\augset),\res{t}{\augset}  \rangle
= \sqrt{|\augref|}\ \tilde{\lambda}_{t,i}^{-1} \ \tilde{v}^\top_{t,i}\ntk_t(\augref,\augset)\res{t}{\augset}
\end{equation}
and subsequently approximate the convergence degree \cref{eq:convergence} as
\begin{equation}\label{eqn:convergence-embed-approx}
\alpha(\augset) \approx \hat{\alpha}(\augset) 
\triangleq  \sum_{i=1}^p {\hat{\lambda}_{t,i}} \ \hat{a}_{t,i}(\augset)^2
= \sum_{i=1}^p \tilde{\lambda}_{t,i}^{-1} \big( \tilde{v}^\top_{t,i}\ntk_t(\augref,\augset)\res{t}{\augset}\big)^2.
\end{equation}
In \cref{lemma:approx-error}, we show that the approximate convergence degree \cref{eqn:convergence-embed-approx} provides a sufficiently good estimate for the true convergence degree. 
\begin{prop}[Criterion Approximation Bound, Informal Version of \cref{corr:nystrom-good}]
\label{lemma:approx-error}
Consider $\alpha$ and $\hat{\alpha}$ as defined in \cref{eq:convergence} and \cref{eqn:convergence-embed-approx} respectively. Let $N_\text{\normalfont pool}$ be the size of set $\augps$. Then, there exists a sufficiently large $p_{\text{\normalfont min}}$ such that
if a set $\augref$ of size $p \geq p_{\text{\normalfont min}}$ is sampled uniformly randomly from $\augps$, then with high probability, for any $\augset \subseteq \augps$,
it is the case that 
\begin{equation}
|\alpha(\augset) - \hat{\alpha}(\augset)| \leq \gO(N_\text{\normalfont pool} / p) \cdot \|\res{t}{\augset}\|_2^2\ .
\end{equation}
\end{prop}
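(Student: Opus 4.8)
The plan is to control the error by relating both $\alpha$ and $\hat\alpha$ to the Nystrom approximation error of the kernel operator restricted to the candidate pool. First I would reformulate the quantities in finite-dimensional terms: since every candidate set $\augset$ is a subset of the finite pool $\augps$, the relevant eigenfunctions need only be evaluated on $\augps$, so the true convergence degree $\alpha(\augset) = \sum_i \lambda_{t,i} a_{t,i}(\augset)^2$ can be written via the kernel Gram matrix $\ntk_t(\augps)$ and the residual vector $\res{t}{\augps}$ (with zeros outside $\augset$), while $\hat\alpha(\augset)$ is the analogous expression built from the rank-$p$ Nystrom factorization induced by $\augref$. Concretely, I would write $\alpha(\augset) = \res{t}{\augset}^\top \ntk_t(\augset) \res{t}{\augset}$-type expressions — the convergence degree is, up to reindexing, a quadratic form in the residual with the kernel matrix (or its Nystrom surrogate $\hat\ntk_t = \ntk_t(\cdot,\augref)\ntk_t(\augref)^{-1}\ntk_t(\augref,\cdot)$) as the middle matrix. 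So the difference $|\alpha(\augset) - \hat\alpha(\augset)|$ is bounded by $\|\ntk_t(\augps) - \hat\ntk_t(\augps)\|_{\mathrm{op}} \cdot \|\res{t}{\augset}\|_2^2$.

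The second step is to invoke a standard Nystrom approximation guarantee: when $\augref$ of size $p$ is sampled uniformly at random from $\augps$ (size $N_\text{pool}$), the operator-norm error $\|\ntk_t(\augps) - \hat\ntk_t(\augps)\|_{\mathrm{op}}$ is, with high probability, bounded by a quantity that scales like $N_\text{pool}/p$ times the relevant spectral tail / trace of $\ntk_t(\augps)$ (see e.g. the column-sampling bounds of \citet{williamsUsingNystromMethod2000} or concentration results for sampled Gram submatrices). Absorbing the bounded kernel trace into the constant, this yields exactly the claimed $\gO(N_\text{pool}/p)$ factor. The threshold $p_\text{min}$ arises from requiring $p$ large enough that the high-probability concentration event holds and that $\ntk_t(\augref)$ is invertible (well-conditioned) enough for the Nystrom surrogate to be well-defined; this is where one pays attention to the eigenvalue decay of the eNTK, which the paper has already argued is rapid.

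The key point that makes the bound uniform over all $\augset \subseteq \augps$ — rather than needing a union bound over exponentially many subsets — is that the operator-norm bound on $\ntk_t(\augps) - \hat\ntk_t(\augps)$ is a single event depending only on the draw of $\augref$, and the subset-dependence is entirely carried by the benign factor $\|\res{t}{\augset}\|_2^2 \le \|\res{t}{\augps}\|_2^2$. So once the kernel-approximation event holds, the inequality holds simultaneously for every $\augset$. I would state this reduction first, prove the operator-norm bound (or cite it), and then the uniformity is immediate.

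The main obstacle I anticipate is making the passage from the infinite Mercer series in \cref{eq:convergence} to the finite quadratic form rigorous — in particular, checking that the Nystrom-approximated eigenpairs $(\hat\psi_{t,i}, \hat\lambda_{t,i})$ assembled into $\hat\alpha$ genuinely correspond to the eigendecomposition of the surrogate kernel matrix $\hat\ntk_t$ restricted to $\augps$, so that both $\alpha$ and $\hat\alpha$ are quadratic forms in the \emph{same} residual vector with two comparable matrices. Once that algebraic identification is in place, the rest is a matter-of-fact application of matrix perturbation ($|\vx^\top (\mA - \mB)\vx| \le \|\mA - \mB\|_{\mathrm{op}}\|\vx\|_2^2$) combined with the off-the-shelf Nystrom concentration bound; handling the potential near-singularity of $\ntk_t(\augref)$ (e.g. via a ridge-regularized pseudo-inverse, or restricting to the top-$p$ eigenspace) is a secondary technical wrinkle I would address when fixing $p_\text{min}$.
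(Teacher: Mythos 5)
Your proposal matches the paper's own argument essentially step for step: both rewrite $\alpha(\augset) = \res{t}{\augset}^\top \ntk_t(\augset)\res{t}{\augset}$ and $\hat{\alpha}(\augset) = \res{t}{\augset}^\top \ntk_t(\augset,\augref)\ntk_t(\augref)^{-1}\ntk_t(\augref,\augset)\res{t}{\augset}$ as quadratic forms in the same residual vector, bound their gap by the spectral-norm Nystrom error times $\|\res{t}{\augset}\|_2^2$, invoke a standard uniform column-sampling Nystrom bound (the paper uses the coherence-based bound of Gittens, giving $\lambda_{k+1}(\ntk(\augps))(1+N_{\text{pool}}/\varepsilon p)$ once $p$ exceeds a coherence-dependent threshold), and obtain uniformity over all $\augset \subseteq \augps$ from the single high-probability event on the full pool matrix, since the subset error matrix is a submatrix (equivalently, your zero-padded residual) whose spectral norm is dominated by the full one. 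The argument is correct and no genuinely different ideas are involved.
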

\cref{lemma:approx-error} is formally stated and proven in \cref{appx:approx-error} using Nystrom approximation bounds.

\newcommand{\refline}[1]{Line~\ref{#1}}
\begin{wrapfigure}{R}{0.55\textwidth}
\vspace{-6pt}
\begin{minipage}{0.55\textwidth}
\begin{algorithm}[H]
\caption{\alg}\label{algo:pinnacle}
\begin{algorithmic}[1]
\State \textbf{Input:} PINN $\nn$,  
learning rate $\eta$, number of iterations $T$, eNTK appproximation error $\delta$.
\Repeat
\State{// Point selection phase}
\State\label{alg:subset}{Randomly sample candidates $\augps$ from $\augspace$}
\State{Compute $\ntk_t$ using Nystrom approximation}
\State\label{alg:select}\parbox[t]{0.9\linewidth}{Select subset $\augset \subset \augps$ to fit constraint using \textsc{Sampling} or \textsc{K-Means++}\strut}
\State{// Training phase}
\State\label{alg:entk-init} {Compute $\Bar{\ntk} = \ntk_t(\augps)$}  %
\For{$t' = t, \ldots, t+T$} 
    \State\label{alg:gd} $\theta_{t'+1} \gets \theta_{t'} - \eta \nabla_\theta \mathcal{L}(\nnt{t'}; \augset)$ %
    \State\label{alg:entk-check} {Exit training if $\|\Bar{\ntk} - \ntk_{t'}(\augps)\| \geq \delta \|\Bar{\ntk}\|$} %
\EndFor
\Until{training converges or budget exhausted}
\end{algorithmic}
\end{algorithm}
\end{minipage}
\vspace{-12pt}
\end{wrapfigure}

\subsection{\alg algorithm}\label{ssec:pinnacle}

Given the theory-inspired criterion \cref{eqn:convergence-embed-approx}, we now introduce the \alg algorithm (\cref{algo:pinnacle}). The algorithm consists of two alternating phases: the point selection phase, and the training phase. 
In the point selection phase, we generate a random pool of candidate points $\augps$ (\refline{alg:subset}), then aim to select a subset $\augset \subset \augps$ which fits our constraints and maximizes $\hat{\alpha}(\augset)$ (\refline{alg:select}). However, this optimization problem is difficult to solve exactly. Instead, we propose two approaches to approximate the optimal set for $\hat{\alpha}$ (additional details are in \cref{appx:alg-desc}).

\textbf{\textsc{Sampling} (S).} We sample a batch of points from $\augps$ where the probability of selecting a point $\augelt \in \augps$ is proportional to $\hat{\alpha}(\augelt)$. This method will select points that individually have a high convergence degree, while promoting some diversity based on the randomization process. %

\textbf{\textsc{K-Means++} (K).} We represent each point with a vector embedding $\augelt \mapsto \big( \hat{\lambda}_{t,i}^{1/2} \hat{a}_{t,i}(z) \big)_{i=1}^p$, and perform K-Means++ initialization on these embedded vectors to select a batch of points. Similar to previous batch-mode active learning algorithms \citep{ashDeepBatchActive2020}, this method select points with high convergence degrees while also discouraging selection of points that are similar to each other.

In the training phase, we perform model training with GD (\refline{alg:gd}) for a fixed number of iterations or until the eNTK has change too much 
(\refline{alg:entk-check}). Inspired by \citet{leeWideNeuralNetworks2019}, we quantify this change based on the Frobenius norm of the difference of the eNTK since the start of the phase. This is to account for the eNTK evolution during GD (Process 2 in \cref{ssec:eigspectrum}). The point selection phase will then repeat with a re-computed eNTK.
In practice, rather than replacing the entire training set during the new point selection phase, we will randomly retain a portion of the old set and fill the remaining budget with new points selected based on the recomputed eNTK. We discuss these mechanisms further in \cref{appx:mem}.

\section{Experimental Results and Discussion}\label{sec:exp}

\subsection{Experimental Setup}

\newpara{Dataset.} For comparability with other works, we conducted experiments with open-sourced data \citep{takamotoPDEBENCHExtensiveBenchmark2023a,luDeepXDEDeepLearning2021}, and experimental setups that matches past work \citep{raissiPhysicsinformedNeuralNetworks2019}. The specific PDEs studied and details are in \cref{appx:pde-used}.

\newpara{NN Architecture and Implementation.}
Similar to other PINNs work, we used fully-connected NNs of varying NN depth and width with tanh activation, both with and without LAAF \citep{jagtapLocallyAdaptiveActivation2020}. We adapted the \textsc{DeepXDE} framework \citep{luDeepXDEDeepLearning2021} for training PINNs and handling input data, by implementing it in \textsc{Jax} \citep{jax2018github} for more efficient computation. 

\newpara{Algorithms Benchmarked.}
We benchmarked \alg with a suite of non-adaptive and adaptive \colpt point selection methods. For our main results, apart from a baseline that randomly selects points, we compare \alg with the \textsc{Hammersley} sequence sampler and RAD algorithm, the best performing non-adaptive and adaptive PDE \colpt point selection method based on \citet{wuComprehensiveStudyNonadaptive2022}. We also extended RAD, named \textsc{RAD-All}, where IC/BC points are also adaptively sampled in the same way as the PDE \colpt points. We provide further details and results in \cref{appx:exp-algs}.

\subsection{Impact of Cross Information on Point Selection}
\label{ssec:exp-training}

\begin{figure}[t]
\centering
\begin{subfigure}{0.48\textwidth}
\centering
\caption{1D Advection problem}
\label{fig:dyn-adv}
\includegraphics[width=.95\linewidth]{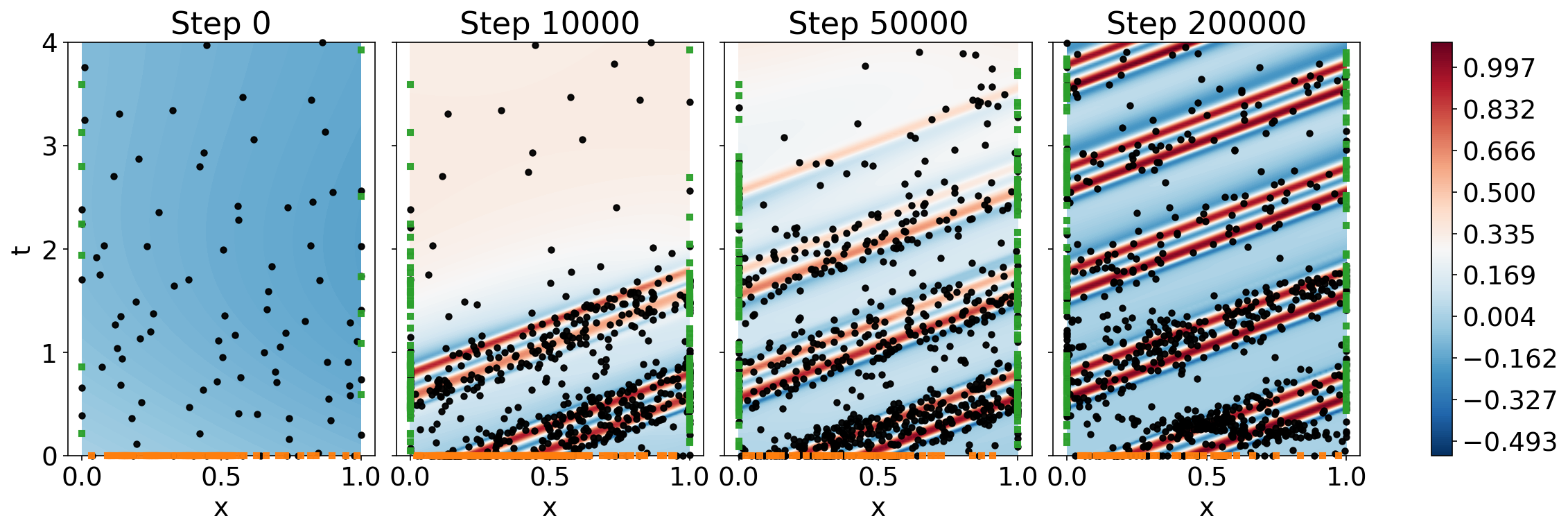}
\end{subfigure}
\begin{subfigure}{0.48\textwidth}
\centering
\caption{1D Burger's problem}
\label{fig:dyn-bur}
\includegraphics[width=.95\linewidth]{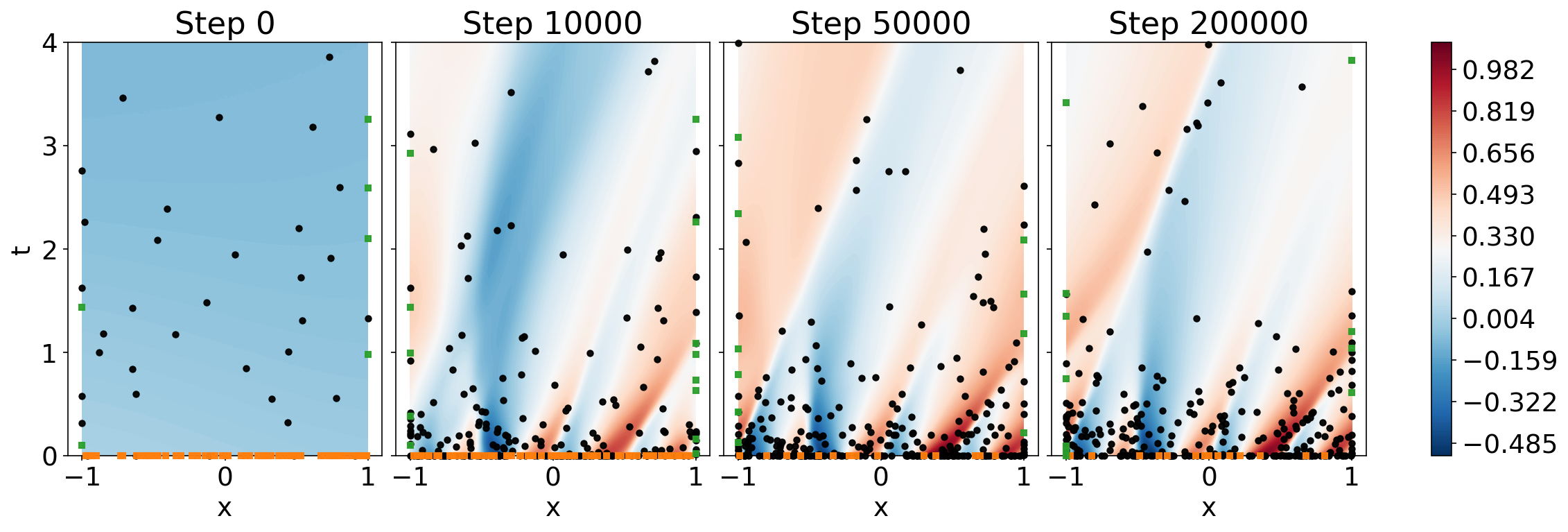}
\end{subfigure}
\includegraphics[width=.4\linewidth]{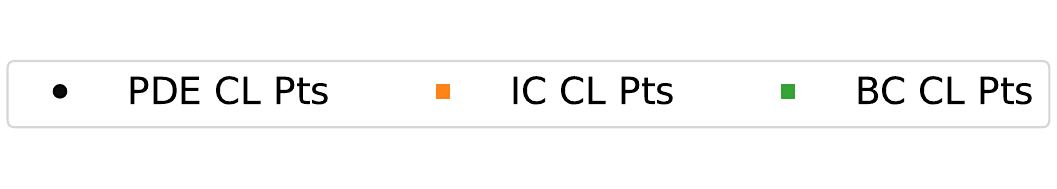}
\caption{Example of points selected by \algk during training of various forward problems. The points represent the various types of CL points selected by \algk, whereas the patterns in the background represents the obtained NN solution.}
\label{fig:pinnacle-eig}
\end{figure}

We first show how \alg's automated point selection across all point types are surprisingly interpretable and similar to heuristics of past works that required manual tuning. To do so, we consider two time-dependent PDEs: (1) 1D Advection, which describes the motion of particles through a fluid with a given wave speed, and (2) 1D Burgers, a non-linear PDE for fluid dynamics combining effects of wave motion and diffusion. 
We experiment on the forward problem, whose goal is to estimate the PDE solution $\soln$ given the PDEs with known $\beta$ and IC/BCs but no \exppt data.

\cref{fig:pinnacle-eig} plots the training points of all types selected by \alg-K overlaid on the NN output $\nn$, at various training stages. We can make two interesting observations.
First, \alg automatically adjusts the proportion of PDE, BC and IC points, starting with more IC CL points and gradually shifting towards more PDE (and BC for Advection) CL points as training progresses. Intuitively, this helps as ICs provide more informative constraints for $\nn$ in earlier training stages, while later on PDE points become more important for this information to ``propagate" through the domain interior and satisfy the PDEs.   

Second, the specific choice of points are surprisingly interpretable. For the Advection equation experiment (shown in \cref{fig:dyn-adv}), we can see that \alg focuses on PDE and BC CL points closer to the initial boundary and ``propagates" the concentration of points upwards, which helps the NN to learn the solution closer to the initial boundary before moving to later time steps. This may be seen as an automatic attempt to perform causal learning to construct the PDE solution \citep{wangRespectingCausalityAll2022,krishnapriyanCharacterizingPossibleFailure2021}, where earlier temporal regions are prioritized in training based on proposed heuristics. The selected points also cluster around more informative regions, such as the stripes in the Advection solution. Similar observations can be made for Burger's equation (shown in \cref{fig:dyn-bur}), where points focus around regions with richer features near the IC and sharper features higher in the plot. These behaviors translate to better training performance (\cref{sec:exp_results}).

\subsection{Experimental Results}\label{sec:exp_results}

\begin{figure}[t]
\centering
\begin{subfigure}{\textwidth}
\centering
\caption{1D Advection Equation, 200k steps}
\label{fig:forward-example-adv}
\resizebox{\textwidth}{!}{
\includegraphics[height=2.9cm]{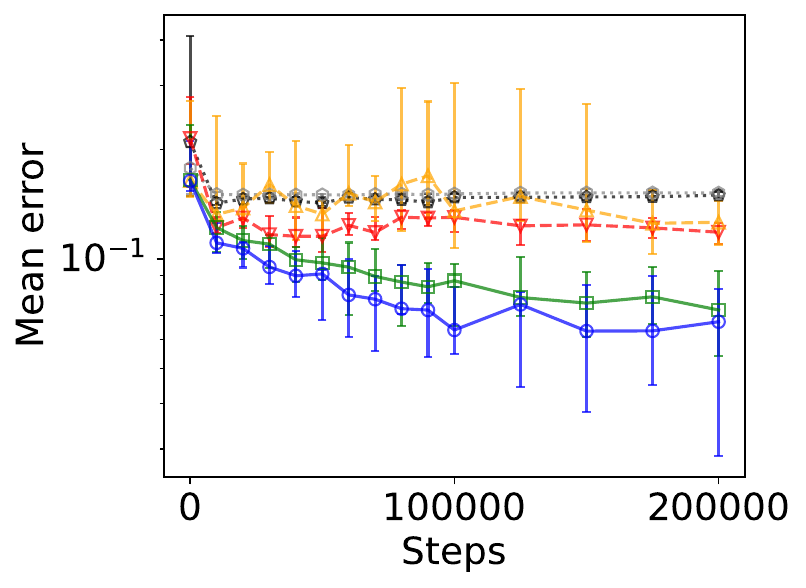}
\includegraphics[height=3.1cm]{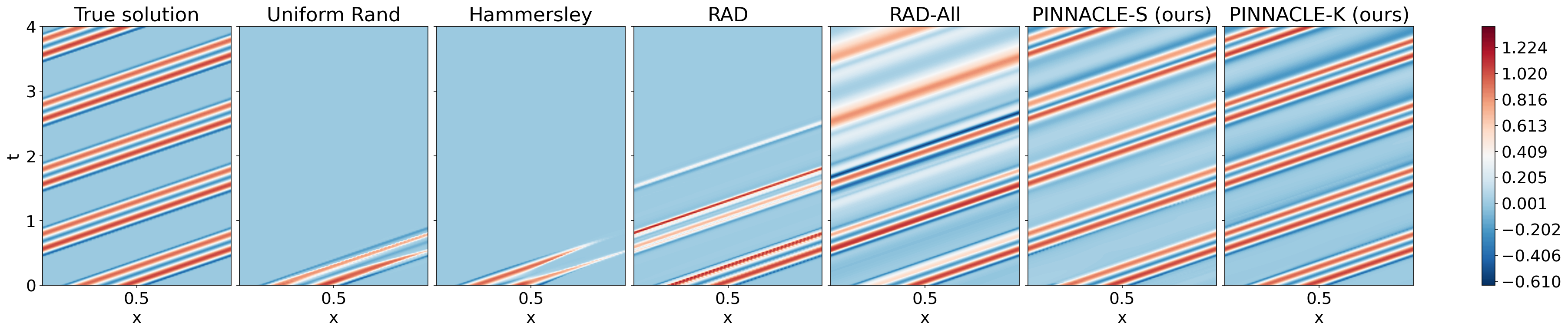}
}
\end{subfigure}
\\
\begin{subfigure}{\textwidth}
\centering
\caption{1D Burger's Equation, 200k steps}
\label{fig:forward-example-burger}
\resizebox{\textwidth}{!}{
\includegraphics[height=2.9cm]{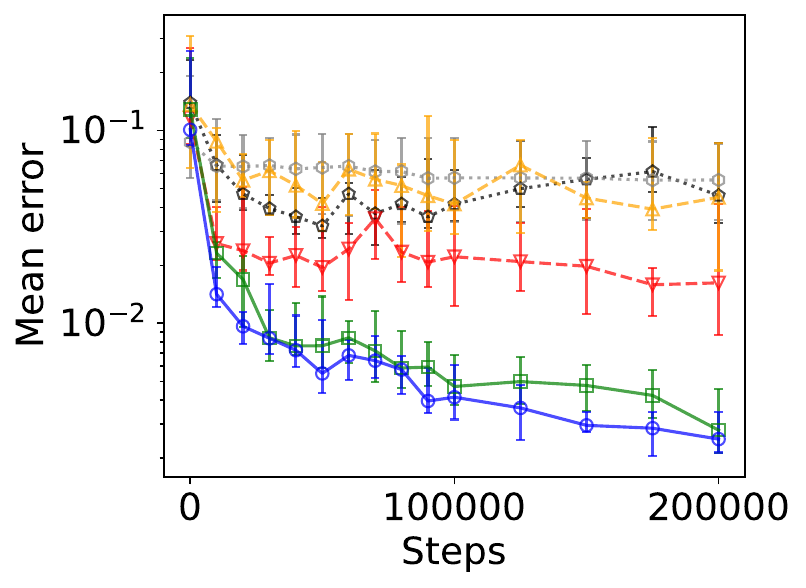}
\includegraphics[height=3.1cm]{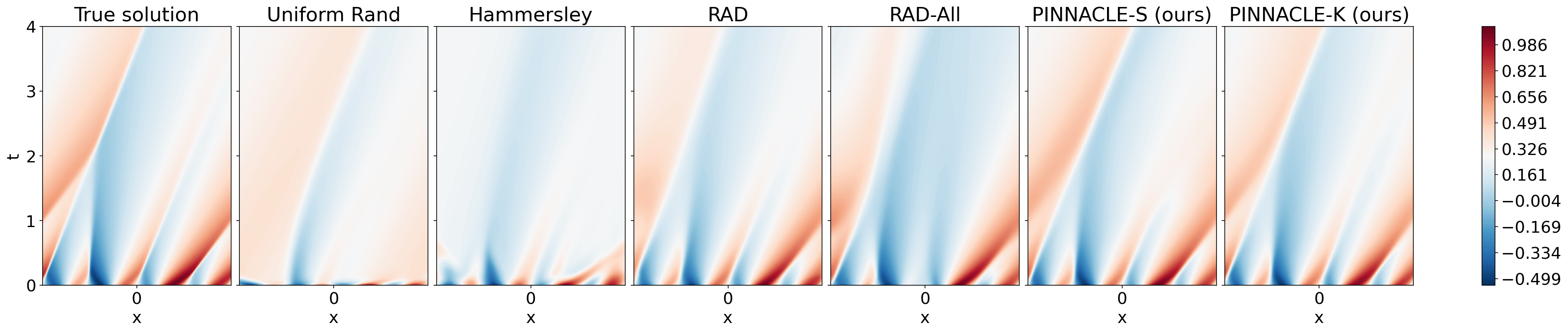}
}
\end{subfigure}
\includegraphics[width=0.7\linewidth]{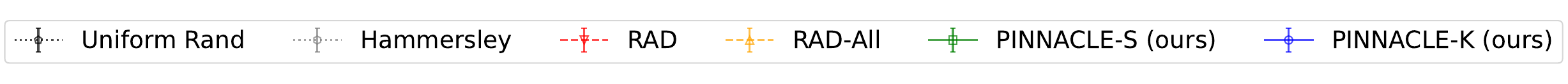}
\caption{Results from the forward problem experiments. In each row, from left to right: plot of mean prediction error for each method, the true PDE solution, and the PINN output from different point selection methods.}
\label{fig:forward-example}
\end{figure}

\newpara{Forward Problems.} 
We present the results for the 1D Advection (\cref{fig:forward-example-adv}) and 1D Burger's (\cref{fig:forward-example-burger}) forward problem experiments. Quantitatively, both variants of \alg are able to achieve a lower predictive loss than other algorithms. The performance gap is also large enough to be seen qualitatively, as \alg variants learn significantly better solutions. This is especially evident in 1D Advection (\cref{fig:forward-example-adv}), where \alg learn all stripes of the original solution while the benchmarks got stuck at only the first few closest to the ICs. 
Additional forward problem experiments and results can be found in \cref{ssec:appx-forward-prob}. For example, for a different problem setting where we provide more training points to other algorithms to achieve the same loss, we found that \alg is able to reach the target loss faster.
We also found that \alg outperforms other point selection methods even in experiments where only one \colpt point type needs to be selected (e.g., hard-constrained PINNs), when no cross information among different point types is present.

\newpara{Inverse problem.}
In the inverse problem, we estimate the unknown parameters of the PDE $\beta$, given the form of the PDE, IC/BC conditions, and some \exppt points. %
We consider the 2D time-dependent incompressible Navier-Stokes equation, where we aim to find two unknown parameters in the PDE based on the queried \exppt data. This is a more challenging problem consisting of coupled PDEs involving 3 outputs each with 2+1 (time) dimensions.
\cref{fig:inv-ns}
shows that \alg can recover the correct inverse parameters, while the other algorithms had difficulties doing so.
Further results for the inverse problem can be found in \cref{appx:exp-inv}. For example, we demonstrate that its quicker convergence also allow \alg to achieve a shorter computation time compared to other algorithms which are given a larger training budget to achieve comparable loss. 
We also showed how \alg outperforms benchmarks for other complex settings, such as the inverse Eikonal equation problem where the unknown parameter is a 3D field rather than just scalar quantities.

\begin{wrapfigure}{t}{0.5\columnwidth}
\centering
\resizebox{\linewidth}{!}{
\includegraphics[height=3cm]{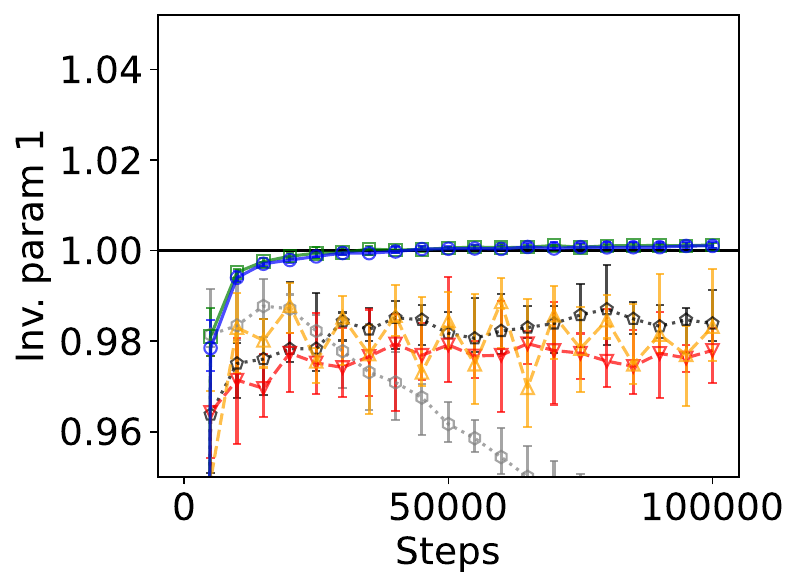}
\includegraphics[height=3cm]{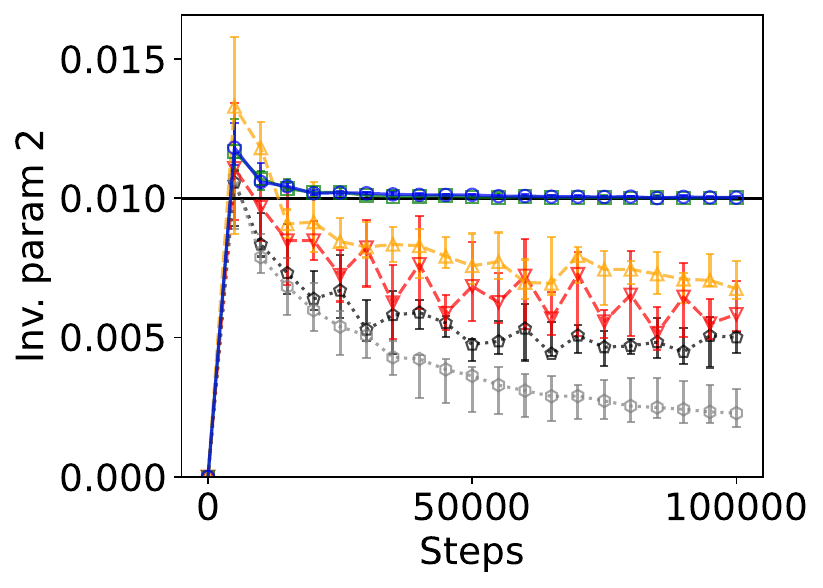}
}
\\
\includegraphics[width=0.7\linewidth]{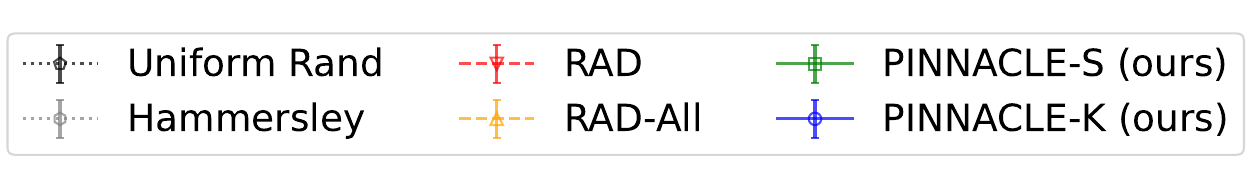}
\caption{The two inverse parameters learned in the 2D Navier-Stokes equation problem. The black horziontal lines represents the true parameter values, while the other lines represent the predicted values of the inverse parameters during training.}
\label{fig:inv-ns}
\end{wrapfigure}

\newpara{Transfer Learning of PINNs with Perturbed IC.}
We also consider the setting where we already have a PINN trained for a given PDE and IC, but need to learn the solution for the same PDE with a perturbed IC. Different IC/BCs correspond to different PDE solutions, and hence can be viewed as different PINN tasks. To reduce cost, we consider fine-tuning the pre-trained PINN given a much tighter budget ($5 \times$ fewer points) compared to training another PINN from scratch for the perturbed IC. \cref{fig:ft} shows results for the 1D Advection equation. Compared to other algorithms, \alg is able to exploit the pre-trained solution structure and more efficiently reconstruct the solution for the new IC. Additional transfer learning results can be found in \cref{ssec:appx-ft}.

\begin{figure}[ht]
\centering
\resizebox{\textwidth}{!}{
\includegraphics[height=2.9cm]{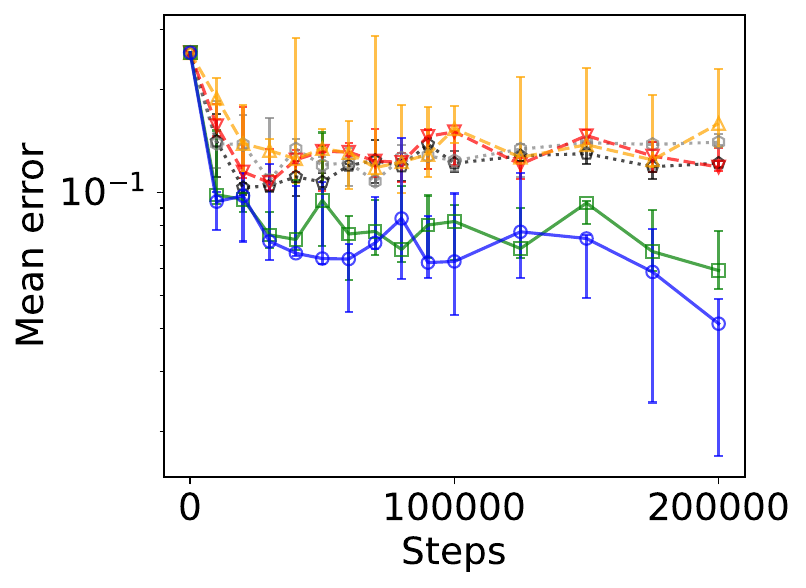}
\includegraphics[height=3.1cm]{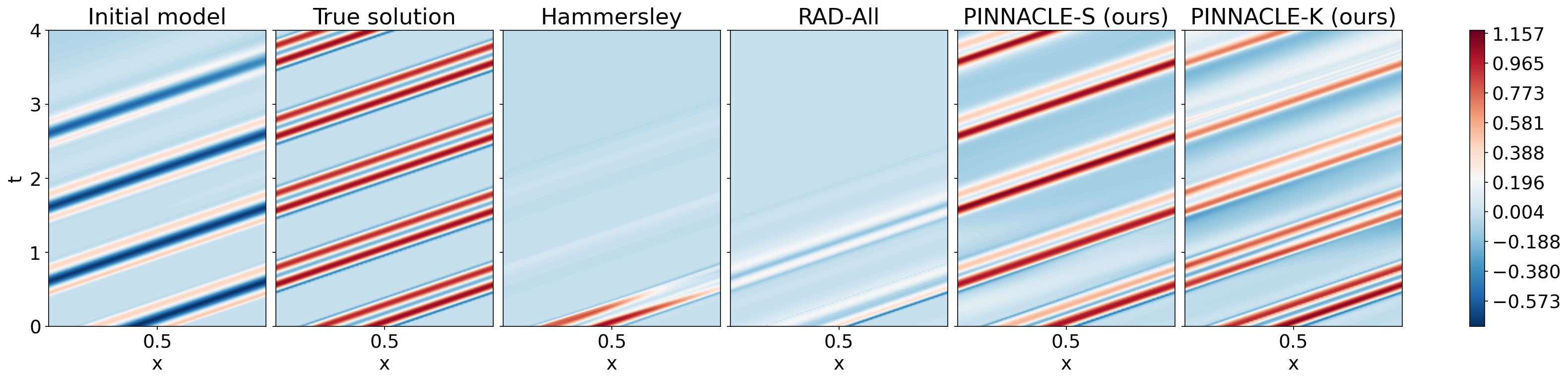}
}
\includegraphics[width=0.7\linewidth]{fig/results/conv-40/labels_flat.pdf}
\caption{Predictive error for the 1D Advection transfer learning problem with perturbed IC. From left to right: plot of mean prediction error for each method, the true PDE solution, and the PINN output from different point selection methods.}
\label{fig:ft}
\end{figure}

\section{Conclusion}

We have proposed a novel algorithm, \alg, that jointly optimizes the selection of all training point types for PINNs for more efficient training, while automatically adjusting the proportion of collocation point types as training progresses. While the problem formulation in this work is focused on PINNs, \alg variants can also be applied to other deep learning problems which involve a composite loss function and input points from different domains as well. Future work could also involve extending \alg to other settings such as deep operator learning.

\section{Reproducibility Statement}

The required assumptions and proof for \cref{thm:generalisation} and \cref{lemma:approx-error} have been discussed in \cref{appx:gen-bound} and \cref{appx:approx-error} respectively. The detailed experimental setup have been described in \cref{appx:exp}. The code has been provided in \url{https://github.com/apivich-h/pinnacle}, with exception of \textsc{PDEBench} benchmarks which can be found on \url{https://github.com/pdebench/PDEBench}.

\section*{Acknowledgements}

This research/project is supported by the National Research Foundation, Singapore under its AI Singapore Programme (AISG Award No: AISG-PhD/2023-01-039J). This research is part of the programme DesCartes and is supported by the National Research Foundation, Prime Minister’s Office, Singapore under its Campus for Research Excellence and Technological Enterprise (CREATE) programme. 

\bibliography{ref_formatted}
\bibliographystyle{iclr2024_conference}

\newpage

\appendix
\section{Notations and Abbreviations}

\begin{center}
\captionof{table}{List of notations used throughout the paper}
\begin{tabular}{|c|c|c|}
\hline
Symbol & Meaning & Example \\ 
\hline
$\delta_{i,j}$ &Dirac-Delta function, i.e., $\delta_{i,j} = 1$ if $i=j$ and 0 otherwise & \\
$\mathbbm{1}$ & Identity matrix & \\
$\pde$ & PDE operator & \cref{eq:pde} \\
$\bc$ & IC/BC operator & \cref{eq:pde} \\
$\inpspace$ & Input space of PDE solution & \cref{eq:pde}\\
$\partial \inpspace$ & Boundary of the input space of PDE solution & \cref{eq:pde} \\
$\soln$ & PDE true solution & \cref{eq:pde} \\
$\nn$ & PINN used to approximate PDE solution & \cref{eq:loss} \\
$\gL(\nn; X)$ & Loss of NN $\nn$ w.r.t. set $X$ & \cref{eq:loss} \\
$\eta$ & Learning rate & \cref{eq:ntk_pinn} \\
$\inpelt$ & Input of PDE solution, i.e., elements in $\inpspace$ & \\ 
$\lab$ & Generic training point label, i.e., $\lab \in \{\ls,\lp,\lb\}$ & \\
$\ls,\lp,\lb$ & Specific training point label & \cref{eqn:augspace} \\
$\augspace$ & Augmented space & \cref{eqn:augspace} \\
$\augset$ & Set of elements from the augmented space &  \\
$z$ & Point in augmented space & \\
$F$ & General prediction operator & \cref{eq:Fop} \\
$\ntk_t(\cdot, \cdot)$ & eNTK of NN $\nnt{t}$ at timestep $t$ &\cref{eq:eigK}\\
$\lambda_{t, i}$ & $i$th largest eigenvalue of $\ntk_t$ & \cref{eq:eigK} \\
$\psi_{t, i}$ & Eigenfunction of $\ntk_t$ corresponding to eigenvalue $\lambda_{t, i}$ & \cref{eq:eigK} \\
$\nabla_\theta \nnt{t}(x)$ & Jacobian of $\nn(x)$ with respect to $\theta$ at $\theta = \theta_t$ &  \\
$\res{t}{z}$ & Residual at time $t$ at point $z$ & \cref{eq:convergence_vec} \\
$\dres{t}{z; Z}$ & Change in residual at time $t$ at point $z$ when trained with $Z$& \cref{eq:convergence_vec} \\
$a_{t,i}(Z)$ & Residual expansion coefficient in dimension $i$ at time $t$ & \cref{eq:convergence_vec} \\
$\gH_\ntk$ & RKHS of the NTK & \\
$\langle \cdot, \cdot\rangle_{\gH_\ntk}$ & Inner product of RKHS space & \cref{eq:convergence_vec} \\
$\| \cdot\|_{\gH_\ntk}$ & Norm of RKHS space, i.e., $\| f \|_{\gH_\ntk}^2 = {\langle f, f\rangle_{\gH_\ntk}}$  & \cref{eq:convergence} \\
$\augref$ & Subset of $\augspace$ for Nystrom approximation & \cref{eq:nystrom} \\
$\augps$ & Subset of $\augspace$ for point selection & \cref{algo:pinnacle} \\
\hline
\end{tabular}
\end{center}

\begin{center}
\captionof{table}{List of abbreviations used throughout the paper in alphabetical order}
\begin{tabular}{|c|c|}
\hline
Abbreviation & Meaning  \\ 
\hline
BC & boundary condition \\
CL & collocation (as in collocation points)\\
DL & deep learning  \\
\exppt & experimental (as in experimental points) \\
eNTK & empirical neural tangent kernel \\
GD & gradient descent \\
IC & initial condition \\
NN & neural network \\
NTK & neural tangent kernel \\
PDE & partial differential equation \\
PINN & physics-informed neural network \\
RKHS & reproducing kernel Hilbert space \\
w.r.t. & with respect to \\
\hline
\end{tabular}
\end{center}

\section{Related Works}

Multiple works have examined the challenges of training PINNs, such as failure of PINNs to learn certain PDE solutions, particularly those with high frequency \citep{krishnapriyanCharacterizingPossibleFailure2021,rohrhoferRoleFixedPoints2023}. In response, there have been several proposals to adjust the training procedures of PINNs to mitigate this effect. This includes reweighing the loss function \citep{wangWhenWhyPINNs2022}, causal learning \citep{wangRespectingCausalityAll2022} or curriculum learning approaches \citep{krishnapriyanCharacterizingPossibleFailure2021}. Most relevant to this paper are works that looked into training points selection to improve PINNs training performance, though they have focused only on better selection of PDE \colpt points rather than joint optimization of the selection of \exppt points and all types of PDE and IC/BC \colpt points. These works either adopt a non-adaptive approach \citep{wuComprehensiveStudyNonadaptive2022}, or an adaptive approach involving a scoring function based on some quantity related to the magnitude of the residual at each training point \citep{luDeepXDEDeepLearning2021,nabianEfficientTrainingPhysicsinformed2021,wuComprehensiveStudyNonadaptive2022,pengRANGResidualbasedAdaptive2022,zengAdaptiveDeepNeural2022,tangDASPINNsDeepAdaptive2023}.

Outside of PINNs, the most similar problem to training point selection for PINNs would be the problem of deep active learning, which is a well-studied problem \citep{settlesActiveLearning2012,renSurveyDeepActive2021}, with proposed algorithms based on diversity measures \citep{senerActiveLearningConvolutional2018,ashDeepBatchActive2020,wangQueryingDiscriminativeRepresentative} and uncertainty measures \citep{galDeepBayesianActive2017,kirschBatchBALDEfficientDiverse2019}. Active learning can be viewed as an exploration-focused variant of Bayesian optimization (BO), which also utilizes principled uncertainty measures provided by surrogate models to optimize a black-box function \citep{Book_garnett2021bayesian,qBO, shahriari2016taking}. 

A number of works have also incorporated NTKs to estimate the NN training dynamics to better select points for the NN \citep{wangNeuralActiveLearning2021,wangDeepActiveLearning2022,mohamadiMakingLookAheadActive2022,hemachandraTrainingFreeNeuralActive2023}. However, most of these deep active learning methods are unable to directly translate to scientific machine learning settings \citep{renDeepActiveLearning2023}. Nonetheless, there have been limited works regarding active learning for PINNs, such as methods using the uncertainty computed from an ensemble of PINNs \citep{jiangEPINNAssistedPractical2022}. 

In this work, we extensively used NTKs to analyze our neural network training dynamics. NTKs are also useful other tasks such as data valuation \citep{wuDAVINZDataValuation2022}, neural architecture search \citep{shuNASILABELDATAAGNOSTIC2022, shuUnifyingBoostingGradientBased2022}, multi-armed bandits \citep{dai2022federated}, Bayesian optimization \citep{dai2022sample}, and uncertainty quantification \citep{heBayesianDeepEnsembles2020}.
NTKs have also been used to study NN generalization loss \citep{caoGeneralizationBoundsStochastic2019} and also in relating NN training with gradient descent and kernel ridge regression \citep{aroraFineGrainedAnalysisOptimization2019, vakiliUniformGeneralizationBounds2021}.

\section{Background on Physics-Informed Neural Network}
\label{appx:pinn-bg}

In this section we motivate and provide some intuition for the loss function \cref{eq:loss} used in the training of PINNs for readers who may be unfamiliar with them.

In regular NN training, our goal is to train a model such that its output is as close to the true solution as possible. This can be quantified using mean squared error (MSE) loss between the true output $u(x)$ and NN output $\nn(x)$ at some selected samples, which can simply be written as $\sum_x \big(\nn(x) - \soln(x)\big)^2$, ignoring constant factors. 

For PINNs, we also want to further constrain our solution such that they follow some PDE with IC/BC conditions such as \cref{eq:pde}. This can be seen as a constraint on the NN output, where we want the expected residual to be zero, i.e., considering only the PDE without the IC/BC, the constraint would be in the form $\pde[\nn](x) - f(x) = 0$. However, since enforcing a hard constraint, given by the PDE, on the NN is difficult to do, we instead incorporate a soft constraint that introduce a penalty depending on how far the NN diverges from a valid PDE solution. We would want this penalty (or regularization term in the loss function) to be of the form of $\E_x \big[ \big( \pde[\nn](x) - f(x)\big)^2 \big]$, where instead of performing the expectation exactly we sample some CL points to approximate the PDE loss term. This is a motivation behind using non-adaptive CL point selection method, which is seen as a way to approximate the expected residual, akin to the roles of quadrature points used in numerical integration. 

Alternatively, some works have considered an adaptive sampling strategy where the CL points are selected where there is a higher residual. In this case, the expected residual plays a different role, which is to encourage the NN to learn regions of the solution with higher loss.

\section{Background on Neural Tangent Kernels}
\label{appx:ntk}

\alg relies on analyzing NN training dynamics and convergence rates. To do so we require the use of Neural Tangent Kernels (NTKs), which is a tool which can be used to analyze general NNs as well as PINNs.
The empirical NTK (eNTK) of a NN $\nnt{t}$ at step $t$ of the training process \citep{jacotNeuralTangentKernel2020},
\begin{equation}\label{eq:ntk-normal}
\ntk_t(x, x') = \nabla_\theta \nnt{t}(x) \ \nabla_\theta \nnt{t}(x')^\top,
\end{equation}
can be used to characterize its training behavior under gradient descent (GD). For a NN trained using GD with learning rate $\eta$ on training set $X$, the NN parameters $\theta_t$ evolves as $\theta_{t+1} = \theta_t  - \eta \nabla_\theta \mathcal{L}(\nnt{t}; X)$ while the model output evolves as $\nnt{t+1}(x) = \nnt{\theta_t}(x) - \eta \ntk_t(x, \mathit{X}) \nabla_{\nn} \mathcal{L}(\nnt{t}; X)$.
Given an MSE loss $\gL(\nn; X) = \sum_{x \in X} \big( \nn(x) - \soln(x) \big)^2 / 2|X|$,
the change in model output during training in this case can be written as
\begin{equation}\label{eq:residual}
\Delta \nnt{t}(x)  = \nnt{t+1}(x) - \nnt{t}(x) 
    = - \eta \ntk_t(x, \mathit{X}) \big(\nnt{t}(X) - u(X)\big) .
\end{equation}
Past works \citep{aroraExactComputationInfinitely2019} had shown that in the infinite-width limit and GD with infinitesimal learning rate, the NN follows a linearlized regime and the eNTK converges to a fixed kernel $\ntk$ that is invariant over training step $t$. Even though these assumptions may not fully hold in practice, many works (e.g., \cite{aroraExactComputationInfinitely2019,aroraFineGrainedAnalysisOptimization2019,leeWideNeuralNetworks2019,shuUnifyingBoostingGradientBased2022}) have shown that the NTK still provides useful insights to the training dynamics for finite-width NNs.

\section{Background on Reproducing Kernel Hilbert Space}
\label{appx:rkhs}

\newcommand{\kn}{\gK}

In this section, we provide a brief overview required on the reproducing kernel Hilbert space (RKHS) for our paper. This section adapted from \citet{scholkopfLearningKernelsSupport2002}.

Consider a symmetric positive semi-definite kernel $\kn:\gX \times \gX \to \sR$. We can define an integral operator $T_\kn$ as
\begin{equation}
T_\kn[f](\cdot) = \int_{\augspace} f(x)\kn(x, \cdot) d\mu(x).
\end{equation}
We can compute the eigenfunctions and corresponding eigenvectors of $T_\kn$, which are pairs of functions $\psi_{i}$ and non-negative values $\lambda_{i}$ such that $T_\kn[\psi_{i}](\cdot) = \lambda_{i} \psi_{i}(\cdot)$. Typically, we also refer to the eigenvectors and eigenvalues of $T_\kn$ as the eigenvectors and eigenvalues of $\kn$. By Mercer's theorem, we can show that the kernel $\kn$ can be written in terms of its eigenfunctions and eigenvalues as
\begin{equation}
\kn(x, x') = \sum_{i=1}^\infty \lambda_{i} \psi_{i}(x) \psi_{i}(x').
\end{equation}

Based on this, we define the feature map as 
$$\phi_{i}(\cdot) = \kn(x, \cdot) = \sum_{i=1}^\infty \lambda_{i} \psi_{i}(x) \psi_{i}(\cdot)$$
and consequently the inner product such that $\langle \psi_{i}, \psi_j\rangle_{\gH_\kn} = \delta_{i,j} / \lambda_{i}$. This is so that we are able to write
\begin{align}
\langle \phi(x), \phi(x')\rangle_{\gH_\kn}
&= \sum_{i=1}^\infty \sum_{j=1}^\infty \lambda_{i} \lambda_{j} \psi_{i}(x) \psi_j(x') \langle \psi_{i}(x), \psi_j(x')\rangle_{\gH_\kn} \\
&= \sum_{i=1}^\infty \frac{\lambda_{i}^2 \psi_{i}(x) \psi_{i}(x')}{\lambda_{i}}
= \sum_{i=1}^\infty {\lambda_{i} \psi_{i}(x) \psi_{i}(x')} = \kn(x, x').
\end{align}
From this, the RKHS $\gH_\kn$ of a kernel $\kn$ would be given by a set of function with finite RKHS norm, i.e.,
\begin{equation}
\gH_\kn  = \bigg\{ f \in L_2(\gX) : \langle f, f\rangle_{\gH_\kn} = \sum_{i=1}^\infty \frac{\langle f, \phi_i \rangle_{\gH_\kn}^2 }{\lambda_i} \leq \infty \bigg\}\ .
\end{equation}

\section{Extended Discussion for \cref{ssec:eigspectrum}}
\label{appx:eigspect-discuss}

The contents in this section has been adapted from NTK related works \citep{leeWideNeuralNetworks2019} to match the context of NTK for PINNs.

\newpara{Calculations of \cref{eq:convergence_vec}.}
From the gradient descent update formula with loss function given by \cref{eq:loss-new}, we can see that
\begin{align}
\theta_{t+1} - \theta_t
&= -\eta \nabla_\theta \gL(\nnt{t}; \augset) \\
&= -\eta \nabla_\theta F[\nnt{t}](Z)^\top  \nabla_{F[\nn]} \gL(\nnt{t}; \augset) & \text{by chain rule} \\
&= -\eta \nabla_\theta F[\nnt{t}](Z)^\top \res{t}{Z}
\end{align}
which, as $\eta \to 0$, can be rewritten with continuous time GD (where the difference between each time step is approximated as a derivative) as \citep{leeWideNeuralNetworks2019}
\begin{align}
\partial_t \theta_t &= -\eta \nabla_\theta F[\nnt{t}](Z)^\top \res{t}{Z}
\end{align}
and so under continuous time GD, we can write the function output as
\begin{align}
\dres{t}{z; Z}
&\approx \partial_t \res{t}{z; Z} & \text{as $\eta\to 0$} \\
&= \partial_t F[\nnt{t}](z) \\
&= \nabla_{\theta}  F[\nnt{t}](z) \ \partial_t \theta_t &\text{by chain rule}\\
&= -\eta \nabla_{\theta} F[\nnt{t}](z)\ \nabla_\theta F[\nnt{t}](Z)^\top \ \res{t}{Z} \\
&= -\eta \ntk_t(z, Z)\ \res{t}{Z}. \label{eqn:appx-evo}
\end{align}

\newpara{Calculations of \cref{eq:convergence_theory}.}
In this case, we consider the PINN under the linearized regime, where we approximate $F[\nn]$ using the Taylor approximation around $\theta_0$, 
\begin{equation}
\label{eq:lin-nn}
F^\lin[\nn](z) = F[\nnt{0}](z) + \nabla_\theta F[\nnt{0}](z)\ \left(\theta - \theta_0\right),
\end{equation}
and subsequently $\reslin{}{\augelt} = F^\lin[\nn](z) - F[\soln](z)$. Under this regime, the eNTK remains constant at all $t$, i.e., $\ntk_t(z, z') =  \ntk_0(z, z') = \nabla_\theta F[\nnt{0}](z)\  \nabla_\theta F[\nnt{0}](z')^\top $ \citep{leeWideNeuralNetworks2019}, which also means that its eigenvectors and corresponding values will not evolve over time. 
It has been shown empirically both for normal NNs \citep{leeWideNeuralNetworks2019} and PINNs \citep{wangWhenWhyPINNs2022} that this regime holds true for sufficiently wide neural networks.

For simplicity we let $\psi_i$ and $\lambda_i$ be the eigenfunctions and corresponding eigenvalues of $\ntk_0$ (where we drop the subscripts $t$ for simplicity). The change in residue of the training set can be estimated using continuous time GD $\Delta\reslin{t}{Z; Z}
\approx \partial_t  \reslin{t}{Z; Z}$, and based on \cref{eqn:appx-evo} can be rewritten as
\begin{align}
\partial_t  \reslin{t}{Z}
&= -\eta \ntk_0(Z)\ \reslin{t}{Z} 
\end{align}
which, when solved, gives
\begin{align}
\reslin{t}{Z}
&= e^{-\eta t \ntk_0(Z)}  \reslin{0}{Z}.
\end{align}
To extrapolate this result to other values of $z \in \augspace$, we can use the definition of $\reslin{t}{Z}$ to see that
\begin{align}
F^\lin[\nnt{t}](Z) - F[\soln](Z)
&= e^{-\eta t \ntk_0(Z)} \big( F^\lin[\nnt{0}](Z) - F[\soln](Z) \big)
\end{align}
and therefore
\begin{align}
F^\lin[\nnt{t}](Z)
&= F[\soln](Z) + e^{-\eta t \ntk_0(Z)} \big( F^\lin[\nnt{0}](Z) - F[\soln](Z) \big) \\
&= F^\lin[\nnt{0}](Z) - \big(\mathbbm{1} - e^{-\eta t \ntk_0(Z)}\big) \big( F^\lin[\nnt{0}](Z) - F[\soln](Z) \big) \\
&= F^\lin[\nnt{0}](Z) - \ntk_0(Z) \ntk_0(Z)^{-1} \big(\mathbbm{1} - e^{-\eta t \ntk_0(Z)}\big) \reslin{0}{Z} \\
&= F^\lin[\nnt{0}](Z) + \nabla_\theta F[\nnt{0}](Z)\  \underbrace{\nabla_\theta F[\nnt{0}](Z)^\top \ntk_0(Z)^{-1} \big(e^{-\eta t \ntk_0(Z)} - \mathbbm{1}\big) \reslin{0}{Z} }_{ \theta_t - \theta_0}
\end{align}
which, through comparing the above with the form of the linearized PINN, gives us what $\theta_t - \theta_0$ is during training. Therefore, plugging this into \cref{eq:lin-nn},
\begin{align}
F^\lin[\nnt{t}](z) 
&= F^\lin[\nnt{0}](z) + \nabla_\theta F[\nnt{0}](z)  {\nabla_\theta F[\nnt{0}](Z)^\top \ntk_0(Z)^{-1} \big( e^{-\eta t \ntk_0(Z)} - \mathbbm{1} \big) \reslin{0}{Z} } \\
&= F^\lin[\nnt{0}](z) + \ntk_0(z, Z) \ntk_0(Z)^{-1} \big( e^{-\eta t \ntk_0(Z)} - \mathbbm{1} \big) \reslin{0}{Z} \\
\reslin{t}{z;Z}
&= \reslin{0}{z;Z} + \ntk_0(z, Z) \ntk_0(Z)^{-1} \big( e^{-\eta t \ntk_0(Z)} - \mathbbm{1}\big) \reslin{0}{Z}. \label{eqn:appx-reschange}
\end{align}
We can then decompose \cref{eqn:appx-reschange} into
\begin{align}
\reslin{t}{z;Z}
&= \reslin{0}{z;Z} + \ntk_0(z, Z)  \sum_{i=1}^{\infty} \frac{ e^{-\eta t\lambda_i} - 1}{\lambda_i} \psi_i(Z) \psi_i(Z)^\top \reslin{0}{Z}.
\end{align}

\newpara{Eigenspectrum of the eNTK.}
In this section, we elaborate further about the eigenspectrum of the eNTK. In \cref{fig:eigspect-appx}, we plot out the eigenfunctions $\psi_{t, i}$ of the eNTK for the 1D Advection forward problem at GD training steps $t=10$k and $t=100$k. We can see that in the case of $t=100$k (when the PINN output $\nn$ is already able to reconstruct the PDE solution well), the eigenfunctions are able to reconstruct the patterns that correspond to the true solution well. Note that due to the tight coupling between the true NN output and the PDE residual, the components of the eigenfunction for $(x, \ls)$ and $(x, \lp)$ are correlated and may share similar patterns. The dominant eigenfunctions will then tend to either have $\psi_{t,i}(x, \ls)$ which aligns well with the true solution, or have $\psi_{t,i}(x, \lp)$ which is flat (corresponding to the true PDE solution whose residual should be flat). In the less dominant eigenfunctions, the patterns will align worse with the true solution.
Meanwhile, in the $t=10$k case, we see that the eigenfunctions will share some pattern with the true solution, but will not be as refined as for $t=100$k. Furthermore, the less dominant eigenfunctions will be even noisier.

\begin{figure}[h]
\centering
\begin{subfigure}{0.9\textwidth}
\centering
\caption{10k Steps}
\resizebox{\textwidth}{!}{
\includegraphics[height=3.1cm]{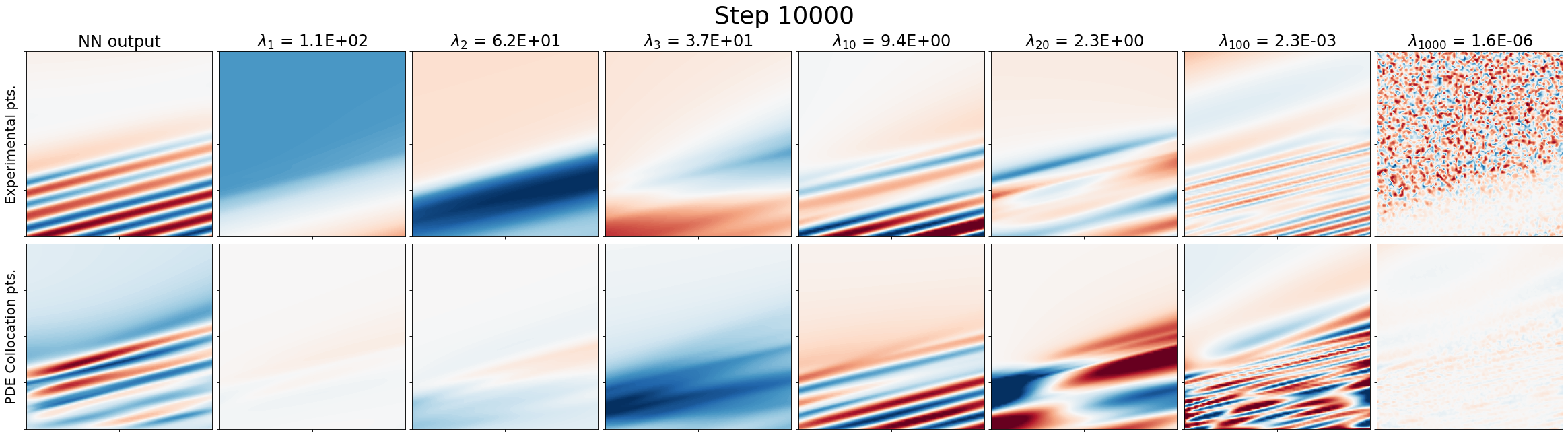}
}
\end{subfigure}
\\
\begin{subfigure}{0.9\textwidth}
\centering
\caption{100k Steps}
\resizebox{\textwidth}{!}{
\includegraphics[height=3.1cm]{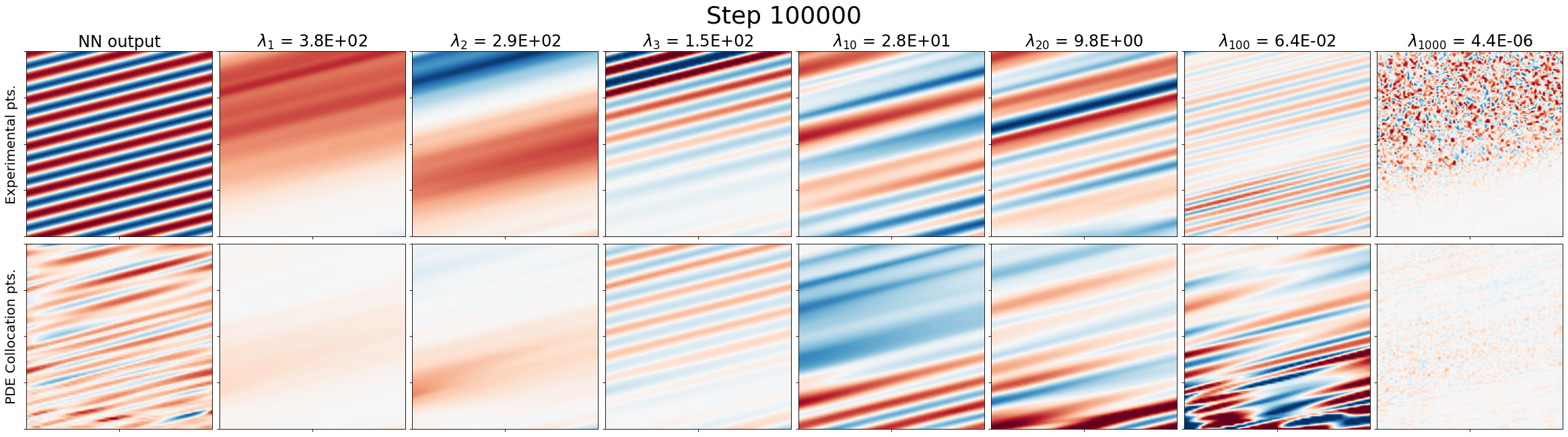}
}
\end{subfigure}
\caption{Eigenspectrum of $\ntk_t$ at various timesteps.}
\label{fig:eigspect-appx}
\end{figure}

In \cref{fig:appx-eig-decomp-pred}, we show how the eigenfunctions can describe the model training at a given GD training step. We see that by using just the top dominant eigenfunctions we are able to reconstruct the NN output $\nn$.

\begin{figure}[ht]
\centering
\begin{subfigure}{0.45\textwidth}
\centering
\caption{After 10k steps of GD}
\includegraphics[width=.95\linewidth]{fig/eigval/eigdecomp-s10000-ys_pred-1.png}
\end{subfigure}
\begin{subfigure}{0.45\textwidth}
\centering
\caption{After 100k steps of GD}
\includegraphics[width=.95\linewidth]{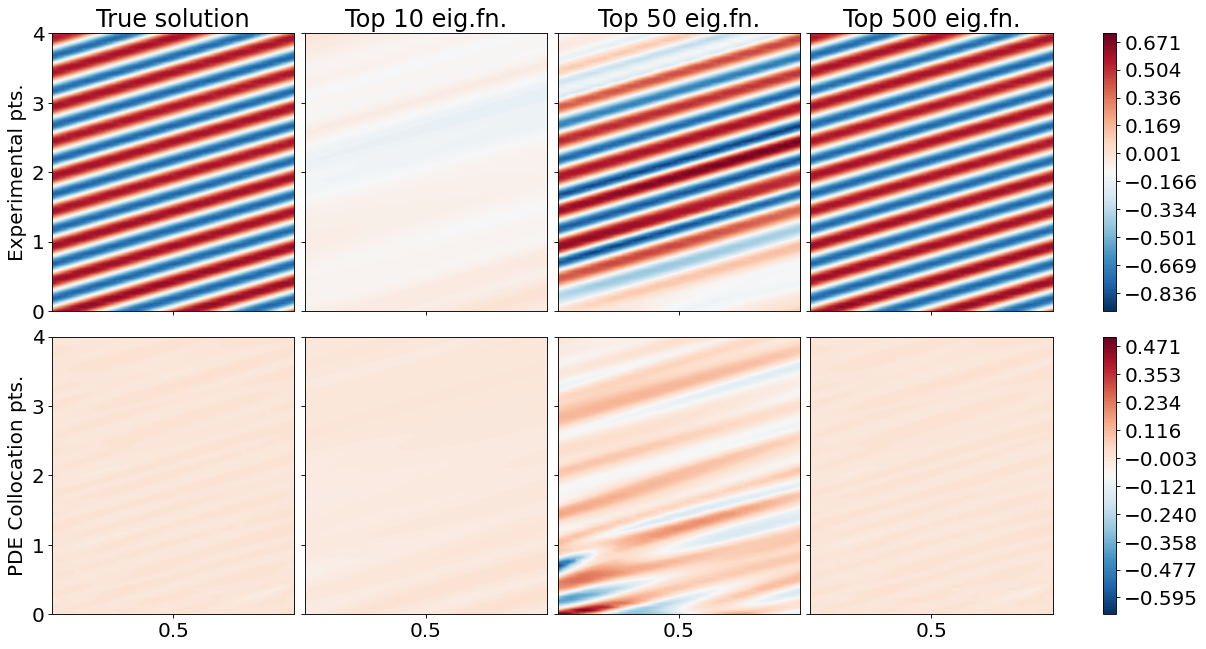}
\end{subfigure}
\caption{Reconstruction of the NN prediction using the eigenbasis of $\ntk_t$.}
\label{fig:appx-eig-decomp-pred}
\end{figure}

\section{Proof on Generalization Bounds}
\label{appx:gen-bound}

In this section, we aim to prove \cref{thm:generalisation}. The proof is an extension of \citet{shuUnifyingBoostingGradientBased2022} beyond regular NNs to give a generalization bound for PINNs. In order to do so, we will first state some assumptions about the PDE being solved and the PINN used for the training process.

\newcommand{\Cpde}{C_\text{\normalfont pde}}

\begin{assumption}[Stability of PDE]
\label{assump:pde-stab}
Suppose we have a differential operator $\pde: \gP \to \gY$ where $\gP$ is a closed subset of some Sobolev space and $\gY$ is a closed subset of some $L^p$ space. For any $u, v \in \gP$, let there exists some $C_\text{\normalfont pde} \geq 1$ such that
\begin{equation}
\| \pde[u] - \pde[v] \|_\gY \leq \Cpde  \| u - v \|_\gP.
\end{equation}
\end{assumption}
For simplicity, we can view the Sobolev space as a $L^p$ space with a defined derivative up to a certain order. Note that while it is possible for $\Cpde < 1$, we assume otherwise for convenience.

We now provide an assumption about the NN we are training with.
\begin{assumption}[On the Neural Network Training]
\label{assump:nn}
Assume some NN $\nn : \inpspace \to \sR$ with $L$ hidden layers each of width $n_L$. Suppose there exists constants $\rho$ such that $\| \nabla_\theta \nn(x)\| \leq \rho$ and $\| \nabla_\theta \pde[\nn](x)\| \leq \rho$. Also let $b_{\text{\normalfont min}}$ and $b_{\text{\normalfont max}}$ be the minimum and maximum of the eigenvalues of the eNTK matrix of the possible training set at initialization, i.e., for any training set $\activeset$, $b_{\text{\normalfont max}} \geq \lambda_1(\ntk_0(\activeset)) \geq \cdots \geq \lambda_{|\activeset|}(\ntk_0(\activeset)) \geq b_{\text{\normalfont min}}$.
\end{assumption}

We will also make an assumption about the boundedness of the PDE solution $\soln$ and the NN output $\nn$. This is done for convenience of the proof -- it is possible to add appropriate constants to the subsequent proofs to get a more general result.
\begin{assumption}[Boundedness of the NN Residual]
\label{assump:bounded}
Suppose the PDE solution and the NN output was such that $|\res{}{\augpt{x, \ls}{\iop}}| \leq 1/\Cpde$. This would also imply that $|\res{}{\augpt{x, \lp}{\pde}}| \leq 1$.
\end{assumption}

We will be working with the hypothesis spaces
\begin{equation}
\gU \triangleq \{ \augelt \mapsto F[\nnt{t}](\augelt) : t > 0 \}
\end{equation}
and
\begin{equation}
\gU^\lin \triangleq \{ \augelt \mapsto F[\nnt{0}](\augelt) + \nabla_\theta F[\nnt{0}] (\augelt)\ (\theta_t - \theta_0) : t > 0 \}
\end{equation}
which is the linearization of the operated neural network $F[\nn]$ around the initialization given by $F[\nn](\augelt) \approx F^\lin[\nn](\augelt) = F[\nnt{0}](\augelt) + \nabla_\theta F[\nnt{0}] (\augelt)\ (\theta_t - \theta_0)$. 
Note that since $F^\lin$ performs a Taylor expansion with respect to $\theta$, this means the operator can only be applied on functions approximated by a NN that is parametrized by parameters $\theta$. This is unlike $F$ which is an operator that can be applied on any function. 
Also note that the linear expansion is performed on $F[\nn]$ (rather than on just the operand $\nn$), and hence does not require the assumption that $F$ is a linear operator. We can also define each components of $F^\lin[\nn]$ separately as
\begin{equation} \nn^\lin(x) \triangleq F^\lin[\nn](x, \ls)  =  \nnt{0}(x) + \nabla_\theta \nnt{0} (x)\ (\theta_t - \theta_0),\end{equation}
and 
\begin{equation}\pde^\lin[\nn](x) \triangleq F^\lin[\nn](x, \lp) =  \pde[\nnt{0}](x) + \nabla_\theta \pde[\nnt{0}](x)\ (\theta_t - \theta_0).\end{equation}

First we will state the result for training the NN where the loss function is based on the linearized operator, or
\begin{equation}
\label{eqn:loss-lin}
\gL^\lin(\nn; \augset) = \frac{1}{2} \sum_{\augelt \in \augset} \big(F^\lin[\nn](x) - F[\soln](x)\big)^2.
\end{equation}
We will show that under the linearized regime, the weights of the NN will not change too much compared to its initialization.
\begin{lemma}
\label{lemma:param-diff}
Consider a linearized neural network $\nn^\lin$ following \cref{assump:nn} is trained on training set $\augset \subset \augspace$ with loss in the form \cref{eqn:loss-lin} with GD with learning rate $\eta < 1 / \lambda_{\text{\normalfont max}} (\ntk_0(\augset))$. Then, for any $t > 0$,
\begin{equation}
\|\theta_t - \theta_0\| \leq \|\theta_\infty - \theta_0\| = \sqrt{\res{0}{\augset}^\top \ntk_0(\augset)^{-1} \res{0}{\augset}}.
\end{equation}
\end{lemma}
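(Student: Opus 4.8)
\textbf{Proof proposal for \cref{lemma:param-diff}.}

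The plan is to exploit the fact that, under the linearized regime, gradient descent on the quadratic loss \cref{eqn:loss-lin} is exactly a linear recursion in the parameter displacement $\theta_t - \theta_0$, so everything reduces to an eigenvalue estimate for the Gram matrix $\ntk_0(\augset)$. Write $J_0 \triangleq \nabla_\theta F[\nnt{0}](\augset)$ for the Jacobian of the linearized prediction map at initialization, so that $\ntk_0(\augset) = J_0 J_0^\top$ and $F^\lin[\nnt{t}](\augset) - F[\soln](\augset) = \res{0}{\augset} + J_0 (\theta_t - \theta_0)$. The GD update is $\theta_{t+1} - \theta_t = -\eta J_0^\top \big( \res{0}{\augset} + J_0 (\theta_t - \theta_0)\big)$, which gives the recursion $(\theta_{t+1} - \theta_0) = (I - \eta J_0^\top J_0)(\theta_t - \theta_0) - \eta J_0^\top \res{0}{\augset}$.

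First I would identify the fixed point of this recursion: $\theta_\infty - \theta_0 = -J_0^\top (J_0 J_0^\top)^{-1} \res{0}{\augset} = -J_0^\top \ntk_0(\augset)^{-1} \res{0}{\augset}$, which lies in the row space of $J_0$ (the orthogonal complement of $\ker J_0$ is never entered by the iteration since both the initial displacement is zero and every increment $-\eta J_0^\top(\cdots)$ lies in $\mathrm{range}(J_0^\top)$). On this subspace $J_0^\top J_0$ acts with eigenvalues in $[\lambda_{\min}(\ntk_0(\augset)), \lambda_{\max}(\ntk_0(\augset))]$, so with $\eta < 1/\lambda_{\max}(\ntk_0(\augset))$ the map $I - \eta J_0^\top J_0$ is a contraction there with all eigenvalues in $[0,1)$. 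Writing $\Delta_t \triangleq (\theta_t - \theta_0) - (\theta_\infty - \theta_0)$, the recursion becomes $\Delta_{t+1} = (I - \eta J_0^\top J_0)\Delta_t$, with $\Delta_0 = -(\theta_\infty - \theta_0)$. Decomposing $\Delta_0$ in the eigenbasis of $J_0^\top J_0$ restricted to $\mathrm{range}(J_0^\top)$, each coordinate is multiplied by a factor in $[0,1)$ at each step, and since $\theta_t - \theta_0 = (\theta_\infty - \theta_0) + \Delta_t$ with $\Delta_t$ and $\theta_\infty - \theta_0 = -\Delta_0$ pointing in "opposite enough" directions — more precisely, writing $\theta_t - \theta_0 = \sum_k (1 - (1-\eta\mu_k)^t)\, c_k\, v_k$ where $\mu_k, v_k$ are the eigenpairs and $c_k$ the coordinates of $\theta_\infty - \theta_0$ — one sees $\|\theta_t - \theta_0\|^2 = \sum_k (1-(1-\eta\mu_k)^t)^2 c_k^2$ is nondecreasing in $t$ and converges to $\sum_k c_k^2 = \|\theta_\infty - \theta_0\|^2$. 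Finally I would compute $\|\theta_\infty - \theta_0\|^2 = \res{0}{\augset}^\top \ntk_0(\augset)^{-1} J_0 J_0^\top \ntk_0(\augset)^{-1} \res{0}{\augset} = \res{0}{\augset}^\top \ntk_0(\augset)^{-1} \res{0}{\augset}$, giving the claimed closed form.

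The main obstacle, and the only place care is genuinely needed, is the monotonicity claim $\|\theta_t - \theta_0\| \leq \|\theta_\infty - \theta_0\|$ for all finite $t$: naively bounding via the triangle inequality $\|\theta_t - \theta_0\| \leq \|\theta_\infty - \theta_0\| + \|\Delta_t\|$ gives the wrong direction. The fix is the coordinatewise argument above — in the eigenbasis the coefficient of $v_k$ in $\theta_t - \theta_0$ is exactly $(1 - (1-\eta\mu_k)^t) c_k$, and since $0 \le 1 - \eta\mu_k < 1$ we have $0 \le 1 - (1-\eta\mu_k)^t \le 1$ monotonically increasing to $1$, so each squared coordinate increases monotonically to $c_k^2$; summing gives both the bound and the identification of the limit with $\|\theta_\infty - \theta_0\|$. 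A secondary technical point is handling the kernel of $J_0$ (when $\ntk_0(\augset)$ has full rank equal to $|\augset|$ this is moot, but $J_0^\top J_0$ can still be rank-deficient if the parameter dimension exceeds $|\augset|$); this is dispatched by the observation that the entire trajectory stays in $\mathrm{range}(J_0^\top)$, on which $J_0^\top J_0$ is invertible with spectrum controlled by the eigenvalues of $\ntk_0(\augset)$.
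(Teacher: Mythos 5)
Your proposal is correct and follows essentially the same route as the paper: both derive the GD recursion for the linearized network, resolve it as a geometric series in $\mathbbm{1} - \eta\,\ntk_0(\augset)$ acting on $\res{0}{\augset}$, and conclude by a spectral argument showing each eigen-coordinate of $\theta_t - \theta_0$ grows monotonically to its limiting value. The only difference is that the paper delegates this last monotonicity-and-limit step to Lemma A.5 of \citet{shuUnifyingBoostingGradientBased2022}, whereas you carry it out explicitly in the eigenbasis of $J_0^\top J_0$ restricted to $\mathrm{range}(J_0^\top)$ (including the rank-deficient case), which makes your version self-contained.
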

\begin{proof}
Let $\reslin{}{\augelt} = F^\lin[\nn](\augelt) - F[\soln](\augelt)$ be the residual of the linearized operated network. Note that $\reslin{0}{\augelt} = \res{0}{\augelt}$. Based on the update rule of GD, we see that
\begin{equation}
\theta_{t+1} = \theta_t - \eta \cdot \nabla_\theta F[\nnt{0}] (\augset)^\top \reslin{t}{\augset}
\end{equation}
and so
\begin{align}
F^\lin[\nnt{t+1}](\augset) 
&= F[\nnt{0}](\augset) + \nabla_\theta F[\nnt{0}] (\augset) (\theta_{t+1} - \theta_0) \\
&= F[\nnt{0}](\augset) + \nabla_\theta F[\nnt{0}] (\augset) (\theta_t - \eta \cdot \nabla_\theta F[\nnt{0}] (\augset)^\top \reslin{t}{\augset} - \theta_0) \\
&= F^\lin[\nnt{t}](\augset) - \eta \ntk_0 \reslin{t}{\augset} \\
&= \big( \mathbbm{1} - \eta \ntk_0 \big)F^\lin[\nnt{t}](\augset) + \eta \ntk_0F[\soln](\augset) \label{eqn:thetadiff-1} \\
&= \big( \mathbbm{1} - \eta \ntk_0 \big)^{t+1} \reslin{0}{\augset} + F[\soln](\augset)\label{eqn:thetadiff-2}
\end{align} 
where $\mathbbm{1}$ is the identity matrix and the shorthand $\ntk_0 = \ntk_0(\augset)$. We note that \cref{eqn:thetadiff-2} arrives from applying \cref{eqn:thetadiff-1} recursively $t+1$ times and using the fact that we set $\eta < 1 / \lambda_{\text{\normalfont max}} (\ntk_0(\augset))$ to simplify a corresponding geometric series. This gives
\begin{align}
\theta_t - \theta_0 &= \sum_{k=0}^t (\theta_{k+1} - \theta_{k}) \\
&= \eta \cdot \nabla_\theta F[\nnt{0}] (\augset)\ \sum_{k=0}^t \reslin{k}{\augset} \\
&= \eta \cdot \nabla_\theta F[\nnt{0}] (\augset)\ \sum_{k=0}^t \big( \mathbbm{1} - \eta \ntk_0 \big)^{t+1} \reslin{0}{\augset} .
\end{align}
The remainder of the proof follows directly with Lemma A.5 from \citet{shuUnifyingBoostingGradientBased2022}.
\end{proof}

\begin{corr}
Given \cref{lemma:param-diff}, 
\begin{equation}
\|\theta_t - \theta_0\| \leq \|\res{t}{\activeset}\|^2 \cdot \sqrt{\frac{\lambda_\text{\normalfont max}(\ntk_0(\activeset))}{\alpha(\activeset) \cdot \lambda_\text{\normalfont min}(\ntk_0(\activeset))}}
\end{equation}
\end{corr}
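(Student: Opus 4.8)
The plan is to derive the bound directly from \cref{lemma:param-diff} by rewriting the quadratic form $\res{0}{\activeset}^\top \ntk_0(\activeset)^{-1} \res{0}{\activeset}$ appearing there in terms of the convergence degree $\alpha(\activeset)$ and the extreme eigenvalues of $\ntk_0(\activeset)$. The one ingredient needed beyond \cref{lemma:param-diff} is the identification of $\alpha(\activeset)$ with the Gram-matrix quadratic form $\res{0}{\activeset}^\top \ntk_0(\activeset)\, \res{0}{\activeset}$: this follows from the Nystrom expression \cref{eqn:convergence-embed-approx} taken with reference set equal to $\activeset$ (for which the Nystrom approximation is exact, since it only ever evaluates the kernel on points of the reference set), collapsing $\hat\alpha(\activeset)$ to $\res{0}{\activeset}^\top \ntk_0(\activeset)\, \res{0}{\activeset}$. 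Everything else is elementary eigenvalue algebra.

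Concretely, write $\ntk := \ntk_0(\activeset)$ and let its eigenvalues lie in $[\lambda_{\min}(\ntk), \lambda_{\max}(\ntk)]$. I would chain
\begin{align*}
\res{0}{\activeset}^\top \ntk^{-1} \res{0}{\activeset}
&\le \frac{\|\res{0}{\activeset}\|_2^2}{\lambda_{\min}(\ntk)}
= \frac{\|\res{0}{\activeset}\|_2^4}{\lambda_{\min}(\ntk)\,\|\res{0}{\activeset}\|_2^2} \\
&\le \frac{\|\res{0}{\activeset}\|_2^4\,\lambda_{\max}(\ntk)}{\lambda_{\min}(\ntk)\,\big(\res{0}{\activeset}^\top \ntk\, \res{0}{\activeset}\big)}
= \frac{\|\res{0}{\activeset}\|_2^4\,\lambda_{\max}(\ntk)}{\lambda_{\min}(\ntk)\,\alpha(\activeset)},
\end{align*}
where the first step uses that $\ntk^{-1}$ has largest eigenvalue $1/\lambda_{\min}(\ntk)$, the second inserts $\|\res{0}{\activeset}\|_2^2/\|\res{0}{\activeset}\|_2^2$ and bounds the denominator below via $\res{0}{\activeset}^\top \ntk\, \res{0}{\activeset} \le \lambda_{\max}(\ntk)\|\res{0}{\activeset}\|_2^2$, and the last equality is the $\alpha$-identity. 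Taking square roots and composing with $\|\theta_t - \theta_0\| \le \sqrt{\res{0}{\activeset}^\top \ntk_0(\activeset)^{-1} \res{0}{\activeset}}$ from \cref{lemma:param-diff} gives exactly $\|\theta_t - \theta_0\| \le \|\res{}{\activeset}\|_2^2 \sqrt{\lambda_{\max}(\ntk_0(\activeset)) / \big(\alpha(\activeset)\,\lambda_{\min}(\ntk_0(\activeset))\big)}$.

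The inequalities themselves are routine; the step I expect to need the most care is the justification of the $\alpha$-identity, i.e. that the convergence degree — whose definition in \cref{eq:convergence} uses the full-space eigenfunctions restricted to $\activeset$, which are not orthonormal as vectors on the finite set — really does reduce to the Gram-matrix quadratic form when the set being scored coincides with the reference set. Handling this cleanly is what forces the \emph{finite} spectrum $\lambda_{\min}(\ntk_0(\activeset)), \lambda_{\max}(\ntk_0(\activeset))$ (rather than that of the full-domain integral operator) to be the relevant one, consistent with \cref{assump:nn}. A secondary, purely bookkeeping, point is that the residual and $\alpha$ in the final bound must be read at a common reference time; in the linearized regime underlying \cref{lemma:param-diff} the eNTK is frozen, so fixing that time (initialization) is harmless and the statement's $\res{t}{\activeset}$ is understood accordingly.
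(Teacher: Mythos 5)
Your proof is correct and follows essentially the same route as the paper's: both arguments reduce the quadratic form from \cref{lemma:param-diff} via the identity $\alpha(\activeset)=\res{t}{\activeset}^\top \ntk_0(\activeset)\,\res{t}{\activeset}$ and then apply elementary extreme-eigenvalue bounds; your two Rayleigh-quotient estimates are exactly the paper's step $\|A^{1/2}y\|_2^2\,\|A^{-1/2}y\|_2^2\le \kappa^4\,\lambda_\text{max}(A)/\lambda_\text{min}(A)$, just packaged as two separate inequalities instead of a product of operator norms. The one place you diverge is the justification of the $\alpha$-identity. The paper gets it in one line from the Mercer expansion \cref{eq:eigK}: $\alpha(\activeset)=\sum_i\lambda_{t,i}\big(\psi_{t,i}(\activeset)^\top\res{t}{\activeset}\big)^2=\res{t}{\activeset}^\top\big(\sum_i\lambda_{t,i}\,\psi_{t,i}(\activeset)\psi_{t,i}(\activeset)^\top\big)\res{t}{\activeset}=\res{t}{\activeset}^\top\ntk_t(\activeset)\,\res{t}{\activeset}$, with $\ntk_t=\ntk_0$ in the linearized regime. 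Your appeal to the Nystrom expression \cref{eqn:convergence-embed-approx} with reference set $\augref=\activeset$ only shows that the \emph{approximate} criterion $\hat\alpha(\activeset)$ collapses to the Gram form; it does not by itself identify $\hat\alpha$ with $\alpha$ (the approximation bound \cref{corr:alpha-alignment} itself presupposes the Mercer identity in its proof), so as written that step is circular. Since you explicitly flag this as the step needing care and the missing piece is the one-line Mercer computation above, it is a small repair rather than a different strategy; your handling of the time-index bookkeeping (reading the residual, the eNTK and $\alpha$ at a common time under the frozen-kernel assumption) matches the paper's implicit convention.
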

\begin{proof}
Notice that we can rewrite $\alpha(\activeset)$ as
\begin{align}
\alpha(\activeset) 
&= \bigg\langle \sum_{i=1}^\infty \lambda_{t,i} \langle \psi_{t,i}(\activeset),\res{t}{\activeset}  \rangle\psi_{t,i}(\activeset), \sum_{i=1}^\infty \lambda_{t,i} \langle \psi_{t,i}(\activeset),\res{t}{\activeset}  \rangle\psi_{t,i}(\activeset) \bigg\rangle_{\rkhs} \\
&=\sum_{i=1}^\infty \lambda_{t,i} \langle \psi_{t,i}(\activeset),\res{t}{\activeset}  \rangle^2 \\
&= \res{t}{\activeset}^\top \bigg( \sum_{i=1}^\infty \lambda_{t,i} \psi_{t,i}(\activeset) \psi_{t,i}(\activeset)^\top \bigg) \res{t}{\activeset} \\
&= \res{t}{\activeset}^\top \ntk_t(\activeset) \res{t}{\activeset}\\
&= \res{t}{\activeset}^\top \ntk_0(\activeset) \res{t}{\activeset}
\end{align}
where we use $\ntk_t = \ntk_0$ due to the linearized NN assumption. Let $A = \ntk_0(\activeset)$, $y = \res{t}{\activeset}$ and $\kappa = \|\res{t}{\activeset}\|$. Note that $A$ is a PSD matrix, and therefore we can define some $A^{1/2}$ such that $A = (A^{1/2})^\top (A^{1/2})$, and also some $A^{-1/2}$ such that $A^{-1} = (A^{-1/2})^\top (A^{-1/2})$.
\begin{align}
(y^\top A y)(y^\top A^{-1} y) 
&= \|A^{1/2}y\|_2^2 \cdot \|A^{-1/2}y\|_2^2 \\
&\leq \kappa^2 \|A^{1/2}\|_2^2 \cdot \kappa^2 \|A^{-1/2}\|_2^2 \\
&= \kappa^4 \cdot \lambda_\text{\normalfont max}(A^{1/2})^2 \cdot \lambda_\text{\normalfont max}(A^{-1/2})^2 \\
&= \kappa^4 \cdot \lambda_\text{\normalfont max}(A) \cdot \lambda_\text{\normalfont max}(A^{-1}) \\
&= \kappa^4 \cdot \frac{\lambda_\text{\normalfont max}(A)}{\lambda_\text{\normalfont min}(A)}
\end{align}
which gives 
\begin{equation}
\sqrt{\res{0}{\activeset}^\top \ntk_0(\activeset)^{-1} \res{0}{\activeset}} \leq \|\res{t}{\activeset}\|^2 \cdot \sqrt{\frac{\lambda_\text{\normalfont max}(\ntk_0(\activeset))}{\alpha(\activeset) \cdot \lambda_\text{\normalfont min}(\ntk_0(\activeset))}}.
\end{equation}
\end{proof}

We now will link the linearized operated NN to the non-linearized version. We first demonstrate that the difference of the two versions of the NN can be bounded.

\begin{lemma}
\label{corr:lin-error}
Suppose we have a neural network $\nn$ such that \cref{assump:nn} holds. Then, there exists some constants $c, c', c'', N_L, \eta_0 > 0$ such that if $n_L > N_L$, then when applying GD with learning rate $\eta < \eta_0$, 
\begin{equation}
\sup_{t\geq 0} \| \nnt{t}^\lin - \nnt{t} \| \leq \frac{c}{\sqrt{n_L}} \label{eqn:bound1}
\end{equation}
and
\begin{equation}
\sup_{t\geq 0} \| \pde^\lin[\nnt{t}] - \pde[\nnt{t}] \| \leq \frac{c \cdot \Cpde}{\sqrt{n_L}} +  c' \cdot \|\res{t}{\activeset}\|^2 \cdot \sqrt{\frac{\lambda_\text{\normalfont max}(\ntk_0(\activeset))}{\alpha(\activeset) \cdot \lambda_\text{\normalfont min}(\ntk_0(\activeset))}} \label{eqn:bound2}
\end{equation}
with probability at least $1 - \delta$ over the random network initialization.
\end{lemma}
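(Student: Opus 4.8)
The plan is to adapt the linearization-error analysis of wide neural networks from \citet{leeWideNeuralNetworks2019} (as already cited in \cref{appx:eigspect-discuss}) together with Lemma A.5 and the surrounding machinery of \citet{shuUnifyingBoostingGradientBased2022}, carried out in the augmented space. The key observation is that \cref{eqn:bound1} is, up to the obvious notational translation, \emph{exactly} the standard statement that a sufficiently wide NN stays $\mathcal{O}(1/\sqrt{n_L})$-close to its linearization along the entire GD trajectory; one invokes that result for the function $x \mapsto \nn(x)$ trained under the (operated) loss \cref{eq:loss-new}. The only thing one must check is that training on the composite PINN loss, rather than a plain regression loss, does not break the argument: this holds because the GD dynamics in parameter space only sees $\nabla_\theta F[\nn](\augset)$ and the residual $\res{t}{\augset}$, both of which are controlled — the former by the bound $\|\nabla_\theta F[\nn](\augelt)\| \le \rho$ in \cref{assump:nn}, the latter by \cref{assump:bounded} together with the convergence of the residual to zero (via the geometric-series computation already done in the proof of \cref{lemma:param-diff}, which gives $\reslin{t}{\augset} = (\mathbbm{1} - \eta\ntk_0)^t \reslin{0}{\augset}$ and hence summability of $\sum_t \|\reslin{t}{\augset}\|$).

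For \cref{eqn:bound2}, I would argue in two pieces. Write
\[
\pde^\lin[\nnt{t}] - \pde[\nnt{t}] \;=\; \big(\pde^\lin[\nnt{t}] - \pde[\nnt{t}^\lin]\big) \;+\; \big(\pde[\nnt{t}^\lin] - \pde[\nnt{t}]\big),
\]
where $\nnt{t}^\lin$ denotes the function whose \emph{output} is the linearization $\nnt{0}(x) + \nabla_\theta\nnt{0}(x)(\theta_t - \theta_0)$. The second bracket is handled by the PDE-stability \cref{assump:pde-stab}: $\|\pde[\nnt{t}^\lin] - \pde[\nnt{t}]\|_\gY \le \Cpde \|\nnt{t}^\lin - \nnt{t}\|_\gP \le \Cpde \cdot c/\sqrt{n_L}$, which yields the first term of \cref{eqn:bound2} (modulo absorbing the Sobolev-vs-$L^2$ discrepancy into $c$, which is why the constants in \cref{eqn:bound1} and \cref{eqn:bound2} are allowed to differ). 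The first bracket — the gap between applying the \emph{linearized-after-$\pde$} operator $\pde^\lin$ and applying $\pde$ to the linearized network — is where the parameter-drift bound enters: this difference is a second-order-in-$(\theta_t-\theta_0)$ remainder of the Taylor expansion of $\pde[\nn]$ around $\theta_0$, so it is bounded by (a Hessian/curvature constant times) $\|\theta_t - \theta_0\|^2$, or alternatively by a Lipschitz-type estimate linear in $\|\theta_t-\theta_0\|$. Feeding in the corollary to \cref{lemma:param-diff}, namely $\|\theta_t - \theta_0\| \le \|\res{t}{\activeset}\|^2 \sqrt{\lambda_\text{max}(\ntk_0(\activeset)) / (\alpha(\activeset)\,\lambda_\text{min}(\ntk_0(\activeset)))}$, produces exactly the second term of \cref{eqn:bound2} with the appropriate constant $c'$.

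The main obstacle I anticipate is \emph{not} the high-level decomposition but making the $1/\sqrt{n_L}$ linearization bound genuinely rigorous for a PINN loss: the operator $\pde$ typically involves spatial derivatives, so $\nabla_\theta \pde[\nn]$ is a \emph{derivative of the Jacobian}, and one needs the corresponding higher-order smoothness/concentration estimates (boundedness of the relevant second derivatives at initialization, and their stability over training) to hold with high probability — this is what forces the "there exists $N_L$, $\eta_0$" quantifiers and the probability-$1-\delta$ qualifier in the statement. In the write-up I would discharge this by citing the analogous results already established for PINN NTKs \citep{wangWhenWhyPINNs2022} and for wide networks \citep{leeWideNeuralNetworks2019}, stating explicitly that \cref{assump:nn} (with its uniform bounds $\rho$, $b_\text{min}$, $b_\text{max}$) is exactly the finite-width surrogate for those asymptotic statements, and then invoking \citet[Lemma A.5]{shuUnifyingBoostingGradientBased2022} for the trajectory-uniform ($\sup_{t\ge 0}$) control of $\|\theta_t - \theta_0\|$ and of the linearization error, rather than re-deriving it.
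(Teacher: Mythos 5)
Your proposal follows the paper's strategy in all essentials: \cref{eqn:bound1} is obtained by citing the trajectory-uniform linearization bound of \citet{leeWideNeuralNetworks2019} (the paper simply invokes their Theorem H.1), and \cref{eqn:bound2} is obtained by combining \cref{assump:pde-stab}, the gradient bound $\rho$ from \cref{assump:nn}, and the parameter-drift corollary of \cref{lemma:param-diff}. The only difference is the grouping: you pivot at $\pde[\nnt{t}^\lin]$, writing $\pde^\lin[\nnt{t}] - \pde[\nnt{t}] = (\pde^\lin[\nnt{t}] - \pde[\nnt{t}^\lin]) + (\pde[\nnt{t}^\lin] - \pde[\nnt{t}])$, whereas the paper pivots at $\pde[\nnt{0}]$, bounding $\|\pde^\lin[\nnt{t}] - \pde[\nnt{t}]\| \leq \|\pde[\nnt{0}] - \pde[\nnt{t}]\| + \|\nabla_\theta \pde[\nnt{0}](\theta_t-\theta_0)\|$ and then using $\Cpde$, the linearization bound, and $\|\nabla_\theta \nnt{0}\|, \|\nabla_\theta\pde[\nnt{0}]\| \leq \rho$ to arrive at $(1+\Cpde)\rho\|\theta_t-\theta_0\| + \Cpde c/\sqrt{n_L}$. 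One caveat on your handling of the first bracket: the suggestion to control $\pde^\lin[\nnt{t}] - \pde[\nnt{t}^\lin]$ as a second-order Taylor remainder via a Hessian/curvature constant is not available here — \cref{assump:nn} supplies only first-order bounds ($\rho$ on the Jacobians), no control of $\nabla^2_\theta\pde[\nn]$ — so that branch of your argument would require an assumption the paper does not make. Your "Lipschitz-type estimate linear in $\|\theta_t-\theta_0\|$" alternative is the right one, but as stated it is underspecified, since the bracket is not a difference $\pde[\nn_{\theta_1}]-\pde[\nn_{\theta_2}]$ at two parameter values; making it precise requires a further triangle inequality through $\pde[\nnt{0}]$, namely $\|\pde^\lin[\nnt{t}] - \pde[\nnt{t}^\lin]\| \leq \|\nabla_\theta\pde[\nnt{0}](\theta_t-\theta_0)\| + \Cpde\|\nabla_\theta\nnt{0}(\theta_t-\theta_0)\| \leq (1+\Cpde)\rho\|\theta_t-\theta_0\|$, at which point you have reproduced the paper's chain exactly and the corollary of \cref{lemma:param-diff} gives the second term of \cref{eqn:bound2}. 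Your observation that the cited width-dependence results are stated for plain regression losses, while here the GD dynamics and the operator $\pde$ involve derivatives of the Jacobian, is a fair one — the paper does not address it either, quietly absorbing it into \cref{assump:nn} and the citation — so flagging it explicitly is a reasonable addition rather than a defect.
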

\begin{proof}
The first part follows from Theorem H.1 of \citet{leeWideNeuralNetworks2019}. The second part follows the fact that
\begin{align}
&\| \pde^\lin[\nnt{t}](x) - \pde[\nnt{t}](x)] \|\\
&\quad= \| \pde[\nnt{0}](x) + \nabla_\theta \pde[\nnt{0}](x)^\top (\theta_t - \theta_0) - \pde[\nnt{t}](x) \| \\
&\quad\leq \| \pde[\nnt{0}](x) - \pde[\nnt{t}](x) \| + \| \nabla_\theta \pde[\nnt{0}](x)^\top (\theta_t - \theta_0) \| \\
&\quad\leq \Cpde \| \nnt{0}(x) - \nnt{t}(x) \| + \| \nabla_\theta \pde[\nnt{0}](x)^\top (\theta_t - \theta_0) \| \\
&\quad\leq \Cpde \big( \| \nnt{0}(x) - \nnt{t}^\lin(x) \| + \| \nnt{t}^\lin(x) - \nnt{t}(x) \| \big) + \| \nabla_\theta \pde[\nnt{0}](x)^\top (\theta_t - \theta_0) \| \\
&\quad\leq \Cpde \big( \| \nabla_\theta \nnt{0}(x)^\top (\theta_t - \theta_0) \| + \| \nnt{t}^\lin(x) - \nnt{t}(x) \| \big) + \| \nabla_\theta \pde[\nnt{0}](x)^\top (\theta_t - \theta_0) \| \\
&\quad\leq \Cpde \| \nabla_\theta \pde[\nnt{0}](x)\ (\theta_t - \theta_0) \| + \| \nabla_\theta \pde[\nnt{0}](x)^\top (\theta_t - \theta_0) \| + \Cpde\| \nnt{t}^\lin(x) - \nnt{t}(x) \| \\
&\quad\leq \big( \Cpde \| \nabla_\theta \pde[\nnt{0}](x) \| + \| \nabla_\theta \pde[\nnt{0}](x) \| \big) \| \theta_t - \theta_0 \| + \Cpde\cdot\frac{c}{\sqrt{n_L}} \\
&\quad\leq (1+\Cpde) \rho \|\res{t}{\activeset}\|^2 \cdot \sqrt{\frac{\lambda_\text{\normalfont max}(\ntk_0(\activeset))}{\alpha(\activeset) \cdot \lambda_\text{\normalfont min}(\ntk_0(\activeset))}} + \Cpde\cdot\frac{c}{\sqrt{n_L}}
\end{align}
which proves the second part of the claim. Note that the bound from \cref{eqn:bound2} could be used to bound \cref{eqn:bound1} as well.
\end{proof}

We now use results above to compute the Rademacher complexity of $\gU$. We first compute the Rademacher complexity of $\gU^\lin$ in \cref{lemma:rade-bound-link}, then relate it to the Rademacher complexity of $\gU$ in \cref{lemma:rademacher}. As a reference, the empirical Rademacher of a hypothesis class $\gG$ over a set $S$ is given by \citep{mohri}
\begin{equation}
\hat{\gR}_S(\gG) \triangleq \E_{\epsilon \in \{ \pm1 \}^{|S|} } \bigg[ \sup_{g\in \gG} \frac{\eps^\top g(S)}{|S|} \bigg].
\end{equation}
\begin{lemma}
\label{lemma:rade-bound-link}
Let $\hat{\gR}_S(\gG)$ be the Rademacher complexity of a hypothesis class $\gG$ over some dataset $\activeset$ with size $N_S$. Then, there exists some $c > 0$ such that with probability at least $1-\delta$ over the random initialization, 
\begin{equation}
\hat{\gR}_S(\gU^\lin) \leq \frac{\rho}{N_S^{3/2}} \cdot \|\res{t}{\activeset}\|^2 \sqrt{\frac{\lambda_\text{\normalfont max}(\ntk_0(\activeset))}{\alpha(\activeset) \cdot \lambda_\text{\normalfont min}(\ntk_0(\activeset))}}.
\end{equation}
\end{lemma}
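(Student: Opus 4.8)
The plan is to exploit that $\gU^\lin$ is an \emph{affine} family in the parameter displacement $\Delta\theta\triangleq\theta_t-\theta_0$: every member has the same, $t$-independent offset $F[\nnt{0}](\cdot)$ and differs only through the linear term $\nabla_\theta F[\nnt{0}](\cdot)\,\Delta\theta$. Hence the empirical Rademacher complexity collapses to that of a class of linear functionals over the set $\{\Delta\theta : t>0\}$ of reachable displacements, and the argument splits into two pieces: (i) confining that set to a single Euclidean ball whose radius is controlled by $\alpha(\activeset)$, and (ii) the textbook Jensen/trace estimate for linear classes over a ball. I would carry these out in that order.

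For step (i), I would start from the proof of \cref{lemma:param-diff}: under the linearized dynamics $\Delta\theta$ equals $\eta\,\nabla_\theta F[\nnt{0}](\activeset)^\top$ applied to a partial geometric series in $(\mathbbm{1}-\eta\ntk_0(\activeset))$ acting on $\res{0}{\activeset}$; since every matrix here is a function of $\ntk_0(\activeset)$ and they commute, $\|\Delta\theta\|$ is nondecreasing in $t$ and bounded, for all $t>0$, by $\|\theta_\infty-\theta_0\|=\sqrt{\res{0}{\activeset}^\top\ntk_0(\activeset)^{-1}\res{0}{\activeset}}$. The corollary to \cref{lemma:param-diff} then rewrites this, via a Cauchy--Schwarz manipulation on $\ntk_0(\activeset)^{1/2}$ and $\ntk_0(\activeset)^{-1/2}$, as the uniform bound $\|\Delta\theta\|\le B$ with $B\triangleq\|\res{t}{\activeset}\|^2\sqrt{\lambda_{\text{\normalfont max}}(\ntk_0(\activeset))/(\alpha(\activeset)\,\lambda_{\text{\normalfont min}}(\ntk_0(\activeset)))}$. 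This $B$ is random through $\ntk_0(\activeset)$, and restricting to the good initialization event (on which the eNTK eigenvalues lie in $[b_{\text{\normalfont min}},b_{\text{\normalfont max}}]$ as in \cref{assump:nn}, the same event underlying \cref{corr:lin-error}) is what supplies the ``with probability at least $1-\delta$'' qualifier.

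For step (ii), write $\eps\in\{\pm1\}^{N_S}$ and let $J$ be the Jacobian over $\activeset$ whose $j$-th row is $\nabla_\theta F[\nnt{0}](z_j)$, so $g(\activeset)=F[\nnt{0}](\activeset)+J\,\Delta\theta$ for $g\in\gU^\lin$. Because the offset is $t$-independent, $\sup_{g\in\gU^\lin}\eps^\top g(\activeset)=\eps^\top F[\nnt{0}](\activeset)+\sup_{t>0}\eps^\top J\,\Delta\theta\le\eps^\top F[\nnt{0}](\activeset)+B\,\|J^\top\eps\|_2$ by Cauchy--Schwarz and step (i). Taking $\E_\eps$, the first term vanishes since $\E_\eps\eps=0$, and by Jensen $\E_\eps\|J^\top\eps\|_2\le(\E_\eps\|J^\top\eps\|_2^2)^{1/2}=(\Tr(JJ^\top))^{1/2}=\big(\sum_{j=1}^{N_S}\|\nabla_\theta F[\nnt{0}](z_j)\|_2^2\big)^{1/2}\le\rho\sqrt{N_S}$, where the last inequality uses $\|\nabla_\theta\nn(x)\|\le\rho$ and $\|\nabla_\theta\pde[\nn](x)\|\le\rho$ from \cref{assump:nn} (so $\|\nabla_\theta F[\nnt{0}](z)\|\le\rho$ for either indicator). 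Dividing by $N_S$ and substituting $B$ from the corollary to \cref{lemma:param-diff} then gives the stated bound on $\hat{\gR}_S(\gU^\lin)$.

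The main obstacle will be step (i): the hypothesis class is indexed by a continuum (the GD time $t$), so one must bound \emph{all} the parameter displacements it realizes --- not merely the endpoint $\theta_\infty-\theta_0$ --- by a common, $\alpha(\activeset)$-controlled radius. This needs the explicit geometric-series form of the linearized trajectory (to get monotonicity of $\|\Delta\theta\|$ in $t$) together with the spectral argument in the corollary to \cref{lemma:param-diff} that converts $\res{0}{\activeset}^\top\ntk_0(\activeset)^{-1}\res{0}{\activeset}$ into the $\alpha(\activeset)$-dependent quantity; the residual $\res{t}{\activeset}$ appearing in the final bound is to be read through that corollary (equivalently, using $\|\res{t}{\activeset}\|\le\|\res{0}{\activeset}\|$ along the trajectory). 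Once the displacement ball is pinned down, the remaining Rademacher estimate is the standard one for linear classes and should introduce no further difficulty.
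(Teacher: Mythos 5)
Your overall route is the same as the paper's: split the affine class $\gU^\lin$ into the $t$-independent offset $F[\nnt{0}](\cdot)$ (which vanishes under the Rademacher average) plus a linear term in $\theta_t-\theta_0$, bound the reachable displacements by the corollary to \cref{lemma:param-diff} to get the radius $B=\|\res{t}{\activeset}\|^2\sqrt{\lambda_{\text{\normalfont max}}(\ntk_0(\activeset))/(\alpha(\activeset)\,\lambda_{\text{\normalfont min}}(\ntk_0(\activeset)))}$, and finish with Cauchy--Schwarz and the gradient bound $\rho$ from \cref{assump:nn}. Steps (i) and the treatment of the probability event are fine and match the paper.

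The issue is the final arithmetic in your step (ii): by your own (correct) Jensen/trace estimate, $\E_\eps\|J^\top\eps\|_2\leq\big(\Tr(JJ^\top)\big)^{1/2}\leq\rho\sqrt{N_S}$, so after dividing by $N_S$ you obtain $\hat{\gR}_S(\gU^\lin)\leq \rho B/\sqrt{N_S}$, which is a factor of $N_S$ \emph{larger} than the stated bound $\rho B/N_S^{3/2}$; the claim that this ``gives the stated bound'' does not follow from what you computed. The paper reaches the $N_S^{-3/2}$ rate by bounding the whole-Jacobian norm $\|\nabla_\theta F[\nnt{0}](\activeset)\|_2$ directly by $\rho/\sqrt{N_S}$ (the normalization inherited from \citet{shuUnifyingBoostingGradientBased2022}), rather than by the row-wise estimate $\rho\sqrt{N_S}$ that your trace computation yields. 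So either you must invoke that stronger normalized Jacobian bound explicitly (and justify it), or your argument only establishes the weaker $\rho B/\sqrt{N_S}$ rate; in its current form your more careful Jensen step exposes, rather than closes, this $N_S$ gap, and you should not assert the stated constant without addressing it.
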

\begin{proof}
We apply the same technique from \cite{shuUnifyingBoostingGradientBased2022}. We see that
\begin{align}
\hat{\gR}_S(\gU^\lin) 
&= \E_{\eps \sim \{\pm1\}^{N_S} } \sum_{i=1}^{N_S} \frac{\eps_i  F[\nnt{0}](\augelt_i)}{N_S}
+ \sup_{t \geq 0} \bigg[ \E_{\eps \sim \{\pm1\}^{N_S} } \sum_{i=1}^{N_S} \frac{\eps_i \nabla_\theta F[\nnt{0}] (\augelt_i)\ (\theta_t - \theta_0)}{N_S} \bigg] \\
&\leq \sup_{t \geq 0} \frac{\|\theta_t - \theta_0\|_2 \|\nabla_\theta F[\nnt{0}] (\activeset)\|_2}{N_S}  \\
&\leq \frac{1}{N_S} \cdot \|\res{t}{\activeset}\|^2 \sqrt{\frac{\lambda_\text{\normalfont max}(\ntk_0(\activeset))}{\alpha(\activeset) \cdot \lambda_\text{\normalfont min}(\ntk_0(\activeset))}} \cdot \frac{\rho}{\sqrt{N_S}}.
\end{align}
In the first equality, we use the definition of $\hat{\gR}$. The first part of the first equality is independent of $t$, and will cancel to 0 due to the summation over $\eps$. The second inequality then follows from Cauchy-Schwartz inequality. The third inequality uses \cref{lemma:param-diff} and the assumption regarding $\|\nabla_\theta F[\nnt{0}] (\augelt)\|$.
\end{proof}

\begin{lemma}
\label{lemma:rademacher} 
The Rademacher complexity of $\gU$ is given by
\begin{equation}
\hat{\gR}_S(\gU) \leq \frac{c\cdot \Cpde}{\sqrt{n_L}} + \bigg(\frac{\rho}{N_S^{3/2}} + c'\bigg) \cdot \|\res{t}{\activeset}\|^2 \sqrt{\frac{\lambda_\text{\normalfont max}(\ntk_0(\activeset))}{\alpha(\activeset) \cdot \lambda_\text{\normalfont min}(\ntk_0(\activeset))}}
\end{equation}
for some $c$ and $c'$.
\end{lemma}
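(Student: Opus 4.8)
\textbf{Proof proposal for \cref{lemma:rademacher}.}
The plan is to transfer the bound on $\hat{\gR}_S(\gU^\lin)$ obtained in \cref{lemma:rade-bound-link} over to $\gU$, paying only the uniform linearization error quantified by \cref{corr:lin-error}. The core tool is the elementary stability property of empirical Rademacher complexity: if $\gG$ and $\gG'$ are hypothesis classes such that for every $g \in \gG$ there exists $g' \in \gG'$ with $\sup_{\augelt} |g(\augelt) - g'(\augelt)| \le \epsilon$, then $\hat{\gR}_S(\gG) \le \hat{\gR}_S(\gG') + \epsilon$ for any finite $S$. This follows immediately from the definition $\hat{\gR}_S(\gG) = \E_\eps \sup_{g} \eps^\top g(S)/|S|$ together with $|\eps^\top (g(S) - g'(S))|/|S| \le \epsilon$ for $\eps \in \{\pm 1\}^{|S|}$. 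I would apply this with $\gG = \gU$ and $\gG' = \gU^\lin$, pairing $\augelt \mapsto F[\nnt{t}](\augelt)$ with $\augelt \mapsto F^\lin[\nnt{t}](\augelt)$ for the \emph{same} $t$, so that $\epsilon = \sup_{t \ge 0}\, \sup_{\augelt \in \augspace} |F[\nnt{t}](\augelt) - F^\lin[\nnt{t}](\augelt)|$.

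Next I would bound $\epsilon$. Since $\augspace = [0,1]^d \times \{\ls, \lp\}$ in the setting of \cref{thm:generalisation}, the supremum over $\augelt$ decomposes according to the indicator: on points $(x, \ls)$ the discrepancy is $|\nnt{t}(x) - \nnt{t}^\lin(x)|$, controlled by \cref{eqn:bound1}; on points $(x, \lp)$ it is $|\pde[\nnt{t}](x) - \pde^\lin[\nnt{t}](x)|$, controlled by \cref{eqn:bound2}. As remarked right after \cref{corr:lin-error}, the right-hand side of \cref{eqn:bound2} already dominates that of \cref{eqn:bound1}, so on the intersection of the high-probability events underlying \cref{corr:lin-error} we obtain
\begin{equation}
\epsilon \;\le\; \frac{c\cdot\Cpde}{\sqrt{n_L}} \;+\; c' \cdot \|\res{t}{\activeset}\|^2 \sqrt{\frac{\lambda_\text{\normalfont max}(\ntk_0(\activeset))}{\alpha(\activeset) \cdot \lambda_\text{\normalfont min}(\ntk_0(\activeset))}}.
\end{equation}

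Finally I would combine $\hat{\gR}_S(\gU) \le \hat{\gR}_S(\gU^\lin) + \epsilon$, substitute the estimate of \cref{lemma:rade-bound-link} for the first term (which contributes the $\rho / N_S^{3/2}$ factor) and the display above for $\epsilon$ (which contributes the $c\Cpde/\sqrt{n_L}$ term and the extra $c'$), and collapse constants to reach the stated inequality; the two high-probability events are intersected and the failure probability rescaled so the conclusion still holds with probability at least $1-\delta$. The main subtlety, and where care is needed, is the bookkeeping around the supremum over $t$: the pairing must be at matching $t$, and the approximation error used must be \emph{uniform in} $t$, which is exactly what the $\sup_{t\ge 0}$ statements in \cref{corr:lin-error} supply; a secondary point is that the linearization error must hold uniformly over all of $[0,1]^d\times\{\ls,\lp\}$ rather than merely on the finite set $S$, which is why we invoke the function-norm bounds of \cref{corr:lin-error} rather than pointwise estimates on $S$.
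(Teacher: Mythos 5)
Your proposal is correct and follows essentially the same route as the paper: bound the uniform (in $t$) discrepancy between $F[\nnt{t}]$ and $F^\lin[\nnt{t}]$ via \cref{corr:lin-error} (with \cref{eqn:bound2} dominating \cref{eqn:bound1}), conclude $\hat{\gR}_S(\gU) \leq \hat{\gR}_S(\gU^\lin) + \epsilon$, and then invoke \cref{lemma:rade-bound-link}. Your explicit statement of the sup-norm stability property of empirical Rademacher complexity and the bookkeeping over the two subspaces and high-probability events is just a more careful phrasing of the paper's one-line argument, not a different method.
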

\begin{proof}
From \cref{corr:lin-error}, we know that there will exist some constant $c$ and $c'$ such that for any $\epsilon_i \in \{\pm1\}$, 
\begin{equation}
\epsilon_i F[\nnt{t}](\augelt_i) \leq \epsilon_i F^\lin[\nnt{t}](\augelt_i) + \frac{c\cdot \Cpde}{\sqrt{n_L}} + c' \cdot \|\res{t}{\activeset}\|^2 \cdot \sqrt{\frac{\lambda_\text{\normalfont max}(\ntk_0(\activeset))}{\alpha(\activeset) \cdot \lambda_\text{\normalfont min}(\ntk_0(\activeset))}}
\end{equation}
which gives
\begin{equation}
\hat{\gR}_S(\gU) \leq \hat{\gR}_S(\gU^\lin) + \frac{c\cdot \Cpde}{\sqrt{n_L}} + c' \cdot \|\res{t}{\activeset}\|^2 \cdot \sqrt{\frac{\lambda_\text{\normalfont max}(\ntk_0(\activeset))}{\alpha(\activeset) \cdot \lambda_\text{\normalfont min}(\ntk_0(\activeset))}}.
\end{equation}

Applying \cref{lemma:rade-bound-link} to the above completes the lemma.
\end{proof}

We now use the Rademacher complexity to relate back to the overall residual. For simplicity, we let $r(\augelt) = |\res{\infty}{\augelt}|$ and let $\Bar{r}_\gD = \E_{\augelt\sim\gD}[r(\augelt)]$ be the expected absolute residual given test samples drawn from a distribution $\gD$.

\begin{lemma}
\label{lemma:in-domain}
Suppose $\activeset \sim \gD_{\activeset}$ is an i.i.d. sample from $\augspace$ according to some distribution $\gD_{\activeset}$ of size $N_S$. Then, there exists some $c, c', N_L > 0$ such that for $n_L \geq N_L$, with probability at least $1 - \delta$,
\begin{equation}
\Bar{r}_{\gD_S} \leq \frac{1}{N_S}\|\res{\infty}{\activeset}\|_1 + \frac{2c \Cpde}{\sqrt{n_L}} + 2\bigg(\frac{\rho}{N_S^{3/2}} + c'\bigg) \|\res{t}{\activeset}\|^2 \sqrt{\frac{\lambda_\text{\normalfont max}(\ntk_0(\activeset))}{\alpha(\activeset) \cdot \lambda_\text{\normalfont min}(\ntk_0(\activeset))}} + 3\sqrt{\frac{\log (2/\delta)}{2 N_S}}
\end{equation}
\end{lemma}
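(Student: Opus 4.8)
The plan is to derive the bound as a textbook Rademacher-complexity generalization inequality, specialized to the absolute-residual function class and then combined with the Rademacher estimate already proved in \cref{lemma:rademacher}. First I would recall the classical uniform-convergence bound obtained from symmetrization and McDiarmid's inequality \citep{mohri}: if $\gG$ is a class of functions from $\augspace$ to $[0,1]$ and $\activeset$ is an i.i.d.\ sample of size $N_S$ drawn from $\gD_\activeset$, then with probability at least $1-\delta$ over the draw of $\activeset$, for every $g\in\gG$,
\[
\E_{z\sim\gD_\activeset}[g(z)] \;\le\; \frac{1}{N_S}\sum_{z\in\activeset} g(z) \;+\; 2\,\hat{\gR}_\activeset(\gG) \;+\; 3\sqrt{\tfrac{\log(2/\delta)}{2N_S}}.
\]
The requirement that the class be $[0,1]$-valued is exactly what \cref{assump:bounded} supplies, since it forces $|\res{}{z}| \le 1$ for all $z\in\augspace$.

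Next I would identify the class to plug in. Writing $r(z) = |\res{\infty}{z}| = |F[\nnt{\infty}](z) - F[\soln](z)|$, we see $r$ is a member of $\{\,z\mapsto |h(z) - F[\soln](z)| : h\in\gU\,\}$, where $\gU = \{z\mapsto F[\nnt{t}](z):t>0\}$ is the hypothesis class of \cref{appx:gen-bound}. Two standard properties of the empirical Rademacher complexity then reduce this to $\gU$: translating every member of a class by a fixed function does not change the Rademacher complexity, so $\hat{\gR}_\activeset(\{h-F[\soln]:h\in\gU\}) = \hat{\gR}_\activeset(\gU)$; and $a\mapsto|a|$ is $1$-Lipschitz with $|0|=0$, so by the Ledoux--Talagrand contraction inequality $\hat{\gR}_\activeset(\{|h-F[\soln]|:h\in\gU\}) \le \hat{\gR}_\activeset(\{h-F[\soln]:h\in\gU\}) = \hat{\gR}_\activeset(\gU)$.

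Combining the two displays with $g=r$ gives $\Bar{r}_{\gD_\activeset} \le \frac{1}{N_S}\|\res{\infty}{\activeset}\|_1 + 2\hat{\gR}_\activeset(\gU) + 3\sqrt{\log(2/\delta)/(2N_S)}$, since $\frac1{N_S}\sum_{z\in\activeset}r(z) = \frac1{N_S}\|\res{\infty}{\activeset}\|_1$. Substituting the bound from \cref{lemma:rademacher},
\[
\hat{\gR}_\activeset(\gU) \le \frac{c\,\Cpde}{\sqrt{n_L}} + \Big(\tfrac{\rho}{N_S^{3/2}} + c'\Big)\,\|\res{t}{\activeset}\|^2\sqrt{\tfrac{\lambda_\text{\normalfont max}(\ntk_0(\activeset))}{\alpha(\activeset)\,\lambda_\text{\normalfont min}(\ntk_0(\activeset))}},
\]
and carrying the factor $2$ through yields precisely the stated coefficients $2c\Cpde$ and $2(\rho/N_S^{3/2}+c')$; a union bound over the good event of \cref{lemma:rademacher} (probability $1-\delta$ over initialization) and the good event of the uniform-convergence step (probability $1-\delta$ over $\activeset$), followed by relabelling $\delta$, completes the proof.

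I expect the only real subtlety to be the bookkeeping that makes the contraction/translation step rigorous alongside the boundedness requirement: one must check that the Rademacher bound of \cref{lemma:rademacher} is genuinely uniform in $t>0$ (so that the supremum over training time, including the limit $t=\infty$, is controlled) and that \cref{assump:bounded} keeps the post-contraction class inside $[0,1]$ so that McDiarmid's inequality applies verbatim; correctly combining the two high-probability events and tracking constants through the relabelling of $\delta$ is the other place where care is needed. None of these steps requires a new inequality beyond what is already in the excerpt.
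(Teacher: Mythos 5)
Your proposal is correct and follows essentially the same route as the paper: the paper's proof is exactly an application of the standard Rademacher-complexity generalization bound (Thm.~3.3 of \citet{mohri}) to the residual class, using \cref{assump:bounded} for the $[0,1]$-range requirement and \cref{lemma:rademacher} for the complexity term. You merely spell out the translation/contraction and union-bound bookkeeping that the paper leaves implicit, which is consistent with, not a departure from, its argument.
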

\begin{proof}
Note that based on \cref{assump:bounded}, we have $|\res{\infty}{\augelt}| \leq 1$. The result then follows from directly applying \cref{lemma:rademacher} to Thm. 3.3 from \citet{mohri}.
\end{proof}

We now proceed to prove \cref{thm:generalisation}. We will first state the theorem more formally, then provide a proof for it.

\begin{thm}[Generalization Bound for PINNs]
\label{thm:generalisation_pinn} 
Consider a PDE $\pde[\soln, \beta](x) = f(x)$ which follows \cref{assump:pde-stab}. Let $\augspace = [0,1]^d \times \{ \ls, \lp \}$. Let $\nn$ be a NN which follows \cref{assump:nn}. Let the NN residual follows \cref{assump:bounded}.
Let $\gD_S$ be a probability distribution over $\augspace$ with p.d.f. $p(\augelt)$ such that $\mu\big( \{ \augelt \in \augspace : p(\augelt) < 1/2 \} \big) \leq s$, where $\mu(W)$ is the Lebesgue measure of set $W$\footnote{We use a slight abuse of notation to work with the augmented space $\augspace$. Given $\inpspace$ is defined as a Cartesian product of intervals, its Lebesgue measure is naturally defined. Then, formally, for $W\subseteq \augspace$, we define $\mu(W) = \frac{1}{2} \mu(\{ x \in \inpspace : (x, \ls) \in W \}) + \frac{1}{2} \mu(\{ x\in \inpspace : (x, \lp) \in W \}) $.}.
Let $\nn$ be trained on a dataset $\activeset \subset \augspace$, where $\activeset \sim \gD_S$ uniformly at random with $|\activeset| = N_S$, by GD with learning rate $\eta \leq 1 / \lambda_{\text{\normalfont max}} (\ntk_0(\activeset))$. 
Then, there exists some $c$ such that with probability $1-2\delta$ over the random model initialization and set $S$,
\begin{equation}
\E_{x \sim \inpspace}[|\nnt{\infty}(x) - \soln(x)|] \leq c_1 \|\res{\infty}{\activeset}\|_1 + \frac{c_2}{\sqrt{\alpha(\activeset)}} + c_3,
\end{equation}
where
\begin{align}
c_1 &= \frac{2}{N_S}, \\
c_2 &= {4 \rho} \sqrt{\frac{b_\text{\normalfont max}}{b_\text{\normalfont min}}} \bigg(\frac{1}{N_S^{3/2}} + (1+\Cpde)  \sqrt{\frac{b_\text{\normalfont max}}{b_\text{\normalfont min}}} \bigg), \\
c_3 &= \frac{4c\cdot \Cpde}{\sqrt{n_L}}  + 6\sqrt{\frac{\log (2/\delta)}{2 N_S}} + 2s.
\end{align}
\end{thm}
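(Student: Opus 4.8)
The plan is to first bound the expected absolute residual under the \emph{sampling} distribution $\gD_S$ by chaining the auxiliary results already established, and then to convert this into a bound under the uniform (Lebesgue) measure on $\inpspace$ via a change of measure controlled by the density hypothesis $\mu(\{p < 1/2\}) \le s$ and the two--slice structure of $\augspace$. For the first step I would invoke \cref{lemma:in-domain}: on the intersection of the probability--$(1-\delta)$ event over the random initialization (from the linearization bound \cref{corr:lin-error}, itself built from \cref{lemma:param-diff} and \cref{assump:pde-stab}) and the probability--$(1-\delta)$ event over the draw $\activeset \sim \gD_S$ (from the Rademacher generalization theorem of \citet{mohri}) --- an event of probability $\ge 1-2\delta$ by a union bound --- the quantity $\bar r_{\gD_S} = \E_{z \sim \gD_S}[\,|\res{\infty}{z}|\,]$ is at most $\tfrac{1}{N_S}\|\res{\infty}{\activeset}\|_1$ plus $2\hat{\gR}_S(\gU)$ plus $3\sqrt{\log(2/\delta)/(2N_S)}$. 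The Rademacher term $\hat{\gR}_S(\gU)$ has already been bounded in \cref{lemma:rademacher} by $\tfrac{c\,\Cpde}{\sqrt{n_L}}$ plus a term proportional to $\sqrt{\lambda_{\max}(\ntk_0(\activeset)) / (\alpha(\activeset)\,\lambda_{\min}(\ntk_0(\activeset)))}$ (times a training--residual factor); using \cref{assump:nn} to replace those eigenvalues by the uniform constants $b_{\max}, b_{\min}$ and \cref{assump:bounded} to control the training residual makes this a clean multiple of $\alpha(\activeset)^{-1/2}$.

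For the second step, write $r(z) = |\res{\infty}{z}|$ and note $\E_{x\sim\inpspace}[\,|\nnt{\infty}(x) - \soln(x)|\,] = \int_\inpspace r(x,\ls)\,dx$. Since the reference measure $\mu$ on $\augspace$ (as defined in the footnote of the statement) splits its mass equally between the $\ls$-- and $\lp$--slices, this equals $2\bigl(\int_\augspace r\,d\mu - \tfrac12\int_\inpspace r(x,\lp)\,dx\bigr) \le 2\int_\augspace r\,d\mu$. I would then split $\augspace = A \cup A^c$ with $A = \{z : p(z) \ge 1/2\}$: on $A$, since $d\gD_S = p\,d\mu \ge \tfrac12\,d\mu$, we get $\int_A r\,d\mu \le 2\int_A r\,d\gD_S \le 2\bar r_{\gD_S}$; on $A^c$, \cref{assump:bounded} gives $r \le 1$ and the hypothesis gives $\mu(A^c)\le s$, so $\int_{A^c} r\,d\mu \le s$. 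Hence $\E_{x\sim\inpspace}[\,|\nnt{\infty}(x)-\soln(x)|\,] \le 4\bar r_{\gD_S} + 2s$, and plugging in the first--step bound and gathering the $\|\res{\infty}{\activeset}\|_1$, $\alpha(\activeset)^{-1/2}$, and residual ($n_L^{-1/2}$, $\sqrt{\log(1/\delta)/N_S}$, $s$) contributions into $c_1$, $c_2$, $c_3$ respectively --- tracking the numerical prefactors through \cref{corr:lin-error,lemma:rade-bound-link,lemma:rademacher,lemma:in-domain} --- reproduces the claimed bound, with $c_1, c_2, c_3$ of the displayed form, i.e. $\gO\big(\mathrm{poly}(1/N_S,\, b_{\max}/b_{\min},\, \Cpde,\, n_L^{-1/2},\, s)\big)$.

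The genuinely delicate content lies in the lemmas taken as given rather than in this final assembly: the inequality $(y^\top A y)(y^\top A^{-1} y) \le \|y\|_2^4\,\lambda_{\max}(A)/\lambda_{\min}(A)$ behind the corollary to \cref{lemma:param-diff}, which is what turns ``large convergence degree $\alpha(\activeset)$'' into ``small parameter drift $\|\theta_\infty - \theta_0\|$'' and hence into a small Rademacher complexity, and the propagation of the linearization error through the PDE operator in \cref{corr:lin-error}, where \cref{assump:pde-stab} enters and the $\Cpde$ factors appear. Granting those, the main remaining obstacle is modest: it is the bookkeeping of the $\{\ls,\lp\}$ slice structure in the change of measure --- the machinery controls the residual on both slices jointly while the statement asks only for control on the $\ls$--slice, so one must check that discarding the $\lp$--slice costs only a harmless constant --- together with the arithmetic of collecting $c_1$, $c_2$, $c_3$.
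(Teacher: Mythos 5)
Your proposal follows essentially the same route as the paper's proof: it chains \cref{lemma:in-domain} (itself built from \cref{lemma:param-diff}, \cref{corr:lin-error}, \cref{lemma:rade-bound-link} and \cref{lemma:rademacher}, with the union bound over the initialization and sampling events giving the $1-2\delta$), and then performs the identical change of measure by splitting $\augspace$ into $\{p \geq 1/2\}$ and its complement, using the density lower bound on the good set and \cref{assump:bounded} plus $\mu(\{p<1/2\})\leq s$ on the bad set, before discarding the $\lp$-slice. The only divergence is normalization bookkeeping: the paper implicitly treats $p$ as a density with respect to per-slice Lebesgue measure (so the uniform density is $q \equiv 1/2$ and one gets $\E_{x\sim\inpspace}[r(x,\ls)] \leq 2\bar{r}_{\gD_S} + 2s$), whereas your reading of $p$ as a density with respect to the normalized $\mu$ yields $4\bar{r}_{\gD_S} + 2s$ and hence constants twice those displayed — a factor-of-two discrepancy in $c_1,c_2$ (and part of $c_3$) that does not affect the structure or validity of the argument.
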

\begin{proof}
Let $\gD_\augspace$ refer to the uniform distribution over $\augspace$. The p.d.f. of the uniform distribution $q$ is given by $q(\augelt) = 1/2$. Notice that we can then write
\begin{align}
\bar{r}_{\gD_\augspace} 
= \frac{1}{2}\E_{x \sim \inpspace}[r(\augpt{x}{\iop})] + \frac{1}{2}\E_{x\sim\inpspace}[r(\augpt{x}{\pde})] 
\geq\frac{1}{2}\E_{x \sim \inpspace}[r(\augpt{x}{\iop})].
\end{align}

Let $\augspace_\uparrow = \{ \augelt \in \augspace : p(\augelt) \geq 1/2 \}$ be a set of input with a ``high" probability of being chosen by the point selection algorithm, and $\augspace_\downarrow = \augspace \setminus \augspace_\uparrow$. Then, abusing the integral notation to allow summation over discrete spaces, we see that
\begin{align}
\bar{r}_{\gD_\augspace} &= \E_{\augelt \sim \gD_\augspace}[r(\augelt)] \\
&= \int_{\augspace_\uparrow} r(\augelt) d\augelt + \int_{\augspace_\downarrow} r(\augelt) d\augelt \\
&\leq \int_{\augspace_\uparrow} r(\augelt) dp(\augelt) + \int_{\augspace_\downarrow} r(\augelt) d\augelt \\
&\leq \int_{\augspace} r(\augelt) dp(\augelt) + \int_{\augspace_\downarrow} d\augelt \\
&= \Bar{r}_{\gD_S} + \mu(\augspace_\downarrow) \\
&\leq \Bar{r}_{\gD_S} + s
\end{align}
and so 
\begin{equation}
\E_{x \sim \inpspace}[r(\augpt{x}{\iop})] \leq 2(\Bar{r}_{\gD_S} + s).
\end{equation}

Applying results from \cref{lemma:in-domain} to bound $\Bar{r}_{\gD_S}$ completes the proof. 
\end{proof}

\section{Formal Statement and Proof of \cref{lemma:approx-error}}
\label{appx:invprop}
\label{appx:approx-error}

In this section, we will drop the subscript $t$ from the eNTK $\ntk_t$ for convenience.

We first make the following observation about the quantity $\hat{\alpha}(\augset)$ in \cref{eqn:convergence-embed-approx}. The quantity can be rewritten as 
\begin{align}
\hat{\alpha}
&= \sum_{i=1}^p \hat{\lambda}_{t,i}^{-1} \big( \hat{v}^\top_i\ntk(\augref,\augset)\res{t}{\augset}\big)^2\\
&= \res{t}{\augset}^\top \ntk(\augset, \augref) \bigg( \sum_{i=1}^p \frac{\hat{v}_i \hat{v}^\top_i} {\lambda_{t,i}} \bigg)\ntk(\augref,\augset)\res{t}{\augset} \\
&= \res{t}{\augset}^\top \underbrace{\ntk(\augset, \augref) \ntk(\augref)^{-1} \ntk(\augref,\augset)}_\text{\textcircled{3}} \res{t}{\augset}.
\end{align}
Notice that \textcircled{3} can be seen as a Nystrom approximation of $\ntk(\augset)$, where the approximation is done using the inputs $\augref$. We will first analyze \textcircled{3}, by comparing its difference from the full-rank matrix $\ntk(\augset)$. This result is a direct consequence of the approximation error bounds from previous works on the Nystrom approximation method.

\begin{prop}
\label{corr:nystrom-good}
Let $N_\text{\normalfont pool}$ be the size of set $\augps$. Let $\varepsilon \in (0, 1)$ be a number.
Suppose we write $\ntk(\augps) = U \Lambda U^\top$ using the eigenvalue decomposition such that columns of $U$ are eigenfunctions of $\ntk(\augps)$ and diagonals of $\Lambda$ are their corresponding eigenvalues. Let $U_k$ be the first $k$ columns of $U$, corresponding to the top $k$ eigenvalues of $\ntk(\augps)$, and let $\tau \triangleq (N_\text{\normalfont pool} / k) \max_i \| (U_k)_i \|^2$ be the coherence of $U_k$.
Let $k \leq N_\text{\normalfont pool}$ be some integer.
Then, if a set $\augref$ of size $p$ where 
\begin{equation}
p \geq \frac{2\tau k \log (k / \delta)}{(1-\varepsilon)^2}
\end{equation}
is sampled uniformly randomly from $\augps$, then with probability greater than $1 - \delta$, for any $\augset \subseteq \augps$,
\begin{equation}
\big\| \ntk(\augset) - \ntk(\augset, \augref) \ntk(\augref)^{-1} \ntk(\augref, \augset) \big\|_2 \leq 
\lambda_{k+1}(\ntk(\augps)) \cdot \bigg( 1 + \frac{N_\text{\normalfont pool}}{\varepsilon p} \bigg) \triangleq \hat{c}
\end{equation}
where $\lambda_{k}(\ntk(\augps))$ is the $k$th largest eigenvalue of $\ntk(\augps)$.
\end{prop}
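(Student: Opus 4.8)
The plan is to prove the bound first for the full pool — the special case $\augset = \augps$ — using a standard spectral-norm error bound for the Nyström approximation under uniform column sampling, and then to transfer it to every $\augset \subseteq \augps$ by a principal-submatrix argument. The essential observation is that the high-probability event in the statement concerns only the random reference set $\augref$, not $\augset$; so a \emph{single} application of the sampling bound controls all $\augset$ at once, with no additional union bound over subsets.

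For the transfer step, write $W \triangleq \ntk(\augref)$ and permute coordinates so that the indices of $\augref$ come first, so that $\ntk(\augps)$ has top-left block $W$, off-diagonal block $B \triangleq \ntk(\augref, \augps\setminus\augref)$, and bottom-right block $D \triangleq \ntk(\augps\setminus\augref)$. A direct computation then gives
\begin{equation}
E \triangleq \ntk(\augps) - \ntk(\augps,\augref)\, W^{-1}\, \ntk(\augref,\augps) = \begin{pmatrix} \vzero & \vzero \\ \vzero & D - B^\top W^{-1} B \end{pmatrix},
\end{equation}
and since $\ntk(\augps) \succeq 0$ its Schur complement $D - B^\top W^{-1} B$ is positive semi-definite, so $E \succeq 0$ (if $W$ is singular, replace the inverse by a pseudoinverse and the same conclusion holds). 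Now fix any $\augset \subseteq \augps$. Since $\ntk(\augset,\augref)$ is merely the row-restriction of $\ntk(\augps,\augref)$ to the indices of $\augset$, one checks that $\ntk(\augset) - \ntk(\augset,\augref)\, W^{-1}\, \ntk(\augref,\augset)$ is exactly the principal submatrix $E_{\augset,\augset}$ of $E$ (this identity is valid even when $\augset$ and $\augref$ overlap, in which case the corresponding rows and columns of $E$ vanish). Because $E \succeq 0$, the spectral norm of any of its principal submatrices is at most $\|E\|_2$, hence $\|\ntk(\augset) - \ntk(\augset,\augref)\, W^{-1}\, \ntk(\augref,\augset)\|_2 \leq \|E\|_2$ for every $\augset \subseteq \augps$.

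It then suffices to bound $\|E\|_2$, which is precisely the spectral-norm error of the Nyström approximation of $\ntk(\augps)$ obtained from $p$ columns sampled uniformly at random. For this I would invoke the standard coherence-based guarantee for this regime \citep{williamsUsingNystromMethod2000}: provided $p \geq 2\tau k \log(k/\delta)/(1-\varepsilon)^2$, with probability at least $1-\delta$ over the draw of $\augref$ one has $\|E\|_2 \leq \lambda_{k+1}(\ntk(\augps))\big(1 + N_\text{\normalfont pool}/(\varepsilon p)\big)$. On that same event, combining with the submatrix bound from the previous paragraph gives the stated inequality simultaneously for all $\augset \subseteq \augps$. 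I expect the main obstacle to be making the transfer step airtight — verifying that the Nyström residual of a sub-block is itself the corresponding sub-block of the full-pool Nyström residual, which is what removes any extra failure probability from the ``for any $\augset$'' quantifier; once that is established, the bound on $\|E\|_2$ is essentially a black-box citation.
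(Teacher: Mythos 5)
Your proposal is correct and takes essentially the same route as the paper's own proof: a black-box spectral-norm Nystr\"om guarantee for uniform sampling applied once to the full pool (the paper invokes Theorem 2 of Gittens (2011), which is the right source for the coherence-based bound, rather than Williams--Seeger), followed by the observation that the residual for any $\augset \subseteq \augps$ is a principal submatrix of the full-pool residual and hence has no larger spectral norm, with no union bound over subsets needed. Your additional Schur-complement/PSD justification of the transfer step is sound but not required, since any submatrix of any matrix has spectral norm at most that of the full matrix.
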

\begin{proof}
Let $D(\augset) = \ntk(\augset) - \ntk(\augset, \augref) \ntk(\augref)^{-1} \ntk(\augref, \augset)$. Consider the case where $\augset = \augps$. Then, based directly from Thm. 2 in \cite{gittensSpectralNormError2011}, we can show that $\| D(\augset) \|_2 \leq \lambda_\text{min}(\ntk(\augps)) \cdot (1 + N_\text{\normalfont pool} / \varepsilon p)$. 

To apply this to every $\augset$, note that since $\augset \subseteq \augps$, it means $D(\augset)$ will be a submatrix of $D(\augps)$, and hence it is the case that $\|D(\augset)\|_2 \leq \|D(\augps)\|_2$, which completes the proof.
\end{proof}

A few remarks are in order.

\begin{remark}
The bound in \cref{corr:nystrom-good} depends on the $k$th smallest eigenvalue of $\ntk(\augps)$. From both empirical \citep{kopitkovNeuralSpectrumAlignment2020,wangWhenWhyPINNs2022} and theoretical \citep{vakiliUniformGeneralizationBounds2021} studies of eNTKs, we find that the eigenvalue of eNTKs tend to decay quickly, meaning as long as our pool $\augps$ is large enough and we let $k$ be a large enough value, we would be able to achieve an approximation error which is arbitrarily small.
\end{remark}

\begin{remark}
In \cref{corr:nystrom-good} (and throughout the paper) we assume that the set $\augref$ is constructed by uniform random sample. As a result, the number of points $c$ that the bound requires is higher than what would be needed under other sampling schemes \citep{gittensRevisitingNystromMethod2013,wangScalableKernelKMeans2019}, which we will not cover here.
The version of \cref{corr:nystrom-good} only serves to show that as long as we use enough points in $\augref$, the Nystrom approximation can achieve an arbitrarily small error in the eNTK approximation.
\end{remark}

We now attempt to link the difference in Frobenius norm proven in \cref{corr:nystrom-good} to the difference between $\alpha(\augset)$ and $\hat{\alpha}(\augset)$.

\begin{prop}
\label{corr:alpha-alignment}
Let the conditions in \cref{corr:nystrom-good}. For a given $\augset \subseteq \augps$, $|\alpha(\augset) - \hat{\alpha}(\augset)| \leq \hat{c} \|\res{t}{\augset}\|_2^2$.
\end{prop}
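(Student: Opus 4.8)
The plan is to reduce the statement directly to the spectral-norm bound of \cref{corr:nystrom-good}, by writing both $\alpha(\augset)$ and $\hat\alpha(\augset)$ as quadratic forms in the residual vector $\res{t}{\augset}$ whose ``middle'' matrices are, respectively, the exact Gram matrix $\ntk_t(\augset)$ and its Nystrom approximation through $\augref$; the difference then becomes a quadratic form in the Nystrom error matrix, whose operator norm has already been controlled.

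First I would record the two quadratic-form identities. For $\hat\alpha$, the rewriting already carried out at the start of \cref{appx:approx-error} gives $\hat\alpha(\augset) = \res{t}{\augset}^\top\, \ntk_t(\augset,\augref)\, \ntk_t(\augref)^{-1}\, \ntk_t(\augref,\augset)\, \res{t}{\augset}$. For $\alpha$, I would repeat the manipulation used in the corollary following \cref{lemma:param-diff}: substituting the Mercer decomposition \cref{eq:eigK} into $\alpha(\augset) = \sum_i \lambda_{t,i}\, a_{t,i}(\augset)^2$ and collapsing the sum (using $\sum_i \lambda_{t,i}\, \psi_{t,i}(\augset)\psi_{t,i}(\augset)^\top = \ntk_t(\augset)$) exactly as in that corollary, which yields $\alpha(\augset) = \res{t}{\augset}^\top\, \ntk_t(\augset)\, \res{t}{\augset}$. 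Subtracting, $\alpha(\augset) - \hat\alpha(\augset) = \res{t}{\augset}^\top D(\augset)\, \res{t}{\augset}$, where $D(\augset) = \ntk_t(\augset) - \ntk_t(\augset,\augref)\, \ntk_t(\augref)^{-1}\, \ntk_t(\augref,\augset)$ is exactly the Nystrom error matrix appearing in \cref{corr:nystrom-good}.

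Next, since $D(\augset)$ is symmetric (every block of the eNTK is a Gram matrix), the elementary inequality $|v^\top M v| \leq \|M\|_2\, \|v\|_2^2$ for symmetric $M$, applied with $v = \res{t}{\augset}$, gives $|\alpha(\augset) - \hat\alpha(\augset)| \leq \|D(\augset)\|_2\, \|\res{t}{\augset}\|_2^2$. Finally \cref{corr:nystrom-good}, whose hypotheses are assumed in the present statement, bounds $\|D(\augset)\|_2 \leq \hat c$ uniformly over all $\augset \subseteq \augps$, which closes the argument. I do not anticipate a genuine obstacle here: the only points that need care are the bookkeeping of the inner product entering the expansion coefficients $a_{t,i}$, so that the quadratic-form identity for $\alpha$ comes out cleanly (as it does in the corollary after \cref{lemma:param-diff}), and the observation — already made inside the proof of \cref{corr:nystrom-good} — that the per-subset bound is uniform because $D(\augset)$ is a principal submatrix of $D(\augps)$.
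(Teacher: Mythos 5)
Your proposal is correct and follows essentially the same route as the paper's proof: both express $\alpha(\augset)$ and $\hat\alpha(\augset)$ as quadratic forms $\res{t}{\augset}^\top \ntk_t(\augset)\res{t}{\augset}$ and $\res{t}{\augset}^\top \ntk_t(\augset,\augref)\ntk_t(\augref)^{-1}\ntk_t(\augref,\augset)\res{t}{\augset}$, bound the difference by $\|D(\augset)\|_2\,\|\res{t}{\augset}\|_2^2$, and invoke \cref{corr:nystrom-good}. If anything, your statement of the intermediate inequality is cleaner than the paper's (which misattributes it to the Frobenius norm and carries a stray square on the matrix norm), but the argument is the same.
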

\begin{proof}
It applies directly from \cref{corr:nystrom-good} that
\begin{align}
|\alpha(\augset) - \alpha'(\augset)|
&= \big| \res{t}{\augset}^\top  \big( \ntk(\augset) - \ntk(\augset, \augref) \ntk(\augref)^{-1} \ntk(\augref, \augset) \big) \res{t}{\augset} \big| \\
&\leq \|\res{t}{\augset}\|_2^2 \big\| \ntk(\augset) - \ntk(\augset, \augref) \ntk(\augref)^{-1} \ntk(\augref, \augset) \big\|_2^2 \\
&\leq \hat{c} \|\res{t}{\augset}\|_2^2 
\end{align}
where the second inequality follows from the definition of Frobenius norm.
\end{proof}

\section{Extended Discussion on \alg Algorithm}

\subsection{Discussion of the Point Selection Methods}
\label{appx:alg-desc}

\newpara{\algs.} 
Theoretically, \algs is better motivated than \algk. This is because \algs involves sampling from an implicit distribution (whose p.d.f. is proportional to $\hat{\alpha}$), and therefore is more directly compatible with assumptions required in \cref{thm:generalisation}, where $\activeset$ is assumed to be i.i.d. samples from some unknown distribution. 

In theory, \algs is also sensitive to the ratio of the pool of points. For example, regions with many duplicate points will be more likely to be selected during sampling due to more of such points in the pool. This, however, is an inherent characteristic of many algorithms which selects a batch of points using sampling or greedy selection methods. Despite this, in our experiments we select a pool size that is around $4\times$ as large as the selected points budget in each round, which encourages enough diversity for the algorithm, while still allowing the algorithm to focus in on particular regions that may help with training. Future work could be done to make \algs more robust to pool size, through using sampling techniques such as Gibbs sampling which would make the method independent of pool size.

\newpara{\algk.} 
For the \textsc{K-Means++} sampling mechanism, we use Euclidean distance for the embedded vectors. This is similar to the method used in BADGE \cite{ashDeepBatchActive2020}. Each dimension of the embedded vector will represent a certain eigenfunction direction (where a larger value represents a stronger alignment in this direction). Therefore, points that are selected in this method will tend to be pointing in different directions (i.e., act on different eigenfunction direction from one another), and also encourage selecting point that have high magnitude (i.e., high change in residual, far away from other points that has less effect on residual change).

We demonstrate this point further in \cref{fig:kmeans}. In \cref{fig:kmeans-emb} where we plot the vector embedding representations, we see that (i) the points with smaller convergence degree (i.e., lower norm in embedded space) tends to be "clustered" together close to the origin, meaning fewer of the points will end up being selected; (ii) the dimensions in the embedding space that correspond to the dominant eNTK eigenfunctions (i.e., higher eigenvalues) are more ``stretched out", leading to coefficients that are more spread out, making points whose embedding align with these eigenfunctions to be further away from the other points. These two effects creates the "outlier" points that are more widely spread out due to having higher convergence degrees are preferentially selected. \cref{fig:kmeans-norm} explicitly sorts the points based on their convergence degrees, and shows that \algk mainly selects points that have much higher convergence degrees. 

\begin{figure}[ht]
\centering
\begin{subfigure}{0.66\textwidth}
\centering
\caption{Embedding of $\big(\hat{\lambda}_{t,i}^{1/2} \hat{a}_{t,i}(z)\big)_{i=1}^{p}$ for different values of $p$}
\label{fig:kmeans-emb}
\includegraphics[height=4cm]{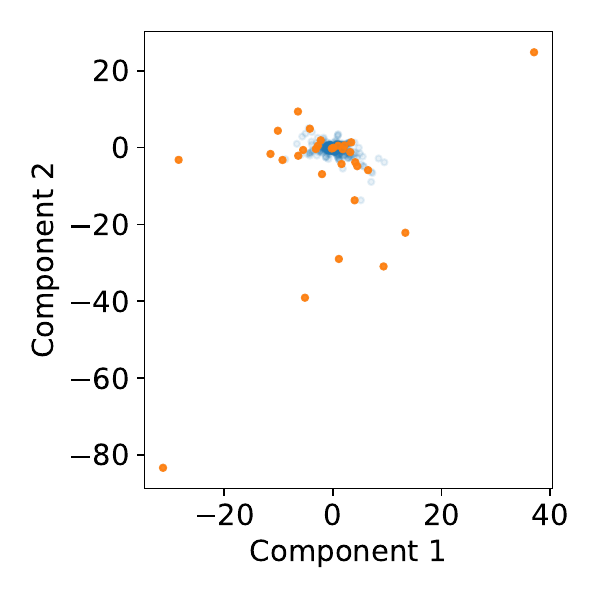}
\includegraphics[height=4cm]{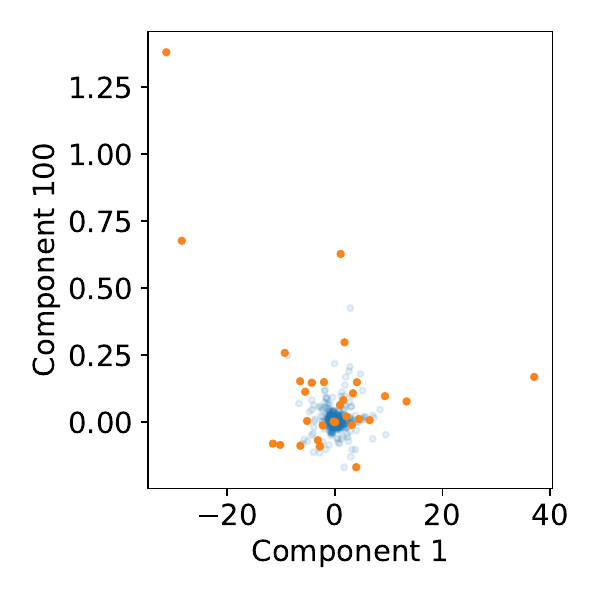}
\end{subfigure}
\begin{subfigure}{0.33\textwidth}
\centering
\caption{Value of $\hat{\alpha}(z)$}
\label{fig:kmeans-norm}
\includegraphics[height=4cm]{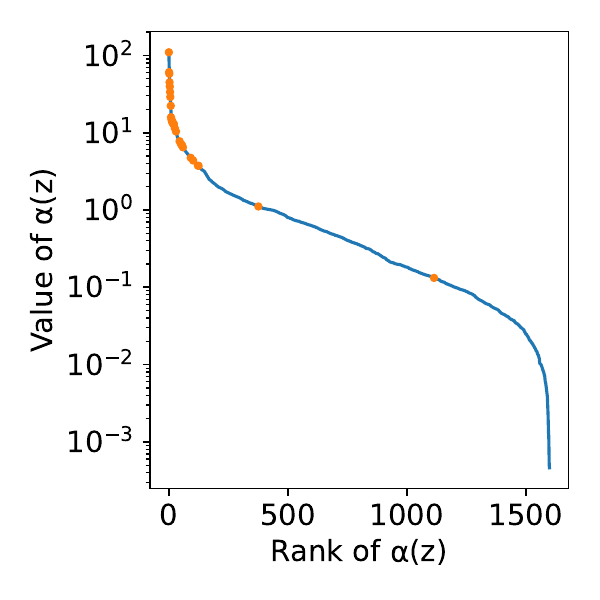}
\end{subfigure}
\caption{Illustrations of the K-Means++ sampling mechanism. In each plots, the blue points are the data pool, while the orange point represents the selected points by the K-Means++ mechanism. \cref{fig:kmeans-emb} plots the embedding in two of the eigenfunction components, while \cref{fig:kmeans-norm} plots the range of the individual convergence degree of each points. The embedding is obtained on the 1D Burger's equation problem after 100k trained steps, and the point selection process aims to select out 30 training points (note that this is fewer than the actual training set selected -- we select fewer points to achieve a clearer illustration of the method).}
\label{fig:kmeans}
\end{figure}

\subsection{Training Point Retention Mechanism}
\label{appx:mem}

We briefly describe how training points chosen in previous rounds can be handled during training. While it is possible to retain all of the training points ever selected, this can lead to high computational cost as the training set grows. Furthermore, past results \citep{wuComprehensiveStudyNonadaptive2022} have also shown that retaining all of the training point could lead to worsen performances.

Alternatively, we can have a case where none of the training points are retained throughout training. This is done in the case of RAD algorithm \citep{wuComprehensiveStudyNonadaptive2022}. However, in practice, we find that this can lead to catastrophic forgetting, where the PINN forgets regions where training used to be at. As a result, in \alg, we decide to retain a number of training points from the previous round when choosing the next training set. In \cref{fig:appx-pinnacle-ablation-s,fig:appx-pinnacle-ablation-k}, we see that the performance of \alg consistently drops when this mechanism is not used, demonstrating the usefulness of retaining past training points.

\subsection{Training Termination via eNTK Discrepancy}

In \alg, we also check the eNTK and perform training point selection again when the eNTK changes too much. This corresponds to when the chosen set of training points are no longer optimal given the PINN training dynamics. In \cref{fig:appx-pinnacle-ablation-s,fig:appx-pinnacle-ablation-k}, we show that auto-triggering of AL phase can help improve model training in some cases (in particular, for the Advection equation with \algk as seen in \cref{abl-ka}). However, in practice, we also find that this is often less useful in practice, since as training progresses, the eNTK does not change by much and so the AL process will unlikely be triggered before the maximum training duration $T$ anyway.

\subsection{Pseudo-Outputs used for \exppt Point Selection}
\label{appx:pseudol}
The residual term $\res{t}{\augelt}$ can be computed exactly in the case for \colpt points since the PDE and IC/BC conditions are completely known (except for the inverse problem where the PDE constants $\beta$ will need to be the estimated values at that point in training), and is independent of the true solution. However, in the case of \exppt points, $\res{t}{\augpt{x}{\iop}}$ will depend on the true output label, which is unknown during training. 
To solve this problem, we approximate the true solution $\soln$ by training the NN at the time of the point selection $\nnt{t}$ for $T$ steps beyond that timestep, and use the trained network $\nnt{t+T}$ as the pseudo-output. This works well in practice as the magnitude of the residual term roughly reflects how much error the model may have at that particular point based on its training dynamics. In our experiments, we set $T=100$.

\begin{figure}[ht]
\centering
\begin{subfigure}{0.48\textwidth}
\centering
\caption{1D Advection}
\label{abl-sa}
\includegraphics[height=4cm]{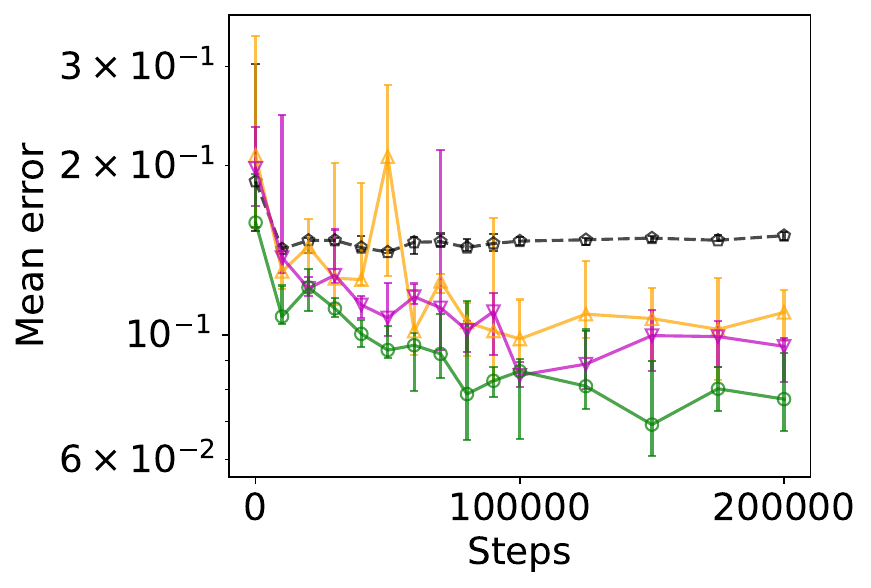}
\end{subfigure}
\begin{subfigure}{0.48\textwidth}
\centering
\caption{1D Burger's}
\label{abl-sb}
\includegraphics[height=4cm]{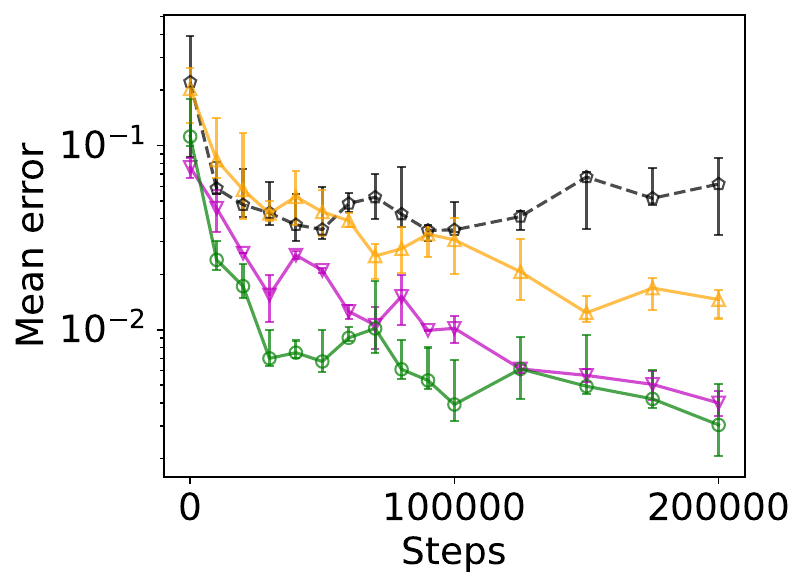}
\end{subfigure}
\includegraphics[width=.6\linewidth]{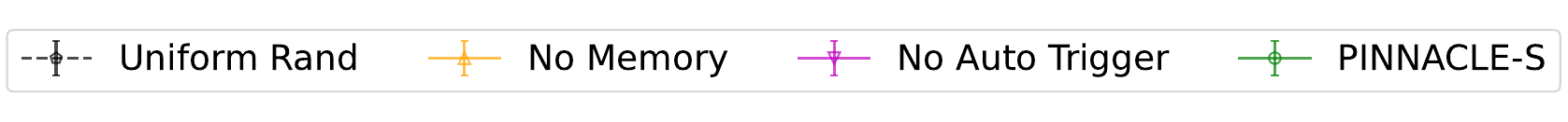}
\caption{Performance of \algs on the various forward problems when different components of the algorithm are removed.}
\label{fig:appx-pinnacle-ablation-s}
\end{figure}

\begin{figure}[ht]
\centering
\begin{subfigure}{0.48\textwidth}
\centering
\caption{1D Advection}
\label{abl-ka}
\includegraphics[height=4cm]{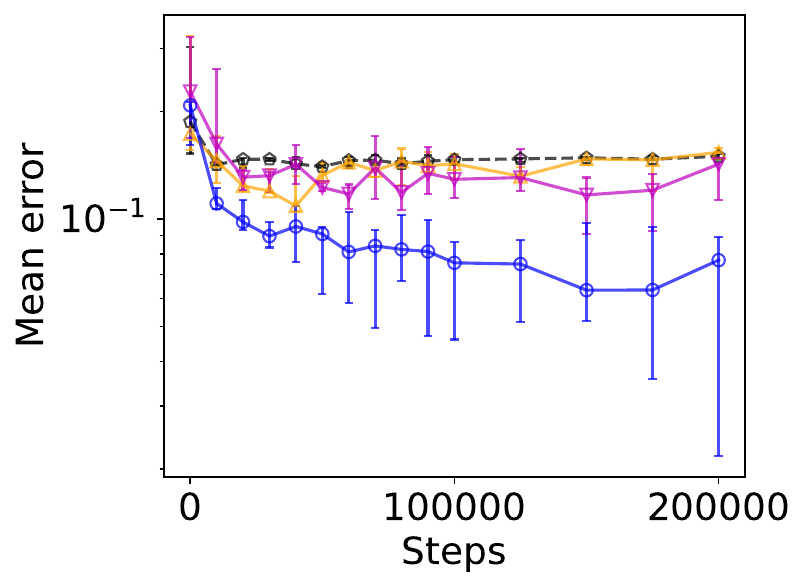}
\end{subfigure}
\begin{subfigure}{0.48\textwidth}
\centering
\caption{1D Burger's}
\label{abl-kb}
\includegraphics[height=4cm]{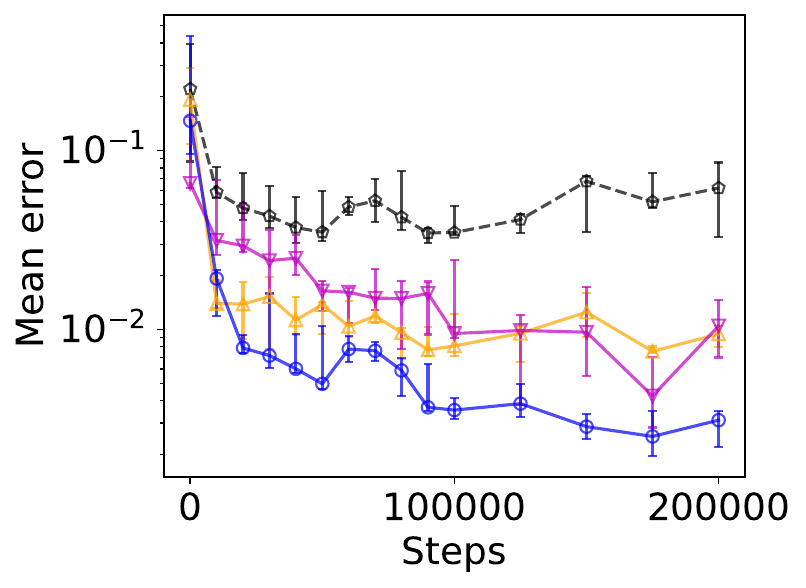}
\end{subfigure}
\includegraphics[width=.6\linewidth]{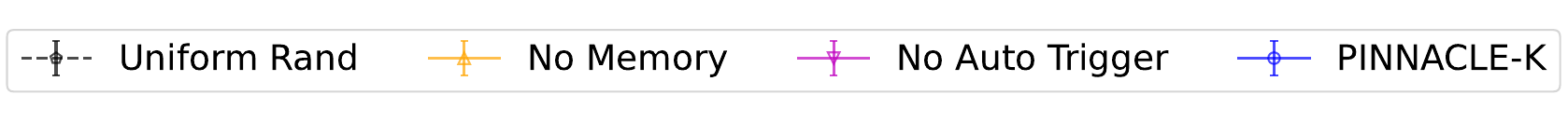}
\caption{Performance of \algk on the various forward problems when different components of the algorithm are removed.}
\label{fig:appx-pinnacle-ablation-k}
\end{figure}

\section{Experimental Setup}
\label{appx:exp}

\subsection{Description of PDEs Used in the Experiments}
\label{appx:pde-used}

In this section, we further describe the PDEs used in our benchmarks.

\begin{itemize}

\item \textbf{1D Advection equation.} The Advection PDE describes the movement of a substance through some fluid. This is a simple linear PDE which is often used to verify performance of PINNs in a simple and intuitive manner due to the easily interpretable solution. The 1D Advection PDE is given by
\begin{equation}
\frac{\partial \soln}{\partial t} - \beta \frac{\partial \soln}{\partial x} = 0, 
\quad x\in (0, L), \ t\in (0, T]
\end{equation}
with initial condition $\soln(x, 0) = \soln_0(x)$ and periodic BC $\soln(0, t) = \soln(L, t)$ is used. The closed form solution is given by $\soln(x, t) = \soln_0(x - \beta t)$. To compare with the PDE from \cref{eq:pde}, we have $\pde[u, \beta] = \frac{\partial u}{\partial t} - \beta \frac{\partial u}{\partial x}$ and $f(x, t) = 0$. Meanwhile, for the boundary conditions, the IC can be written as $\bc_1[u] = u$ and $g_1(x, t) = u_0(x)$, while the BC can be written as $\bc_2[u](x, t) = u(0, t) - u(L, t)$ and $g_2(x, t) = 0$.

In our forward problem experiments, we use $\beta = 1$ with a irregularly shaped IC from \textsc{PDEBench} \citep{takamotoPDEBENCHExtensiveBenchmark2023a}. 
In the timing experiment (in \cref{appx:time-for}), we use $\beta = 5$ with a sinusoidal initial condition, as past works have highlighted that the 1D Advection problem with high $\beta$ values is a challenging problem setting \citep{krishnapriyanCharacterizingPossibleFailure2021}, which is possible for the timing experiments given that we provide more \colpt points to the benchmark algorithms. 

\item \textbf{1D Burger's equation.} Burger's equation is an equation that captures the process of convection and diffusion of fluids. The solution sometimes can exhibit regions of discontinuity and can be difficult to model. The 1D Burger's equation is given by
\begin{equation}
\frac{\partial \soln}{\partial t} + \frac{1}{2}\frac{\partial \soln^2}{\partial x} = \nu\frac{\partial^2 \soln}{\partial x^2}, 
\quad x\in (0, L), \ t\in (0, T]
\end{equation}
with initial condition $\soln(x, 0) = \soln_0(x)$ and periodic BC $\soln(0, t) = \soln(L, t)$ is applied. In our experiments, we used the dataset from \citet{takamotoPDEBENCHExtensiveBenchmark2023a} with $\nu = 0.02$.

\item \textbf{1D Diffusion Equation.} We aim to learn a neural network which solves the equation
\begin{equation}
\frac{\partial u}{\partial t} = \frac{\partial^2 u}{\partial x^2} - e^{-2t} \sin(4\pi x)\ \big(2 - 4\pi^2)
\end{equation}
with initial conditions $u(x, 0) = \sin(4\pi x)$ and Dirichlet boundary conditions $u(-1, t) = y(1,t) = 0$. The reference solution in this case is $u(x, t) = e^{-2t} \sin(4\pi x)$.

In the experiments, we solve this problem by applying a hard constraint \citep{712178}. To do so, we train a NN $\nn'(x, t)$, however then transform the final solution according to $\nn(x, t) = t(1-x^2) \nn'(x, t) + \sin (4\pi x)$.

\item \textbf{1D Korteweg-de Vries (KdV) Equation.} We aim to learn the solution for the equation 
\begin{equation}
\frac{\partial u}{\partial t} + \lambda_1 u \frac{\partial u}{\partial x} + \lambda_2 \frac{\partial^3 u}{\partial x^3}
\end{equation}
with initial condition $u(x, 0) = \cos(\pi x)$ and periodic boundary condition $u(-1, t) = u(1, t)$. We solve the forward problem instance of the problem, where we set $\lambda_1 = 1$ and $\lambda_2 = 0.0025$.

\item \textbf{2D Shallow Water Equation.} This is an approximation of the Navier-Stokes equation for modelling free-surface flow problems. The problem consists of a system of three PDEs modelling four outputs: the water depth $h$, the wave velocities $u$ and $v$, and the bathymetry $b$. In this problem, we are only interested in the accuracy of $h(x, y, t)$. We use the dataset as proposed by \citet{takamotoPDEBENCHExtensiveBenchmark2023a}.

\item \textbf{2D Navier-Stokes Equation.} This is a PDE with important industry applications which is used to describe motion of a viscous fluid. In this paper, we consider the incompressible variant of the Navier-Stokes equation based on the inverse problem from \citet{raissiPhysicsinformedNeuralNetworks2019}, where we consider the fluid flow through a cylinder. The output exhibits a vortex pattern which is periodic in the steady state. The problem consists of the velocities $u(x, y, t)$ and $v(x, y, t)$ in the two spatial directions, and the pressure $p(x, y, t)$. The three functions are related through a system of PDEs given by
\begin{align}
\frac{\partial u}{\partial t} + \lambda_1 \bigg( u\frac{\partial u}{\partial x} + v \frac{\partial u}{\partial y} \bigg) &= -\frac{\partial p}{\partial x} + \lambda_2 \bigg( \frac{\partial^2 u}{\partial x^2} + \frac{\partial^2 u}{\partial y^2} \bigg), \\
\frac{\partial v}{\partial t} + \lambda_1 \bigg( u\frac{\partial v}{\partial x} + v\frac{\partial v}{\partial y} \bigg) &= -\frac{\partial p}{\partial y} + \lambda_2 \bigg( \frac{\partial^2 v}{\partial x^2} + \frac{\partial^2 v}{\partial y^2} \bigg).
\end{align}
In the inverse problem experiments, we assume we are able to observe $u$ and $v$, and our goal is to compute the constants $\lambda_1$ and $\lambda_2$, as well as the pressure $p(x, y, u)$. There are no IC/BC conditions in this case.

\item \textbf{3D Eikonal Equation.} The Eikonal equation is defined by 
\begin{equation}
\| \nabla T(x) \| = \frac{1}{v(x)}
\end{equation}
with $T(x_0) = 0$ for some $x_0$. For the inverse problem case, we are allowed to observe $T(x)$ at different values of $x$, and the goal is to reconstruct the function $v(x)$. 

\end{itemize}

The budgets allocated for each types of problems are as listed below.

\begin{table}[H]
\centering
\begin{tabular}{|c|ccc|}
\hline
Equation (Problem Type) & Training steps & CL points budget & \exppt points query rate \\
\hline
1D Advection (Fwd) & 200k & 1000 & - \\
1D Burger's (Fwd) & 200k & 300 & - \\
1D Diffusion (Fwd) & 30k & 100 & - \\
1D KdV (Fwd) & 100k & 300 & - \\
1D Shallow Water (Fwd) & 100k & 1000 & - \\
2D Fluid Dynamics (Inv) & 100k & 1000 & 30 per 1k steps \\
3D Eikonal (Inv) & 100k & 500 & 5 per 1k steps \\
1D Advection (FT) & 200k & 200 & - \\
1D Burger's (FT) & 200k & 200 & - \\
\hline
\end{tabular}
\end{table}

\subsection{Detailed Experimental Setup}

All code were implemented in \textsc{Jax} \citep{jax2018github}. This is done so due to its efficiency in performing auto-differentiation. To make \textsc{DeepXDE} and \textsc{PDEBench} compatible with \textsc{Jax}, we made modifications to the experimental code. While \textsc{DeepXDE} does have some support for \textsc{Jax}, it remains incomplete for our experiments and we needed to add \textsc{Jax} code components for NN training and training point selection.

In each experiment, we use a multi-layer perceptron with the number of hidden layers and width dependent on the specific setting. Each experiment uses tanh activation, with some also using LAAF \citep{jagtapAdaptiveActivationFunctions2020}. The models are trained with the Adam optimizer, with learning rate of $10^{-4}$ for Advection and Burger's equation problem settings, and $10^{-3}$ for the others. We list the NN architectures used in each of the problem setting below.

\begin{table}[H]
\centering
\begin{tabular}{|c|ccc|}
\hline
Problem & Hidden layers & Hidden width & LAAF? \\
\hline
1D Advection (\textsc{PDEBench}, $\beta = 1$) & 8 & 128 & - \\
1D Advection ($\beta = 5$) & 6 & 64 & Yes \\
1D Burger's & 4 & 128 & - \\
1D Diffusion (with hard constraints) & 2 & 32 & - \\
1D Korteweg-de Vries & 4 & 32 & - \\
2D Shallow Water & 4 & 32 & - \\
2D Fluid Dynamics & 6 & 64 & Yes \\
3D Eikonal & 8 & 32 & Yes \\
\hline
\end{tabular}
\end{table}

The forward and inverse problem experiments in the main paper were repeated 10 rounds each, while the rest of the experiments are repeated 5 rounds each. In each graph shown in this paper, we report the median across these trials, and the error bars report the 20th and 80th percentile values. 

For experiments where computational time is not measured, we performed model training on a compute cluster with varying GPU configurations. 
In the experiment where we measure the algorithms' runtime in forward problem (i.e., experiments ran corresponding to \cref{appx:time-for}), we have trained each NN using one NVIDIA GeForce RTX 3080 GPU, and with Intel Xeon Gold 6326 CPU @ 2.90GHz, while for the inverse problem (i.e., experiments ran corresponding to \cref{appx:time-inv-eik,appx:time-inv-fd}), we trained each NN using one NVIDIA RTX A5000 GPU and with AMD EPYC 7543 32-Core Processor CPU.

\subsection{Description of Benchmark Algorithms}
\label{appx:exp-algs}

Below we list some of the algorithms that we ran experiments on. Note that due to space constraints, some of these experiments are not presented in the main paper, but only in the appendix.

We first list the non-adaptive point selection methods. All of these methods were designed to only deal with the selection of PDE CL points -- hence IC/BC CL points and \exppt points are all selected uniformly at random from their respective domains. The empirical performance of these methods have also been studied in \citet{wuComprehensiveStudyNonadaptive2022}

\begin{itemize}
\item \textsc{Uniform Random}. Points are selected uniformly at random.

\item \textsc{Hammersley}. Points are generated based on the Hammersley sequence. 

\item \textsc{Sobol}. Points are generated based on the Sobol sequence.
\end{itemize}
The latter two non-adaptive point selection methods are said to be \textit{low-discrepancy sequences}, which are often preferred over purely uniform random points since they will be able to better span the domain of interest. In the main paper we only report the results for \textsc{Hammersley} since it is the best pseudo-random point generation sequence as experimented by \citet{wuComprehensiveStudyNonadaptive2022}.

We now list the adaptive point selection methods that we have used.
\begin{itemize}
\item \textsc{RAD} \citep{wuComprehensiveStudyNonadaptive2022}. The PDE CL points are sampled such that each point has probability of being sampled proportional to the PDE residual squared. The remaining CL points are sampled uniformly at random. This is the variant of the adaptive point selection algorithm reported by \citep{wuComprehensiveStudyNonadaptive2022} which performs the best.

\item \textsc{RAD-All}. This is the method that we extended from \textsc{RAD}, where the IC/BC CL points are also selected such that they are also sampled at probability proportional to the square of the residual. \exppt points are selected using the same technique described in \cref{appx:pseudol}, which is comparable to expected model output change (EMOC) criterion used in active learning \citep{ranganath}.

\item \algs and \algk. This is our proposed algorithm as described in \cref{ssec:pinnacle}.
\end{itemize}

In cases where there are multiple types of CL points, for all of the methods that do not use dynamic allocation of CL point budget, we run the experiments where we fix the PDE CL points at 0.5, 0.8 or 0.95 of the total CL points budget, and divide the remaining budget equally amongst the IC/BC CL points. In the main paper, the presented results are for when the ratio is fixed to 0.8.

\section{Additional Experimental Results}

\subsection{Point Selection Dynamics of \alg}

In this section, we further consider the dynamics of the point selection process between different algorithms. \cref{fig:appx-dyn-adv}, we plot the point selection dynamics of various point selection algorithms in the 1D Advection equation problem. We can see that \textsc{Hammersley} will select points which best spans the overall space, however this is does not necessarily correspond to points which helps training the best. Meanwhile, \textsc{Rad-All} tends to focus its points on the non-smooth stripes of the solution, which corresponds to regions of higher residual. While this allows \textsc{Rad-All} to gradually build up the stripes, it does so in a less efficient manner than \alg. In the variants of \alg, we can see that the selected CL points gradually build upwards, which allows the stripe to be learnt more efficiently. We can also see that \algk selects points that are more diverse than \algs, which can be seen from the larger spread of CL points throughout training, with PDE CL points even attempting to avoid the initial boundary in Step 200k.

\begin{figure}[h]
\centering
\begin{subfigure}{0.48\linewidth}
\centering
\caption{\textsc{Hammersley}}
\includegraphics[width=0.95\textwidth]{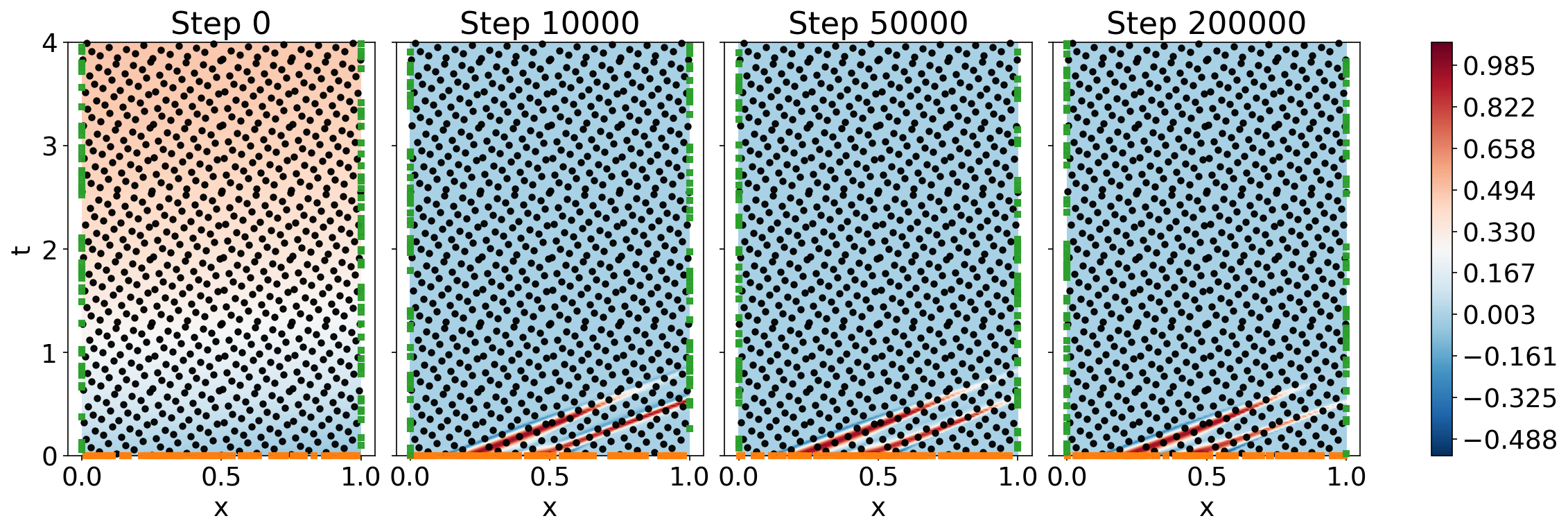}
\end{subfigure}
\begin{subfigure}{0.48\linewidth}
\centering
\caption{\textsc{RAD-All}}
\includegraphics[width=0.95\textwidth]{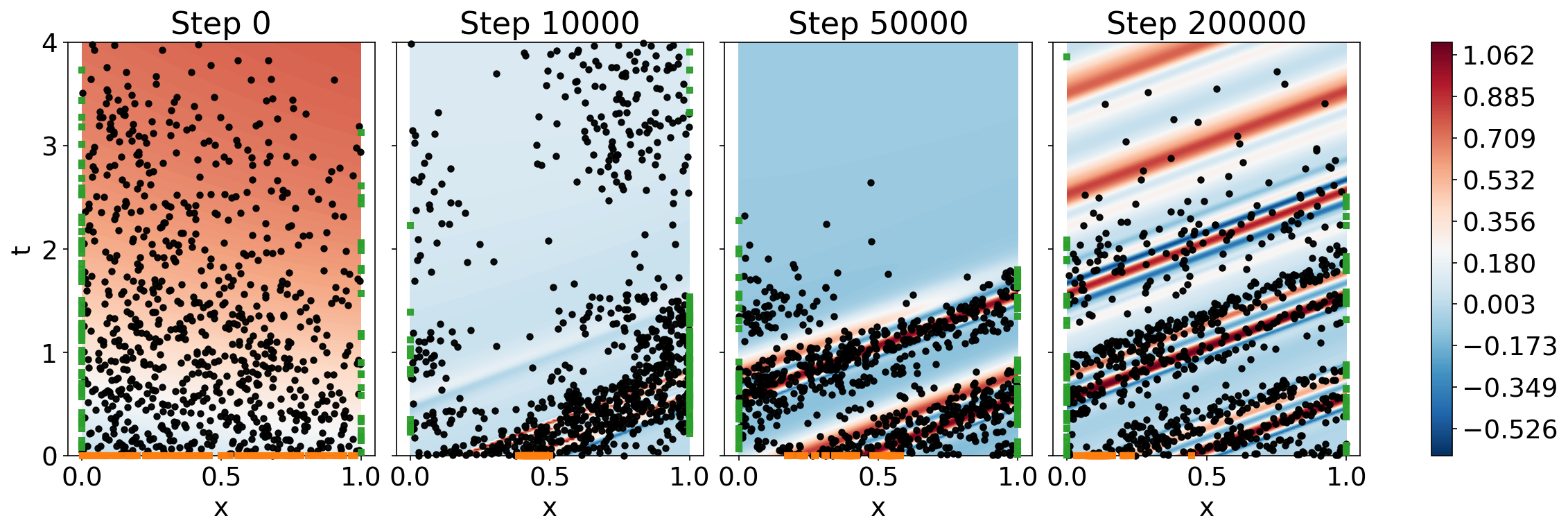}
\end{subfigure}
\\
\begin{subfigure}{0.48\linewidth}
\centering
\caption{\algs}
\includegraphics[width=0.95\textwidth]{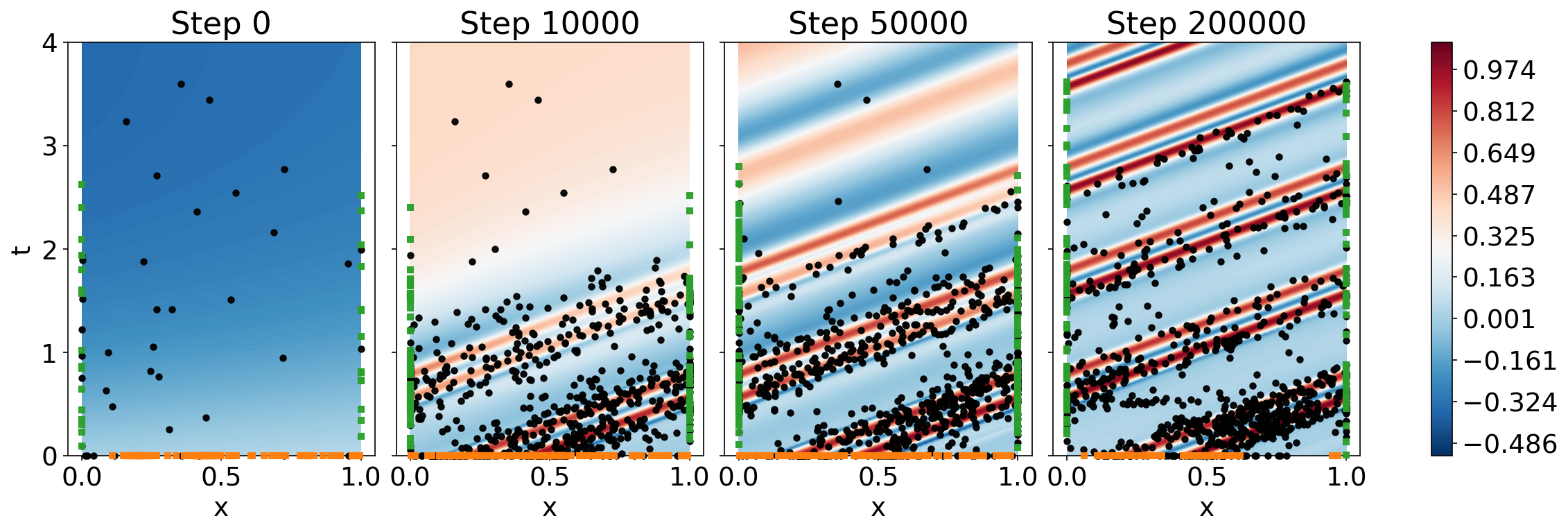}
\end{subfigure}
\begin{subfigure}{0.48\linewidth}
\centering
\caption{\algk}
\includegraphics[width=0.95\textwidth]{fig/results/conv-40/data_pred_PINNACLE-K.png}
\end{subfigure}
\includegraphics[width=.4\linewidth]{fig/labels_trainpts.pdf}
\caption{Training point selection dynamics for \alg in the 1D Advection equation forward problem experiments.}
\label{fig:appx-dyn-adv}
\end{figure}

In \cref{fig:appx-dyn-bur}, we plot the point selection dynamics of various point selection algorithms in the 1D Burger's equation problem. We can see that \textsc{Hammersley} peforms similar as in the Advection case, where it is able to span the space well but not able to guide model training well. Meanwhile, \textsc{Rad-All} selects points that are more spread out (that has high residual), but focuses less effort in regions near to the initial boundary or along the discontinuity in the solution, unlike \alg. The points selection being less concentrated along these regions may explain why \alg is able to achieve a better solution overall.

\begin{figure}[h]
\centering
\begin{subfigure}{0.48\linewidth}
\centering
\caption{\textsc{Hammersley}}
\includegraphics[width=0.95\textwidth]{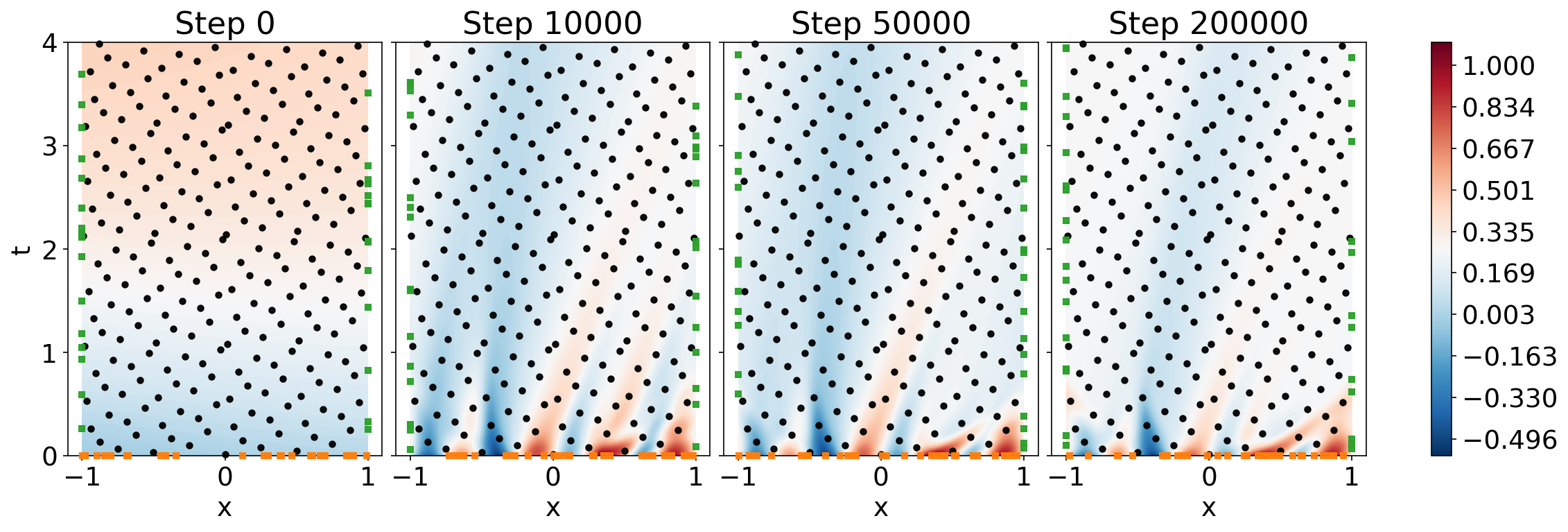}
\end{subfigure}
\begin{subfigure}{0.48\linewidth}
\centering
\caption{\textsc{RAD-All}}
\includegraphics[width=0.95\textwidth]{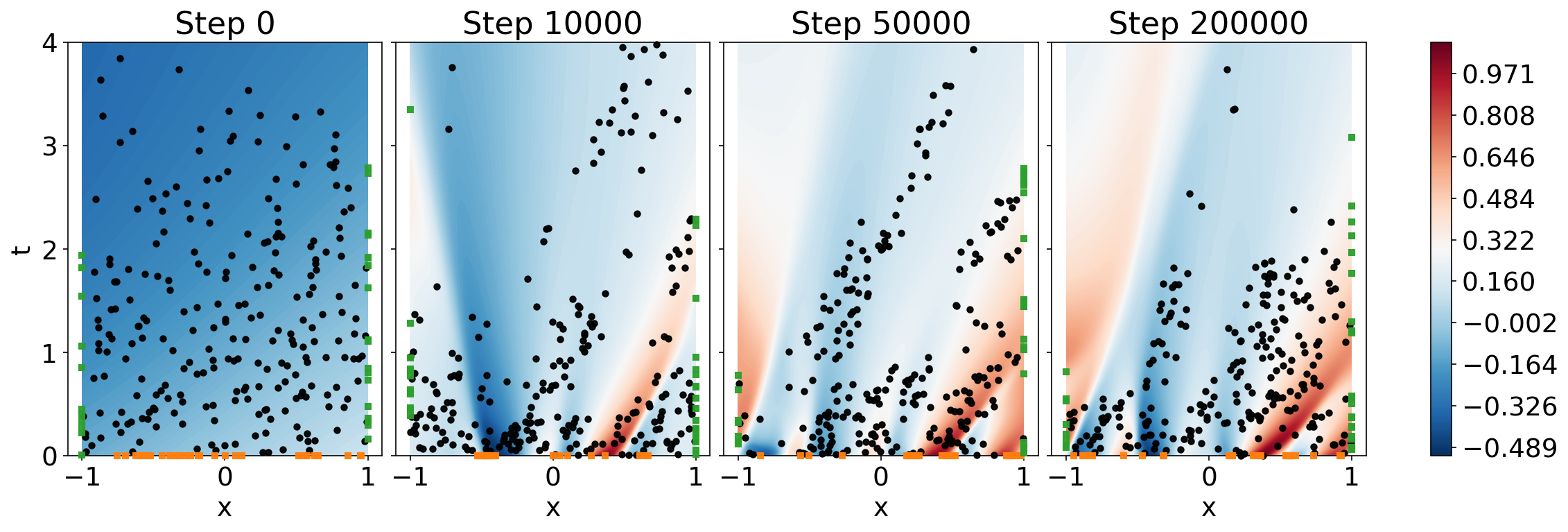}
\end{subfigure}
\\
\begin{subfigure}{0.48\linewidth}
\centering
\caption{\algs}
\includegraphics[width=0.95\textwidth]{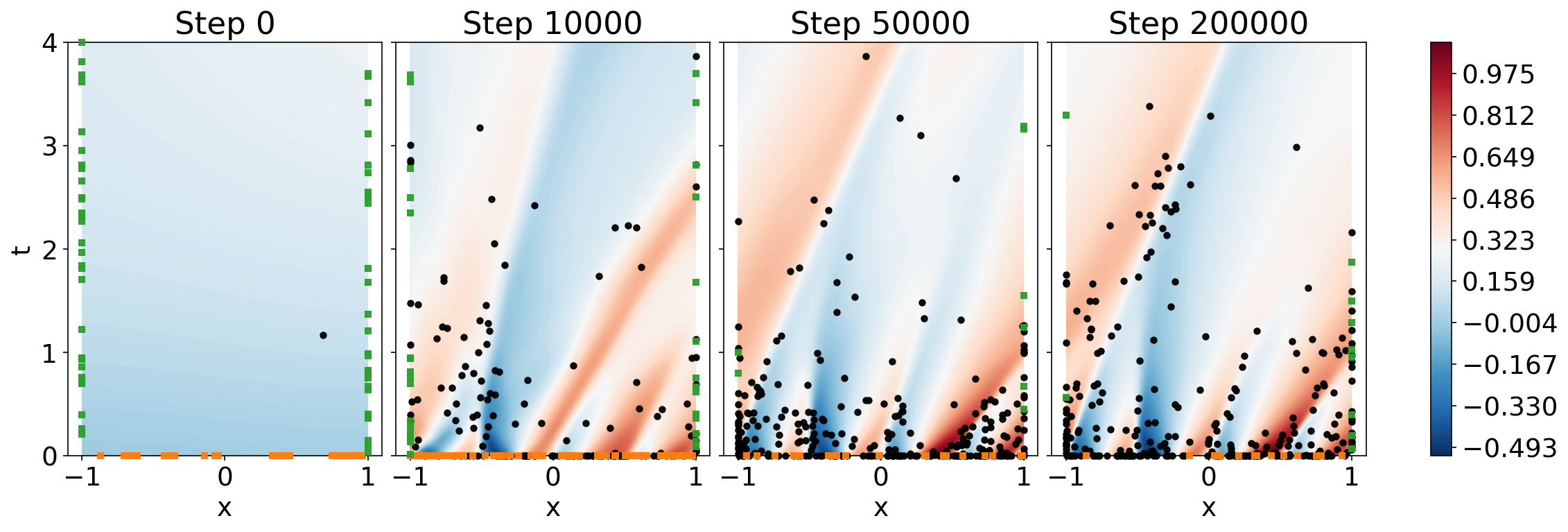}
\end{subfigure}
\begin{subfigure}{0.48\linewidth}
\centering
\caption{\algk}
\includegraphics[width=0.95\textwidth]{fig/results/burgers-20/data_pred_PINNACLE-K.png}
\end{subfigure}
\includegraphics[width=.4\linewidth]{fig/labels_trainpts.pdf}
\caption{Training point selection dynamics for \alg in the 1D Burger's equation forward problem experiments.}
\label{fig:appx-dyn-bur}
\end{figure}

\subsection{Forward Problems}
\label{ssec:appx-forward-prob}

\begin{figure}[h]
\centering
\begin{subfigure}{0.95\textwidth}
\centering
\caption{1D Advection Equation, 200k steps}
\resizebox{\textwidth}{!}{
\includegraphics[height=3.1cm]{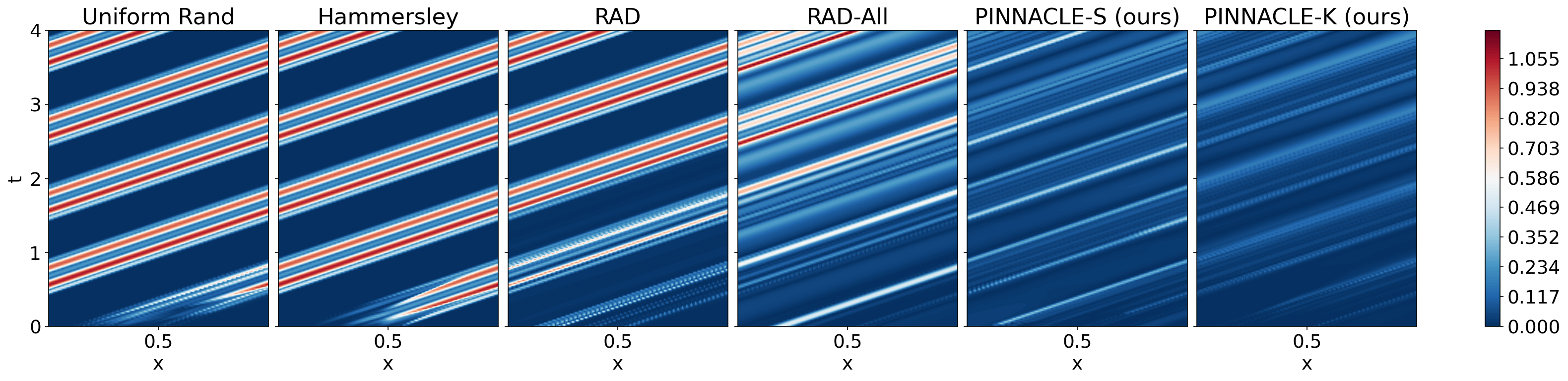}
}
\end{subfigure}
\\
\begin{subfigure}{0.95\textwidth}
\centering
\caption{1D Burger's Equation, 200k steps}
\resizebox{\textwidth}{!}{
\includegraphics[height=3.1cm]{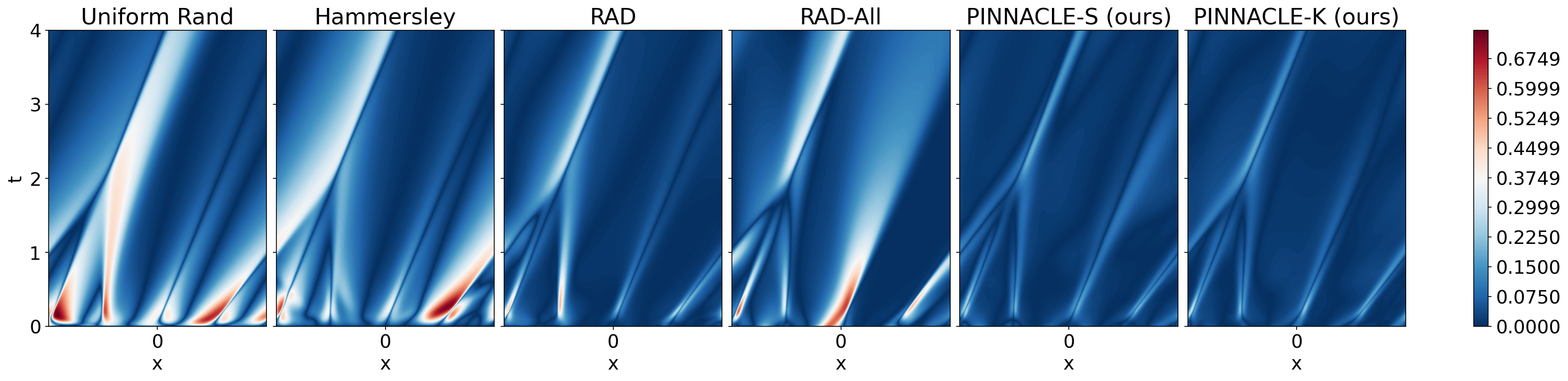}
}
\end{subfigure}
\caption{Predictive error for the forward problem experiments in \cref{fig:forward-example}.}
\label{fig:appx-fwd-err}
\end{figure}

\begin{figure}[h]
\centering
\begin{subfigure}{0.48\linewidth}
\centering
\caption{Non-adaptive methods}
\includegraphics[height=3.5cm]{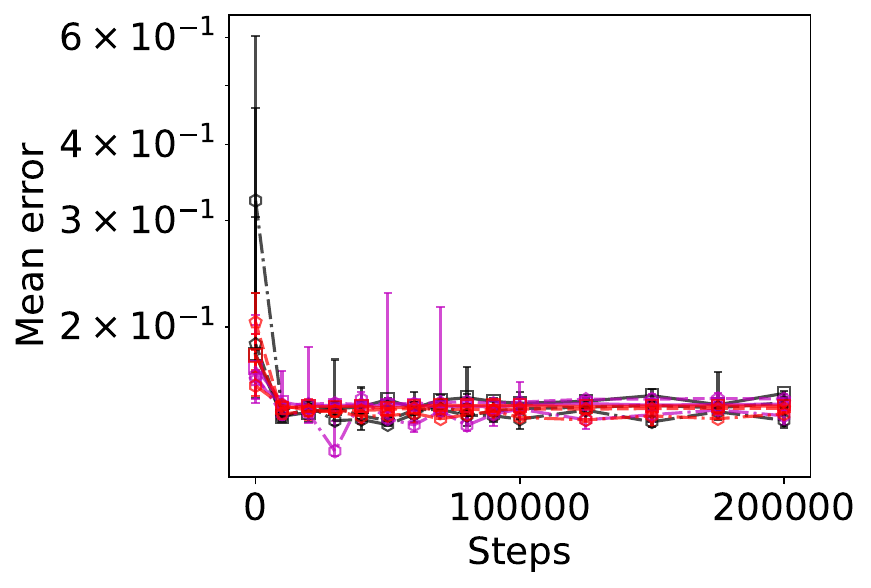}
\includegraphics[width=0.7\linewidth]{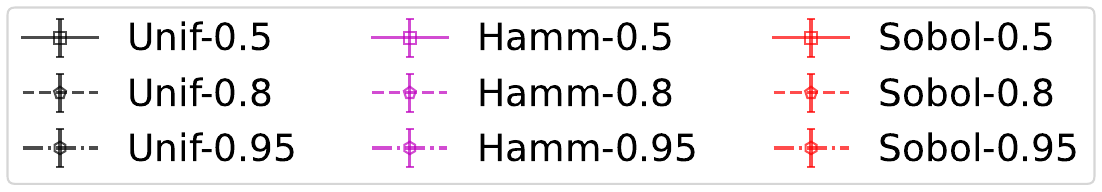}
\end{subfigure}
\begin{subfigure}{0.48\linewidth}
\centering
\caption{Adaptive methods}
\includegraphics[height=3.5cm]{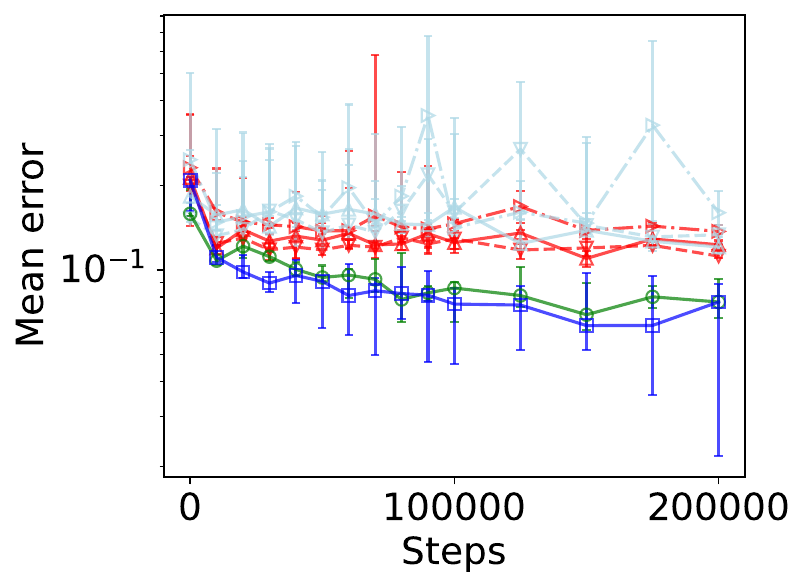}
\includegraphics[width=0.7\linewidth]{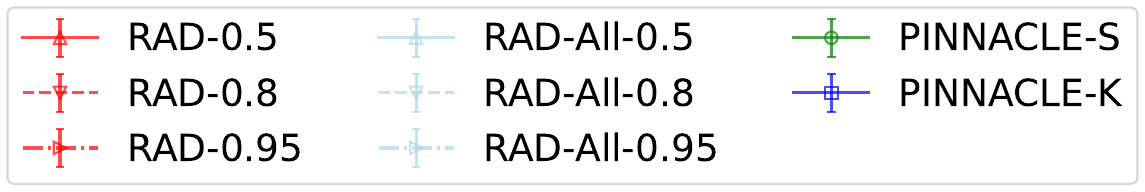}
\end{subfigure}
\caption{Further experimental results for 1D Advection equation forward problem.}
\label{fig:appx-adv}
\end{figure}

\begin{figure}[h]
\centering
\begin{subfigure}{0.48\linewidth}
\centering
\caption{Non-adaptive methods}
\includegraphics[height=3.5cm]{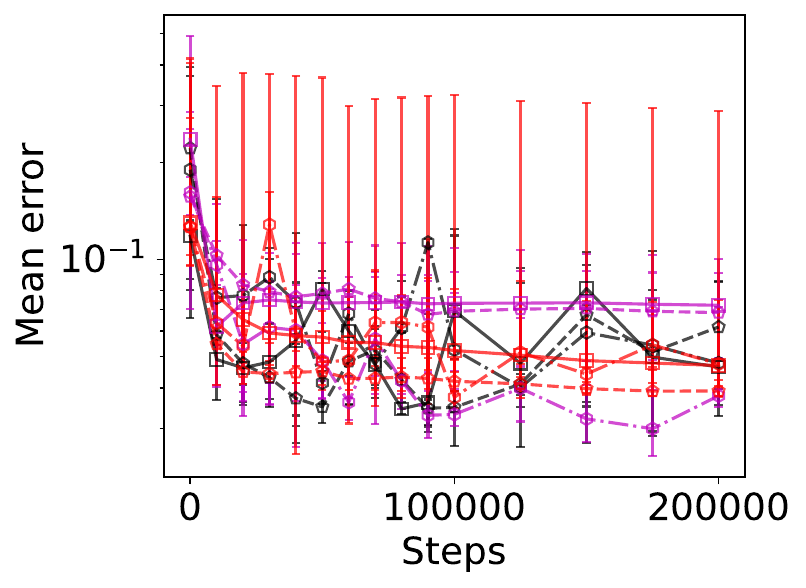}
\includegraphics[width=0.7\linewidth]{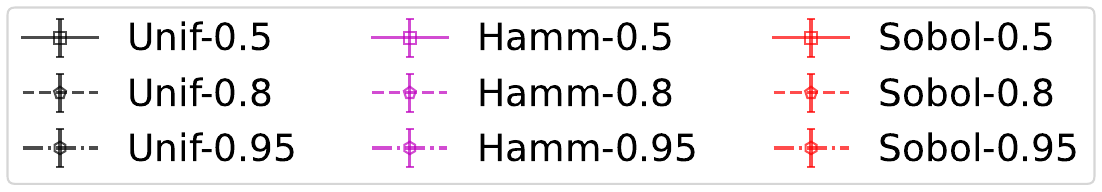}
\end{subfigure}
\begin{subfigure}{0.48\linewidth}
\centering
\caption{Adaptive methods}
\includegraphics[height=3.5cm]{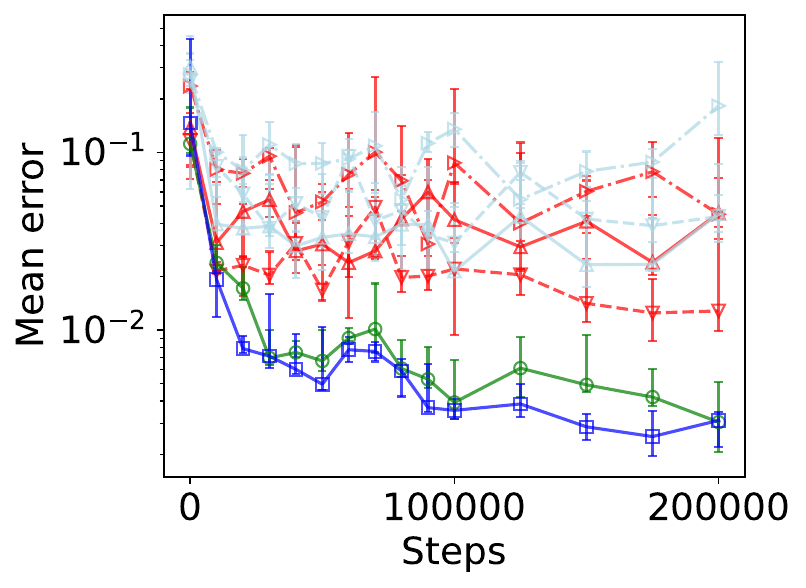}
\includegraphics[width=0.7\linewidth]{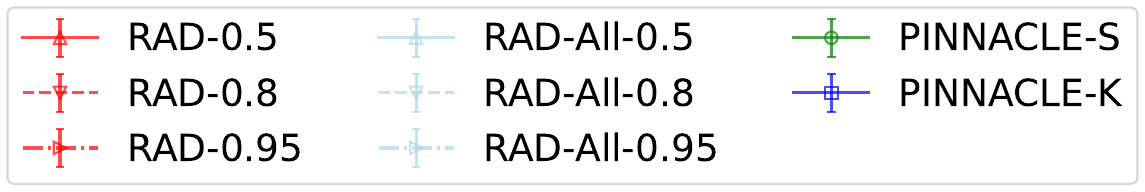}
\end{subfigure}
\caption{Further experimental results for 1D Burger's equation forward problem.}
\label{fig:appx-bur}
\end{figure}

We plot the predictive error for experiments ran in \cref{fig:forward-example} in \cref{fig:appx-fwd-err}. Additionally, we plot the 1D Advection and 1D Burger's equation experiments with additional point selection algorithms in \cref{fig:appx-adv,fig:appx-bur} respectively. Some observations can be made from these data.
First, adaptive point selection methods outperform non-adaptive point selection methods. This is to be expected since adaptive methods allow for training points to be adjusted according to the current solution landscape, and respond better to the current training dynamics.

Second, for all the algorithms which do not perform dynamic CL point budget allocation, choosing whatever fixed budget at the beginning makes little difference in the training outcome. This can be seen that in each experiment, choosing the point allocation at 0.5, 0.8 or 0.95 shows little difference in performance, and is unable to outperform \alg. Meanwhile, 
the variants of \alg are able to exploit the dynamic budgeting to prioritize the correct types of CL points in different stages of training.

Third, while \algk outperforms \algs, it is often not by a large margin. This is an interesting result, since it may suggest that selecting a diverse set of points may not be required for training a PINN well. 

Additionally, in \cref{appx:time-for}, we also measure the runtime of different algorithms to train a 1D Advection problem with $\beta=5$ until it reaches a mean loss of $10^{-2}$. We select this particular forward problem since it is known that Advection equation with a high $\beta$ is a difficult problem for PINN training \citep{krishnapriyanCharacterizingPossibleFailure2021}. We can see that while some algorithms are able to converge with higher number of training points, they will end up also incurring a larger runtime due to the higher number of training points. Meanwhile, \algk is able to achieve a low test error with much fewer points than pseudo-random sampling at a much quicker pace. This shows that while it is possible to train PINNs with more CL points, this often come with computation cost of having to compute the gradient of more training data. \alg provides a good way to reduce the number of training points needed for training, which reduces this training cost problem. 

\begin{figure}[t]
\centering
\begin{subfigure}{\textwidth}
\centering
\caption{1D Korteweg-de Vries Equation, 100k steps}
\label{fig:forward-example-kdv}
\resizebox{\textwidth}{!}{
\includegraphics[height=2.9cm]{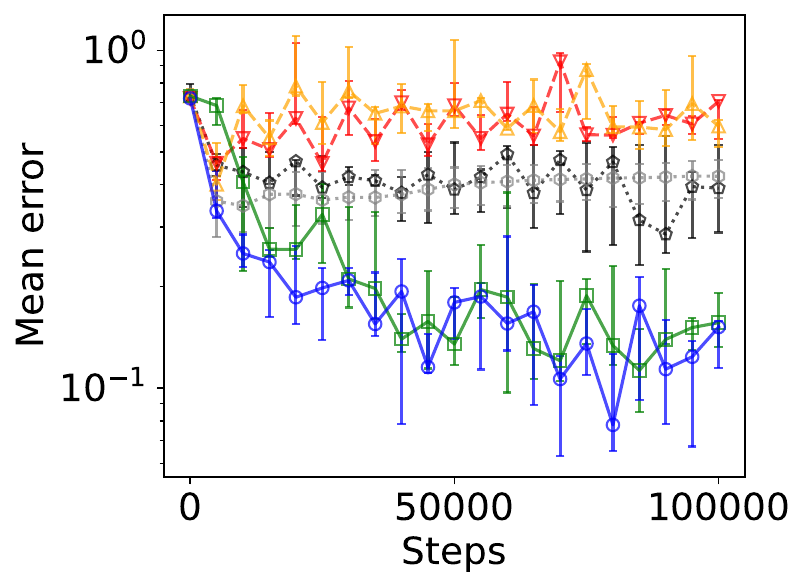}
\includegraphics[height=3.1cm]{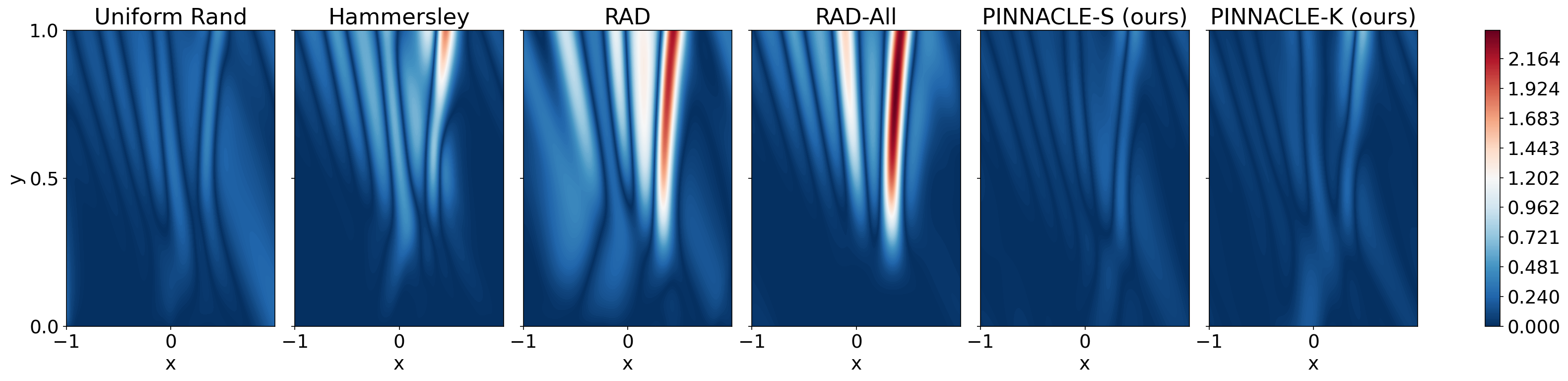}
}
\end{subfigure}
\\
\begin{subfigure}{\textwidth}
\centering
\caption{2D Shallow Water Equation, 100k steps}
\label{fig:forward-example-sw}
\resizebox{\textwidth}{!}{
\includegraphics[height=2.9cm]{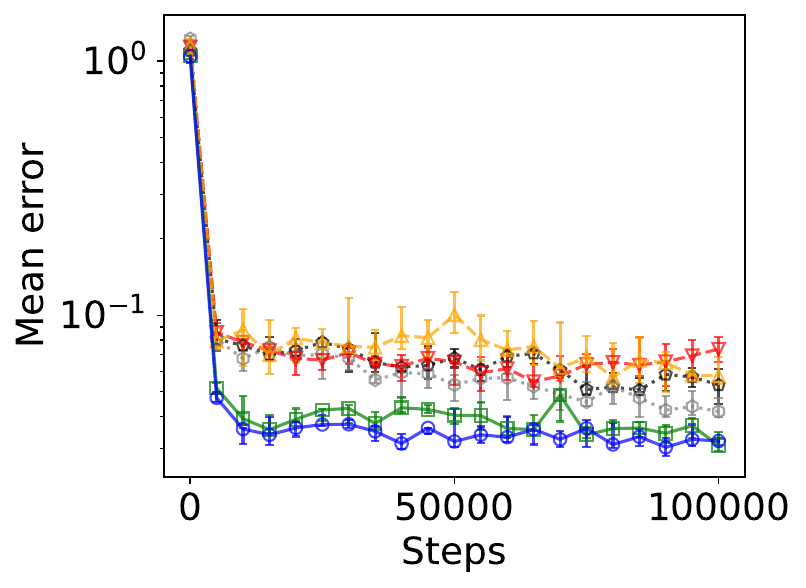}
\includegraphics[height=3.1cm]{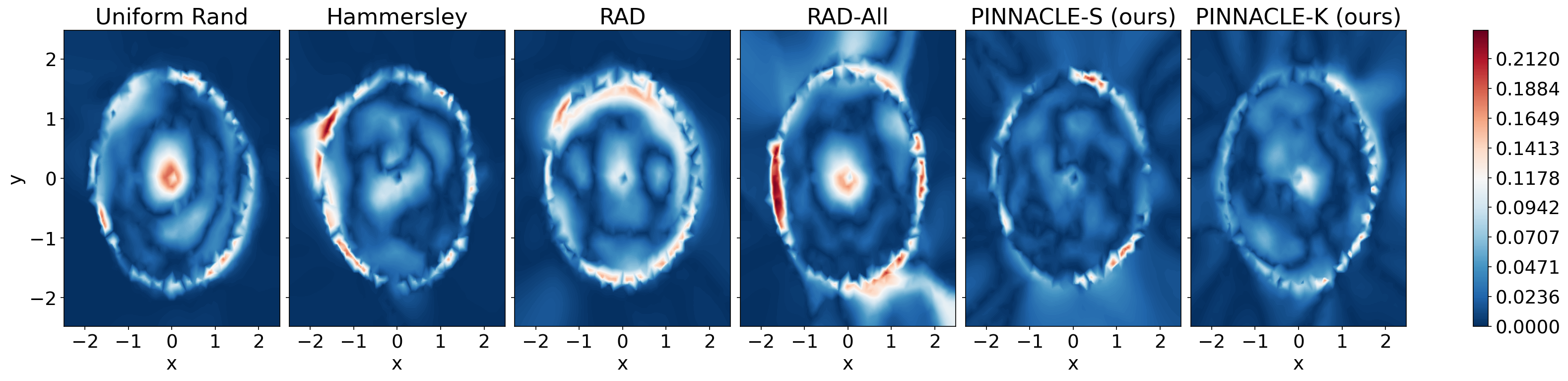}
}
\end{subfigure}
\includegraphics[width=0.7\linewidth]{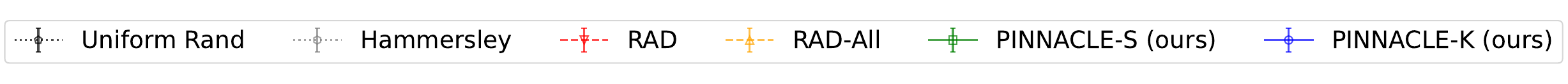}
\caption{Results from additional forward problem experiments. In each row, we show the plot of mean prediction error for each method, and the PINN output error from different point selection methods. For \cref{fig:forward-example-sw}, the right-hand plots show the error of $h(x, y, t)$ for $t=1$.}
\label{fig:forward-example-appx}
\end{figure}

In \cref{fig:diffhc}, we also ran the point selection for the forward 1D Diffusion equation where we apply a hard constraint to the PINN. This means the algorithm only requires selecting the PDE collocation point only. In the plots, we see that PINNACLE is able to still able to achieve better prediction loss than other methods, which shows that PINNACLE is still useful in collocation point selection even when the different training point types does not have to be considered.

\begin{figure}[t]
\centering
\resizebox{\textwidth}{!}{
\includegraphics[height=2.9cm]{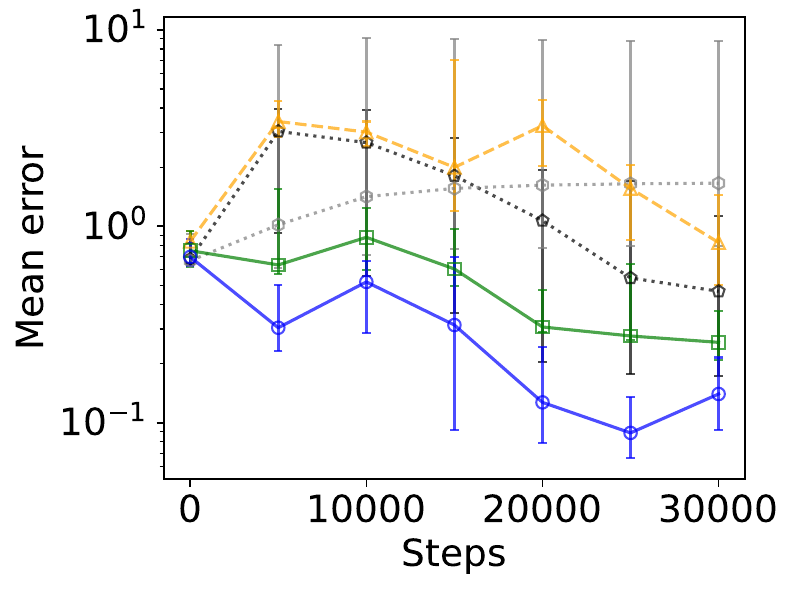}
\includegraphics[height=3.1cm]{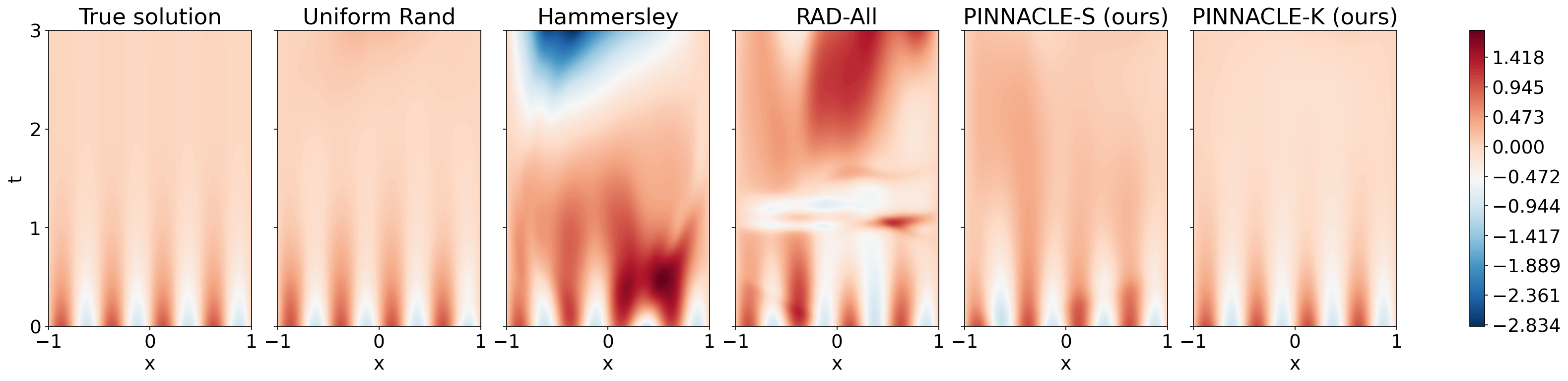}
}
\includegraphics[width=0.7\linewidth]{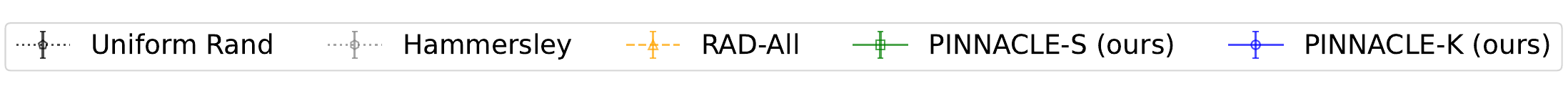}
\caption{Results from the 1D Diffusion equation with hard-constrained PINN. The left-hand plot shows the mean prediction error for each method, while the right-hand plot shows the true value of the function and the predicted values from the PINN in the best trial.}
\label{fig:diffhc}
\end{figure}

\subsection{Inverse Problems}
\label{appx:exp-inv}

In \cref{fig:ns-appx}, we present further experimental results for the Navier-Stokes equation experiments. We see that for the unseen components of the PDE solution (i.e., $p$, $\lambda_1$ and $\lambda_2$), \alg is able to get much lower error values than the other algorithms (with exception of \algk which is unable to recover $p$ as accurately). This shows that the point selection method is able to select points to obtain the most information. For the prediction alone, all algorithms are able to perform about the same since they are able to get the information from the experimental points and interpolate accordingly. For the unknown quantities, however, a simple point selection mechanism fails whereas \alg is able to perform better.

We also conduct additional inverse problem experiments on the Eikonal problem, as demonstrated in \cref{fig:eik-inv}. Unlike the Navier-Stokes problem whose aim is to learn the unknown constants of the PDE, the Eikonal inverse problem involves learning an unknown function $v(x, y, z)$ given the observed experimental readings $T(x, y, z)$. We see that \alg is able to better recover $v(x, y, z)$ compared to the other point selection methods. 

Furthermore, in \cref{appx:time-inv-fd}, we measure the running time for the Navier-Stokes inverse problem where we aim to compute the PDE constants for the Navier-Stokes equation, and measure the time required until the estimated PDE constants are within 5\% of the true value. We can see that using $10\times$ fewer points, \algk outperforms the pseudo-random benchmark by being able to achieve the correct answer at a faster rate. Furthermore, since \algk is able to converge with fewer steps, it means that in reality it would be much more efficient also since it requires less query for the actual solution, which saves experimental costs.

\begin{figure}[h]
\centering
\begin{subfigure}{0.24\textwidth}
\caption{Prediction}
\includegraphics[width=\linewidth]{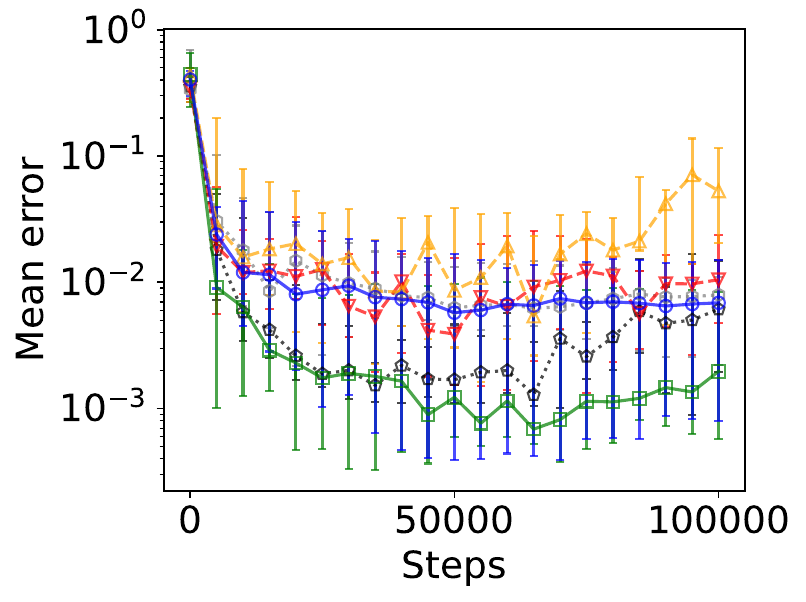}
\end{subfigure}
\begin{subfigure}{0.24\textwidth}
\caption{$p(x, y, t)$}
\includegraphics[width=\linewidth]{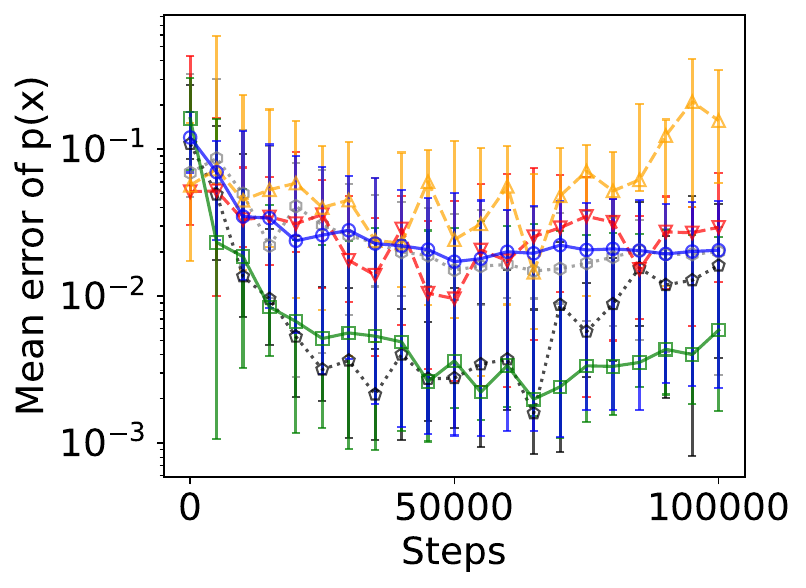}
\end{subfigure}
\begin{subfigure}{0.24\textwidth}
\caption{$\lambda_1$}
\includegraphics[width=\linewidth]{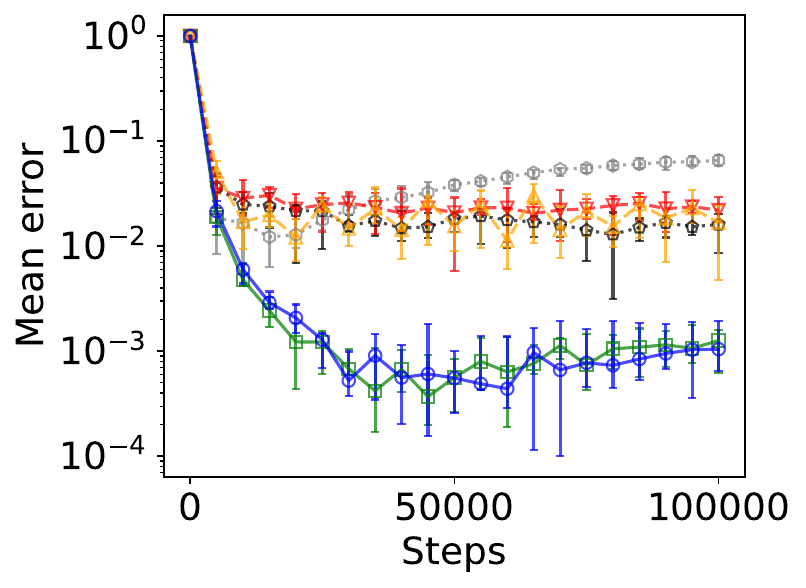}
\end{subfigure}
\begin{subfigure}{0.24\textwidth}
\caption{$\lambda_2$}
\includegraphics[width=\linewidth]{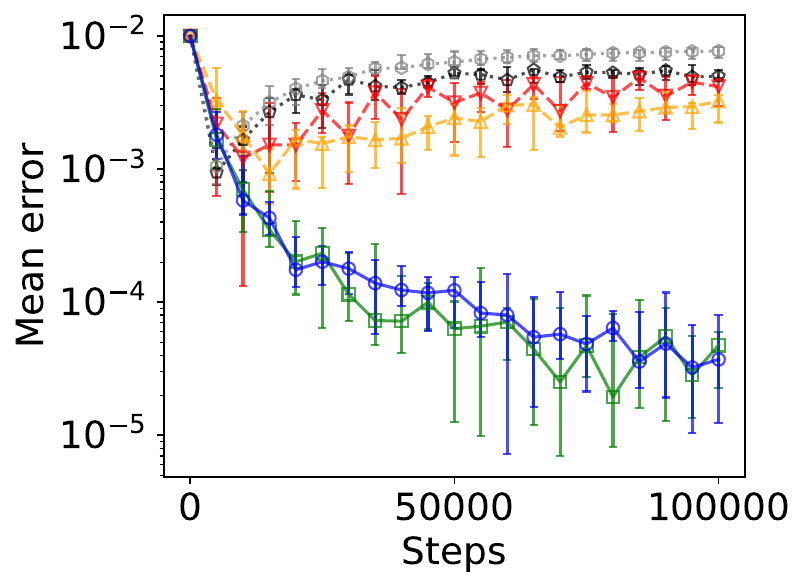}
\end{subfigure}
\includegraphics[width=0.8\linewidth]{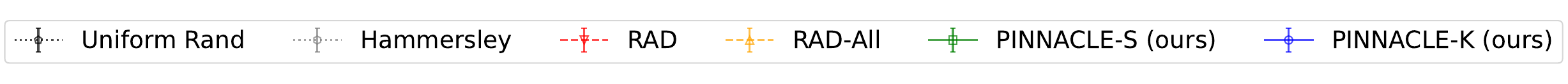}
\caption{Errors of predicted value from the Navier-Stokes equation experiments. The graphs show, from left to right, the errors of overall prediction, the pressure $p$, $\lambda_1$, and $\lambda_2$.}
\label{fig:ns-appx}
\end{figure}

\begin{figure}[t]
\centering
\resizebox{\textwidth}{!}{
\includegraphics[height=2.9cm]{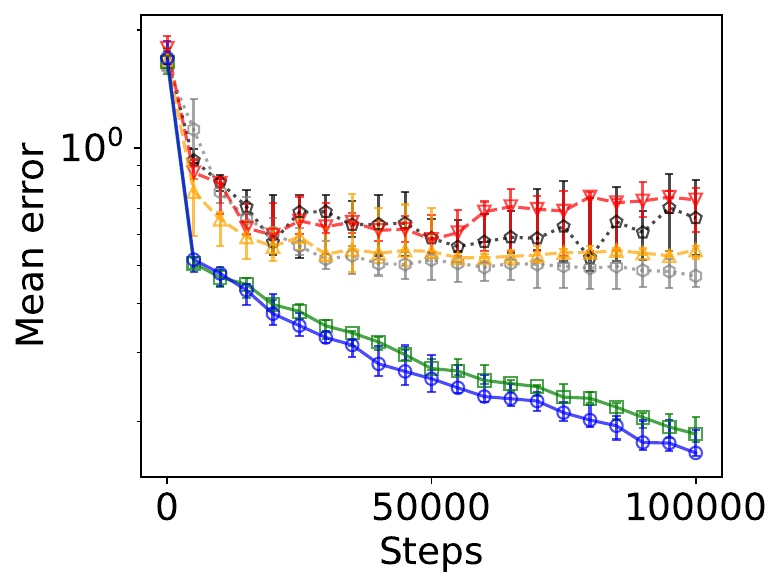}
\includegraphics[height=3.1cm]{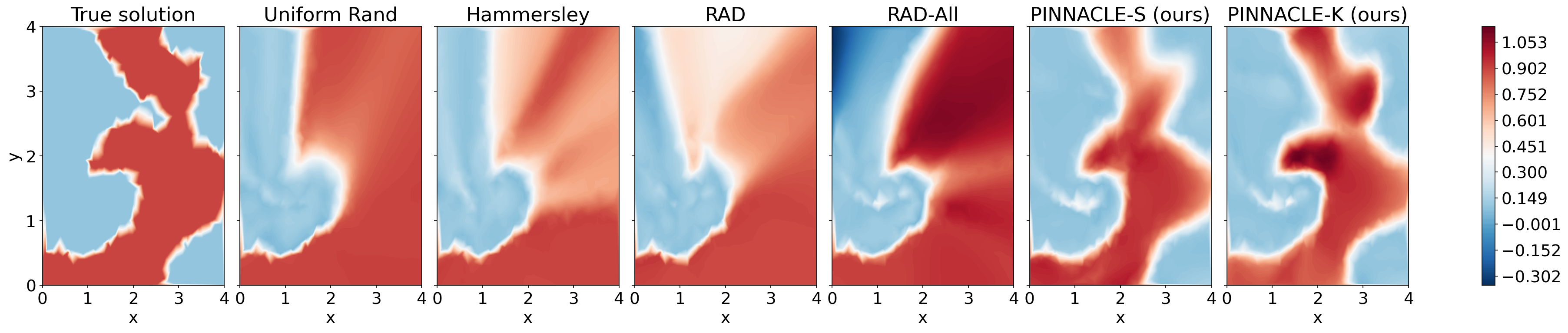}
}
\includegraphics[width=0.7\linewidth]{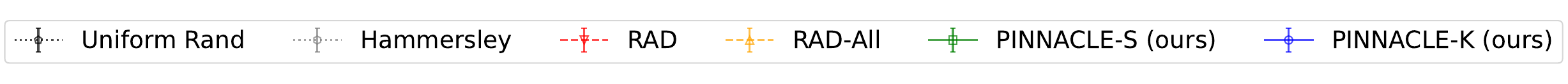}
\caption{Results from the 3D Eikonal equation inverse problem. The left-hand plot shows the mean prediction error for each method, while the right-hand plot shows the true value of $v(x, y, z)$ and the PINN output for learned $v(x, y, z)$ from different point selection methods when $z=1$.}
\label{fig:eik-inv}
\end{figure}

\begin{figure}[h]
\centering
\begin{subfigure}{0.4\linewidth}
\centering
\caption{1D Advection (Forward)}
\label{appx:time-for}
\includegraphics[width=0.9\linewidth]{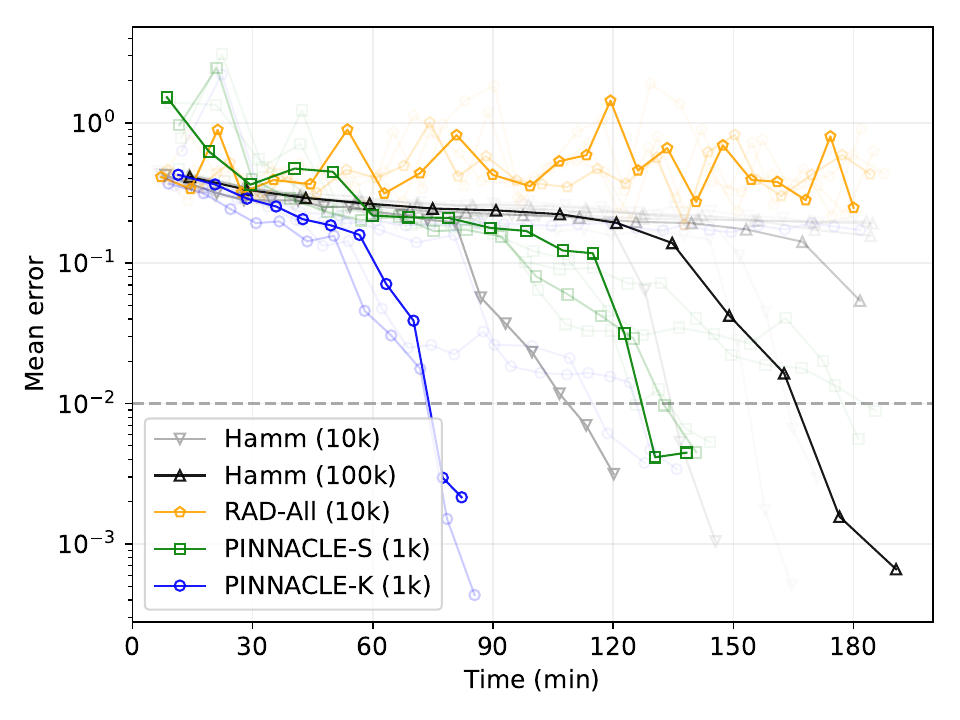}
\end{subfigure}
\begin{subfigure}{0.4\linewidth}
\centering
\caption{3D Eikonal (Inverse)}
\label{appx:time-inv-eik}
\includegraphics[width=0.9\linewidth]{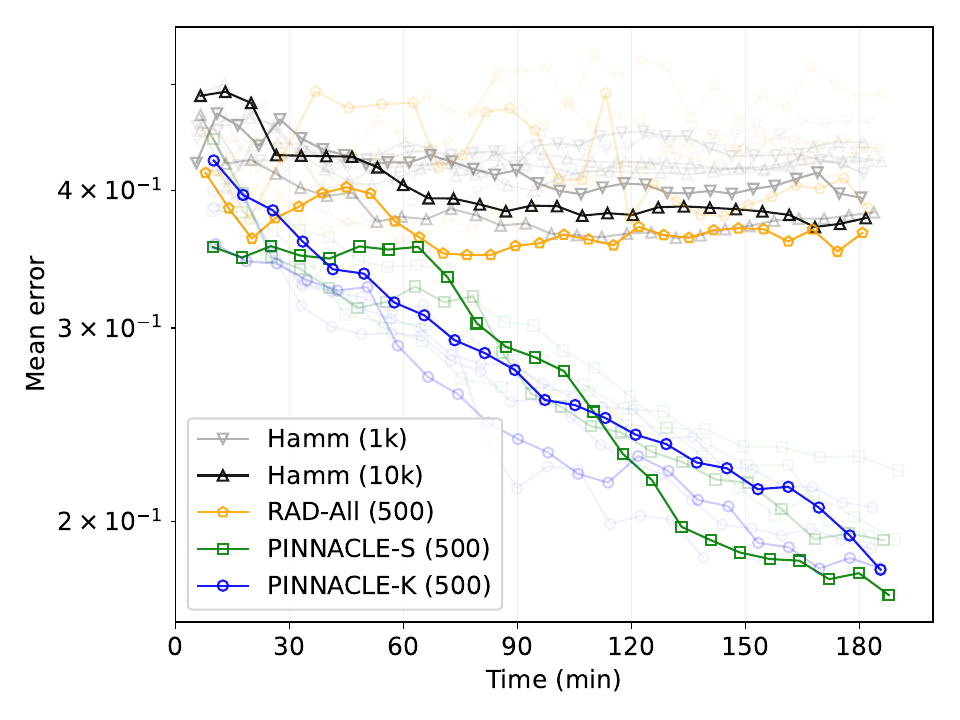}
\end{subfigure}

\centering
\begin{subfigure}{0.80\linewidth}
\centering
\caption{2D Navier-Stokes (Inverse)}
\label{appx:time-inv-fd}
\includegraphics[width=0.45\linewidth]{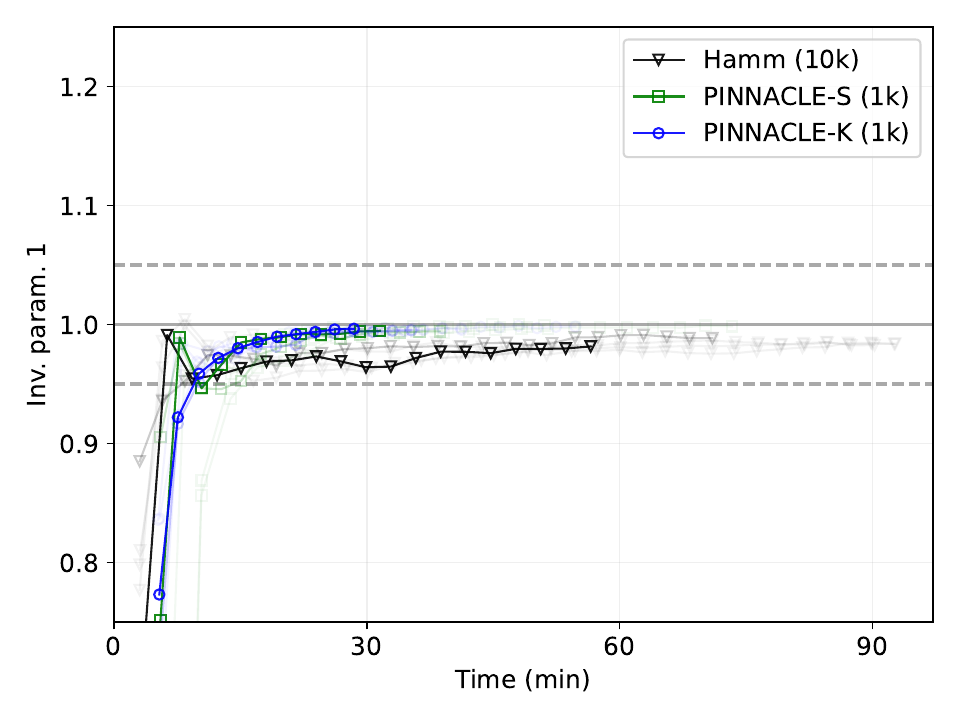}
\includegraphics[width=0.45\linewidth]{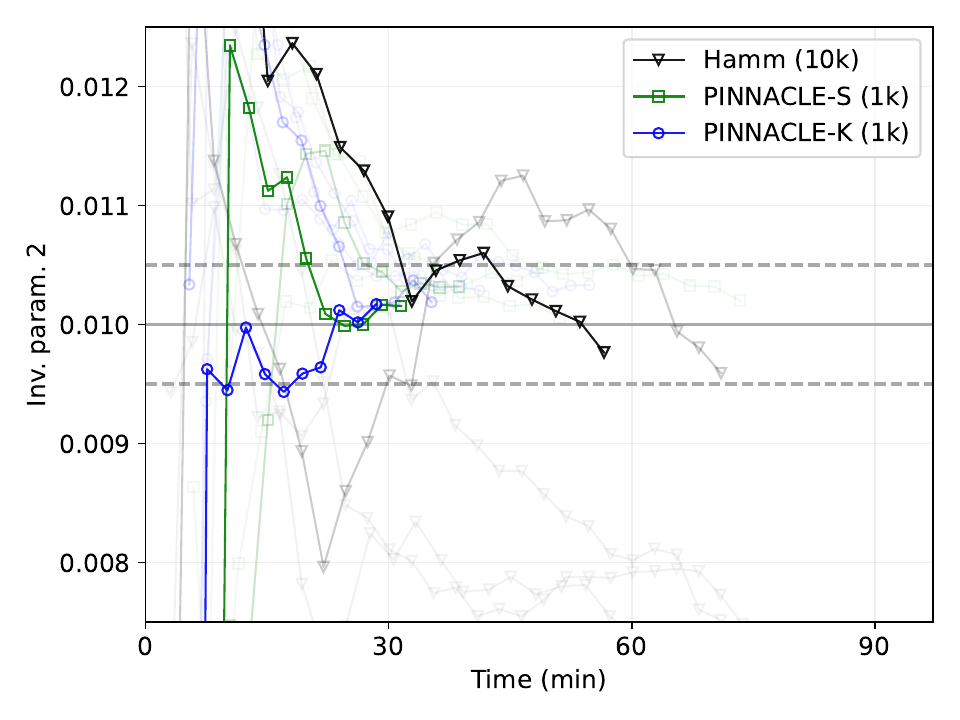}
\end{subfigure}
\caption{Experiments involving runtime of various point selection algorithms. The highlighted lines in each plot represent the example which is able to achieve the required accuracy and sustain the accuracy value for 5k additional steps within the shortest time. In \cref{appx:time-for,appx:time-inv-fd}, the dotted lines represent the threshold considered for acceptable accuracy -- for Advection equation problem, this marks the $10^{-2}$ accuracy point, while for the Navier-Stokes inverse problem, this marks the region where the estimated constant are within 5\% of the true value.}
\end{figure}

\subsection{Transfer Learning of PINNs}
\label{ssec:appx-ft}

In \cref{fig:appx-dyn-advft}, we compare the point selection dynamics for each point selection algorithms for the experiments from \cref{fig:ft}. We can see that for \alg, the CL points are selected in such a way that the existing solution structure still remains throughout training. This makes \alg more efficient for transfer learning since it does not have to relearn the solution again from scratch. This is in contrast with the other methods, where the stripes eventually gets unlearned which makes the training inefficient. 

In \cref{fig:ft-burger}, we present more results from the transfer learning experiments for the Burger's equation. As we can see, again, the variants of \alg are able to better reconstruct the solution with the alternate IC than other methods.

\begin{figure}[h]
\centering
\begin{subfigure}{0.48\linewidth}
\centering
\caption{\textsc{Hammersley}}
\includegraphics[width=0.95\textwidth]{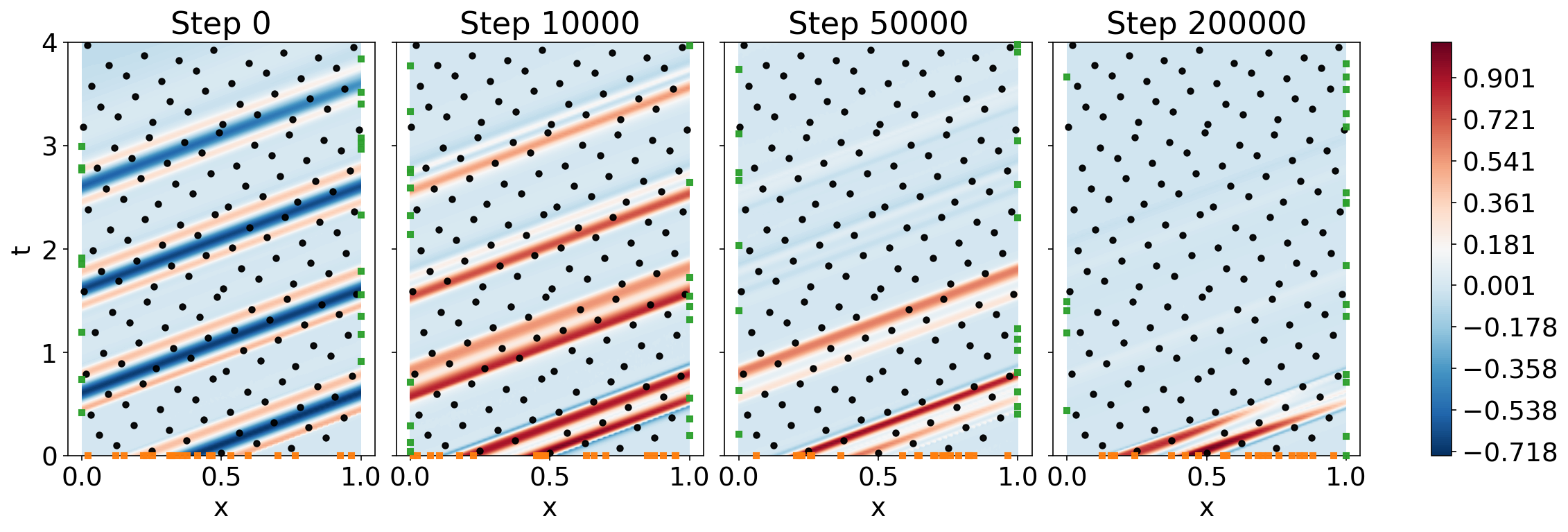}
\end{subfigure}
\begin{subfigure}{0.48\linewidth}
\centering
\caption{\textsc{RAD-All}}
\includegraphics[width=0.95\textwidth]{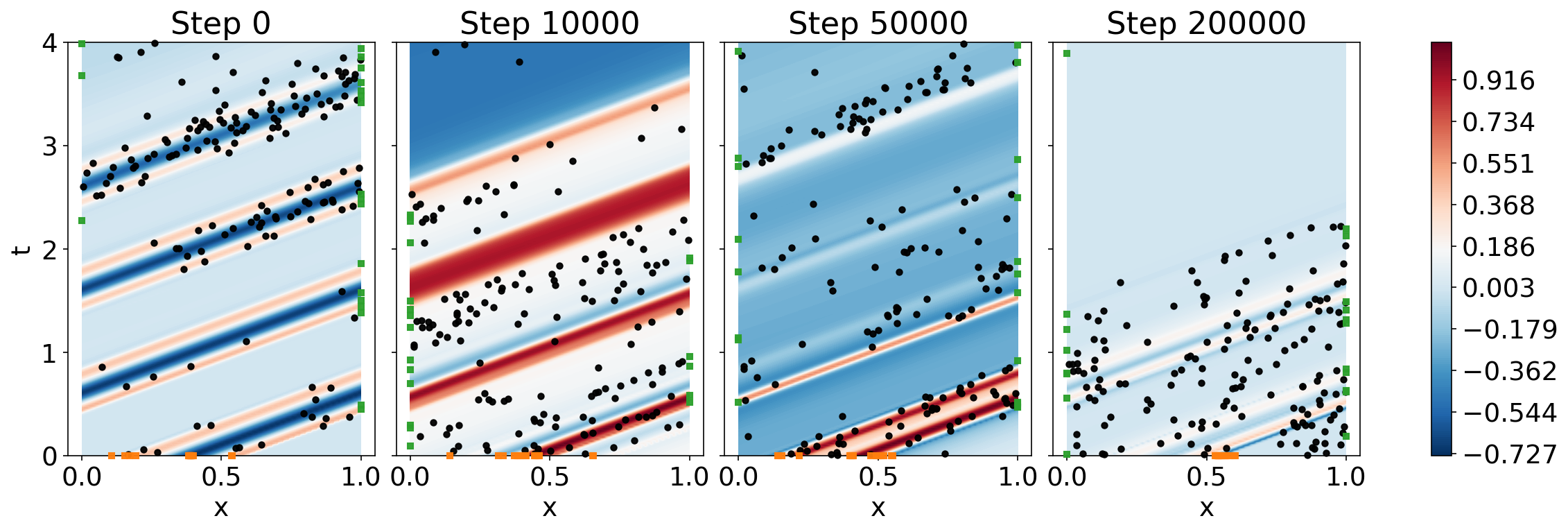}
\end{subfigure}
\\
\begin{subfigure}{0.48\linewidth}
\centering
\caption{\algs}
\includegraphics[width=0.95\textwidth]{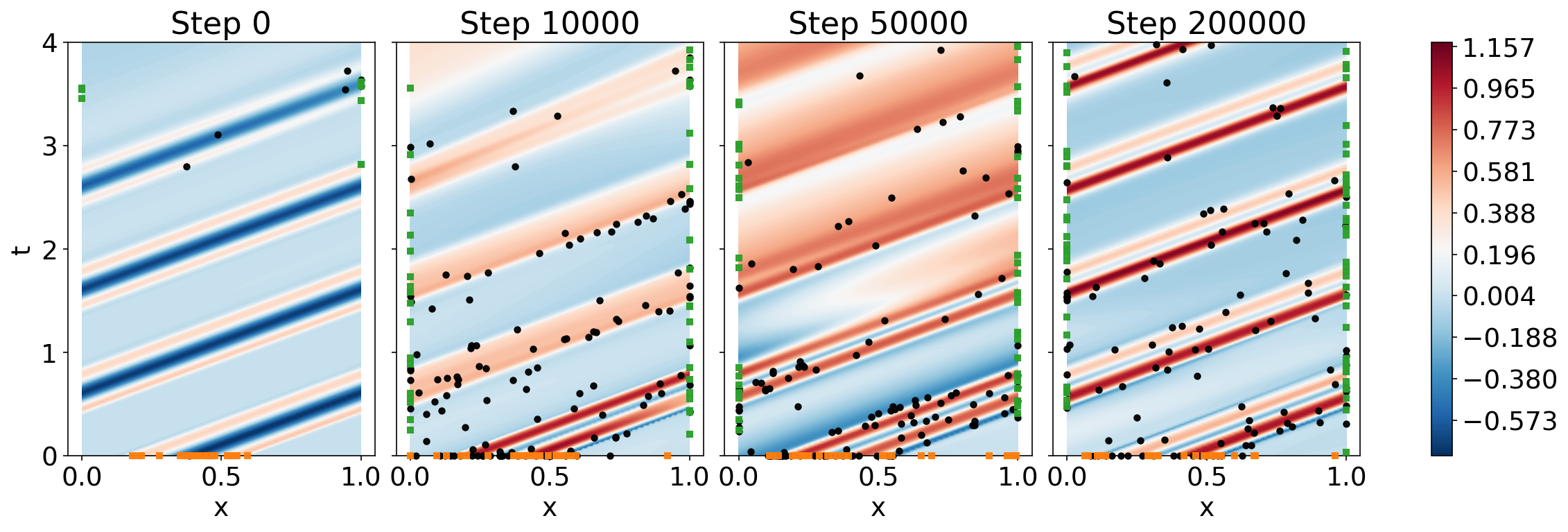}
\end{subfigure}
\begin{subfigure}{0.48\linewidth}
\centering
\caption{\algk}
\includegraphics[width=0.95\textwidth]{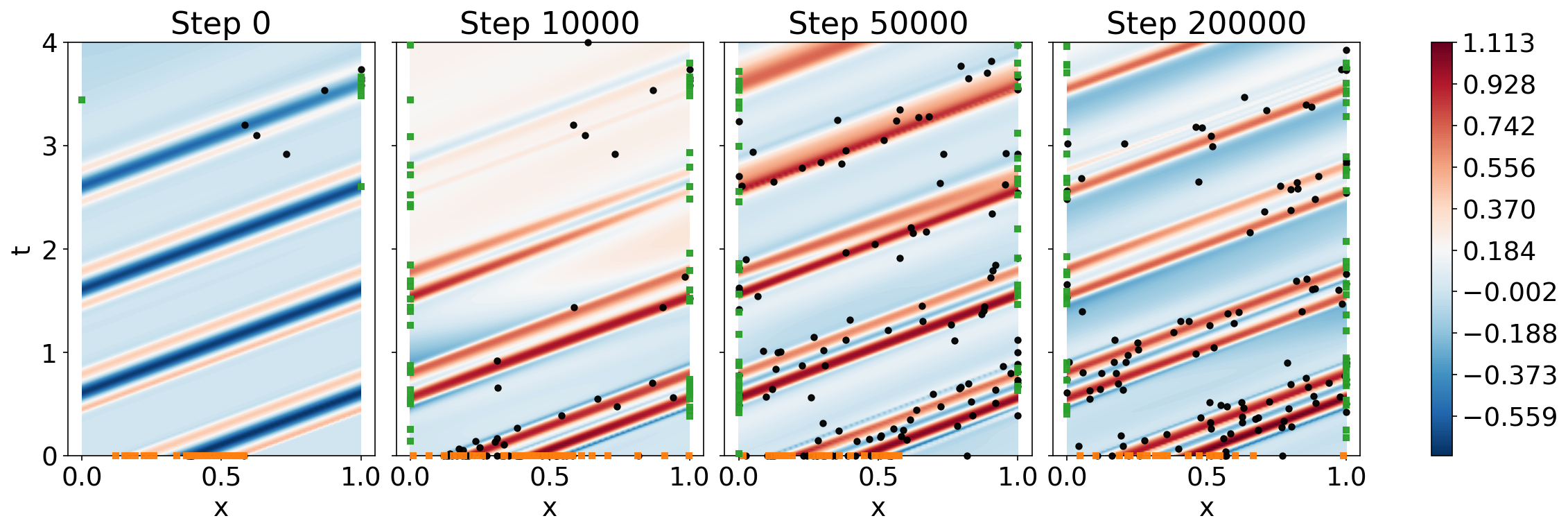}
\end{subfigure}
\includegraphics[width=.4\linewidth]{fig/labels_trainpts.pdf}
\caption{Training point selection dynamics for \alg in the 1D Advection equation transfer learning problem experiments.}
\label{fig:appx-dyn-advft}
\end{figure}

\begin{figure}[h]
\centering
\resizebox{\textwidth}{!}{
\includegraphics[height=2.9cm]{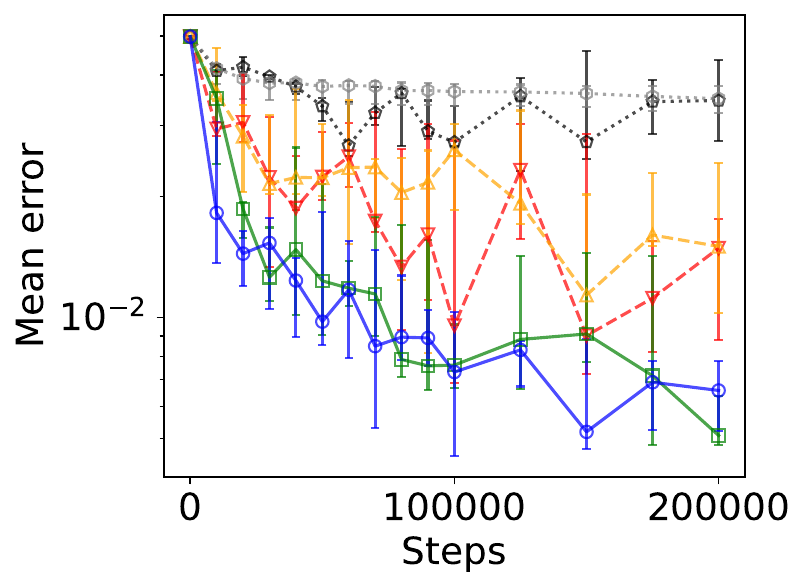}
\includegraphics[height=3.1cm]{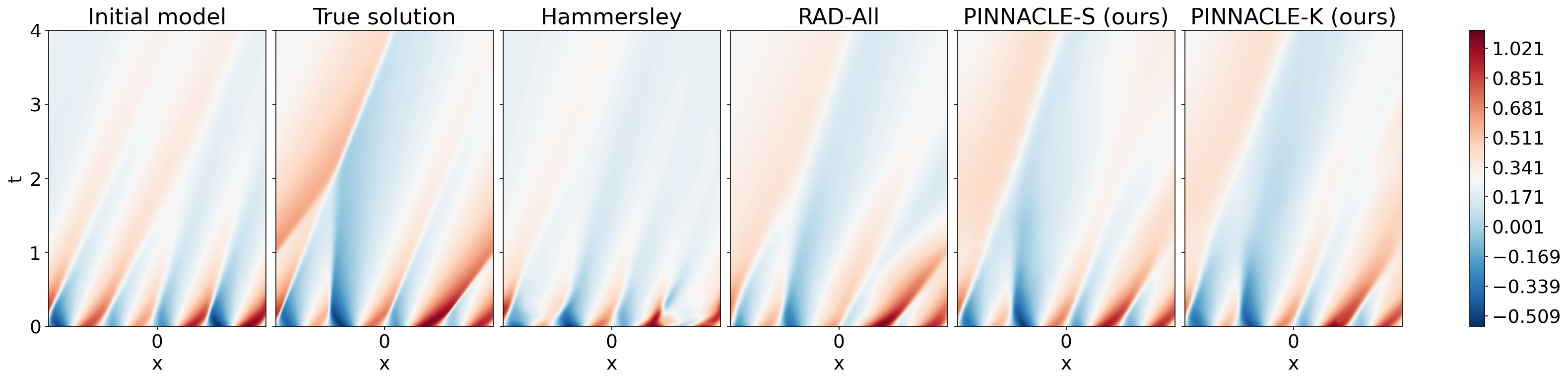}
}
\includegraphics[width=0.7\linewidth]{fig/results/conv-40/labels_flat.pdf}
\caption{Predictive error for the 1D Burger's equation transfer learning problem with perturbed IC.}
\label{fig:ft-burger}
\end{figure}

\end{document}